\def\eqref#1{equation~\ref{#1}}
\def\1{\bm{1}}
\DeclareMathAlphabet{\mathsfit}{\encodingdefault}{\sfdefault}{m}{sl}
\SetMathAlphabet{\mathsfit}{bold}{\encodingdefault}{\sfdefault}{bx}{n}
\theoremstyle{plain}
\newtheorem{theorem}{Theorem}[section]
\newtheorem{lemma}[theorem]{Lemma}
\theoremstyle{definition}
\newtheorem{assumption}[theorem]{Assumption}
\theoremstyle{remark}
\newtcolorbox{promptbox}[1][]{
    colback=gray!10,        %
    colframe=gray!50,       %
    coltitle=black,         %
    boxrule=0.5mm,          %
    sharp corners,          %
    fonttitle=\bfseries,    %
    width=\textwidth,       %
    before skip=10pt,       %
    after skip=10pt,        %
    #1                       %
}
\newtcolorbox{takeawaybox}[1][]{
    colback=blue!10,        %
    colframe=blue!50,       %
    coltitle=black,         %
    boxrule=0.5mm,          %
    sharp corners,          %
    fonttitle=\bfseries,    %
    width=\textwidth,       %
    before skip=10pt,       %
    after skip=10pt,        %
    #1                       %
}
\title{Can In-context Learning Really\\ Generalize to Out-of-distribution Tasks?}
\author{%
  Qixun Wang\textsuperscript{1} \qquad Yifei Wang \textsuperscript{2} \qquad Yisen Wang \textsuperscript{1,3}\thanks{Corresponding author: Yisen Wang (yisen.wang@pku.edu.cn).} \qquad Xianghua Ying \textsuperscript{1} \thanks{Corresponding author: Xianghua Ying (xhying@pku.edu.cn).} \\
\textsuperscript{1} National Key Laboratory of General Artificial Intelligence,\\School of Intelligence Science and Technology, Peking University\\
\textsuperscript{2} CSAIL, Massachusetts Institute of Technology\\
\textsuperscript{3} Institute for Artificial Intelligence, Peking University\\
}
\begin{document}

\maketitle

\begin{abstract}
In this work, we explore the mechanism of in-context learning (ICL) on out-of-distribution (OOD) tasks that were not encountered during training. To achieve this, we conduct synthetic experiments where the objective is to learn OOD mathematical functions through ICL using a GPT-2 model. We reveal that Transformers may struggle to learn OOD task functions through ICL. Specifically, ICL performance resembles implementing a function within the pretraining hypothesis space and optimizing it with gradient descent based on the in-context examples. Additionally, we investigate ICL's well-documented ability to learn unseen abstract labels in context. We demonstrate that such ability only manifests in the scenarios without distributional shifts and, therefore, may not serve as evidence of new-task-learning ability. Furthermore, we assess ICL's performance on OOD tasks when the model is pretrained on multiple tasks. Both empirical and theoretical analyses demonstrate the existence of the \textbf{low-test-error preference} of ICL, where it tends to implement the pretraining function that yields low test error in the testing context. We validate this through numerical experiments. This new theoretical result, combined with our empirical findings, elucidates the mechanism of ICL in addressing OOD tasks.
\end{abstract}

\section{Introduction}
Pretrained large language models (LLMs) can perform in-context learning (ICL) \citep{brown2020language}, where providing a few examples of input-output pairs and a query example improves the model's ability to generate the desired output, compared to zero-shot predictions. Understanding ICL’s ability to learn out-of-distribution (OOD) input-output relationships, which are unseen during training, has recently gained significant attention.

Recent studies have demonstrated that ICL can tackle seemingly new tasks. For instance, \citet{garg2022can, raventos2024pretraining, zhang2023trained, akyurek2022learning} found that ICL can learn new linear regression weights after pretraining on a large set of weight vectors. Moreover, \citet{pan2023context, kossen2024context, vacareanu2024words} revealed that real-world LLMs like Llama-2 \citep{touvron2023llama} and GPT-4 \citep{achiam2023gpt} are capable of solving artificially constructed tasks likely unseen in their pretraining data, such as a classification task with abstract labels.

However, another line of research \citep{yadlowsky2023pretraining, ahuja2023closer} has raised a contrasting view, showing that ICL struggles to generalize to OOD tasks where there are distributional shifts in either the input distribution $P(X)$ or the input-label mapping $P(Y|X)$. These findings raise several important questions:
\begin{quote}
\emph{Can ICL really learn new input-output mappings from the context? What underlying mechanism of ICL determines its performance on OOD tasks?}
\end{quote}

In this work, we aim to consolidate previous findings by addressing these questions. First, we empirically show that when pretrained on a specific function class, the OOD performance of ICL approaches that of a model from the same function class optimized via gradient descent. This suggests that ICL tends to implement functions encountered during pretraining, which could explain its failure on OOD tasks that significantly deviate from the training distribution. Furthermore, we reproduce the widely observed phenomenon that ICL can perform classification with abstract labels. We find that solving such tasks requires retrieving similar labels from the context, a capability that can be acquired through pretraining on analogous tasks. This implies that success in such tasks of ICL may not indicate an inherent ability to learn new tasks. Finally, we explore scenarios in which the model is pretrained on multiple tasks, empirically uncovering the algorithm selection mechanism for OOD tasks. We also provide a comprehensive theoretical framework for understanding the algorithm-selection mechanism for ICL. Our contributions are summarized as follows:
\begin{enumerate}
    \item We empirically show that ICL tends to implement the pretraining function based on the downstream task context, highlighting its limitation in solving OOD tasks (Section \ref{subsec:main}).
    \item We further investigate ICL's ability to perform classification with unseen abstract labels. Although this appears to be evidence of ICL learning OOD tasks, we find that such tasks can be solved by retrieving similar examples from the context. This retrieval ability can arise from training on tasks with more diverse abstract labels (Section \ref{subsec:abstract-label-learning}) and only emerges when the testing function is in distribution (Section \ref{subsec:abstract-label-learning-iid}). Additionally, we find that pretrained Llama-3-8B \citep{dubey2024llama} and Llama-2-7B fails to learn OOD functions through ICL in a synthetic word classification task (Section \ref{sec:llama-ood-task}), further confirming ICL’s limitations in OOD scenarios.
    \item Finally, we explore the ICL's behavior when trained on multiple tasks, and observe that the algorithm selection mechanism broadly occurs in OOD scenarios. We theoretically prove the \textbf{low-test-error} preference of ICL prediction, i.e., the ICL prediction prefers to implement the pretraining function with lower test error (Section \ref{sec:theo-algo-selection}). We also validate our theory with numerical experiments (\ref{sec:exp-algo-selection}).
\end{enumerate}

\section{Existing Theoretical Predictions About ICL's OOD Behavior}
\label{sec:theoretical-predictions-of-ICL-OOD}
Previous literature have provided some theoretical insights into the behavior of ICL. Here we briefly review some of the representative findings. 1) \textbf{ICL makes Bayesian predictions.} \citep{xie2021explanation, wies2024learnability, zhang2023and} theoretically demonstrated that ICL behaves like a Bayesian-optimal predictor, i.e., it will infer a task concept based on the given test context, and then predict using the inferred task and the input prompt. However, these Bayesian frameworks don't depict the concrete process of how the task concept is inferred, especially for OOD scenarios. 2) \textbf{ICL implements gradient descents.} \cite{von2023transformers, zheng2024mesa} construct specific Transformer weights on which the ICL prediction is equivalent to a linear regression predictor optimized by gradient descent. 3) \textbf{ICL implements algorithm selection.}  \cite{bai2024transformers,wang2024context} demonstrate the existence of Transformers that can realize algorithm selection between linear classification and regression by constructing specific Transformer weights. 4) \textbf{ICL performs retrieval.} \cite{li2024linonlinear} proves that a trained nonlinear Transformer will concentrate its attention of the query on the in-context examples possessing similar features to that of the query.

These theoretical findings may appear disparate, as they describe different aspects of ICL under varying assumptions and settings. Furthermore, most of them remain largely unexplored empirically, particularly on large-scale nonlinear Transformers. In the following sections, we aim to provide a unifying perspective on ICL by conducting experiments with deep nonlinear Transformers on real-world OOD tasks.

\section{Exploring the ICL Performance on OOD Tasks}
\label{sec:icl-makes-id-predictions}
\subsection{GPT-2 Implements Functions Classes Seen During ICL Pretraining}
\label{subsec:main}
\textbf{Evaluating GPT-2 on unseen mathematical function classes.} To investigate the ICL performance on new tasks that are unseen during training, following \citet{garg2022can}, we train a GPT-2 \citep{radford2019language} from scratch on some simple functions and evaluate it on functions different from the training ones. Denote the Transformer model parameterized by $\theta$ as $M_\theta$. The pretraining objective is: 
\begin{equation}
    \min_\theta \frac{1}{T} \sum_{i=1}^{T}\mathbb{E}_{f\sim \mathcal{F}}[\|M_\theta(\mathcal{S}_{i}\oplus \boldsymbol{x}_{i+1})- f(\boldsymbol{x}_{i+1}) \|^2_2 ],
\end{equation}
where $\mathcal{S}_i=[\boldsymbol{x}_1\oplus y_1 \oplus \boldsymbol{x}_2 \oplus y_2 \oplus ... \oplus \boldsymbol{x}_i \oplus y_i]\in \mathbb{R}^{d \times 2i}$ is the context of length $i$, $\oplus$ denotes concatenation. $\boldsymbol{x}_i\in \mathbb{R}^d$ are sampled from a standard Gaussian distribution $\mathcal{N}(0,1)$ with dimension $d=20$. Let $y_i=f(\boldsymbol{x}_i)$ represent the labels, with $\mathcal{F}$ denoting the hypothesis class to which $f$ belongs. We train three separate GPT-2 models on three different function classes $\mathcal{F}$: linear regression (LR), quadratic regression (QR, element-wise square followed by linear regression), and a 2-layer ReLU network (ReLU NN, detailed descriptions are in Appendix \ref{app:exp-detail-1}). We then evaluate their ICL performance on these three tasks. Note that even when the testing and training functions are i.i.d. sampled from the same task, the specific instances of the testing functions remain unseen during training. For comparison, we also train models within the corresponding $\mathcal{F}$ with gradient descent (GD) using the testing in-context examples (details in Appendix \ref{app:exp-detail-1}).

\textbf{Observations.} We plot the test error on the three tasks in Figure \ref{fig:main} and observe that: 1) (an existing finding in \cite{garg2022can}) when evaluated on the same task $\mathcal{F}$ as pretraining, ICL can reach near-zero test error. 2) (our novel finding) when evaluated on a new task, ICL performs similarly to the corresponding model of the pretraining function class optimized by GD given enough in-context examples. This indicates that the ICL prediction implements the training function classes. 3) (our novel finding) The models trained on linear and quadratic regression exhibit a double descent error curve \citep{nakkiran2019more}, characterized by a high error when given exact $d$ examples and evaluated on a new task. This further demonstrates that ICL implements the ID predictions, as the double descent curve is a distinctive phenomenon unique to linear regression models. 

\begin{figure}[htbp]
\vspace{-10pt}
    \centering
    \begin{subfigure}{0.26\textwidth}
        \centering
        \includegraphics[width=\textwidth]{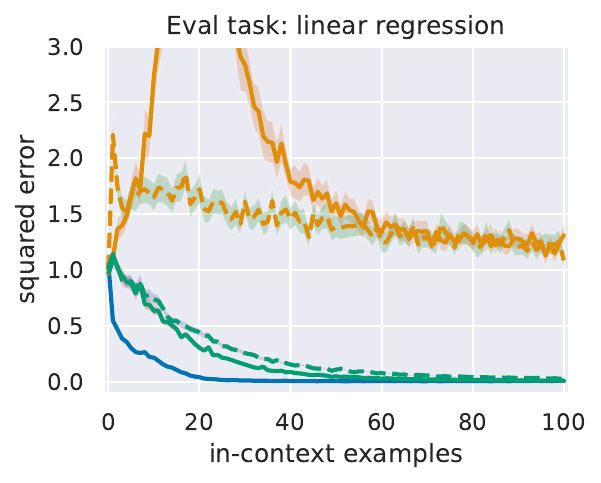}
        \vspace{-12pt}
        \label{fig:main-sub1}
        \vspace{-3pt}
        \caption{Evaluated on LR}
    \end{subfigure}
    \begin{subfigure}{0.26\textwidth}
        \centering
        \includegraphics[width=\textwidth]{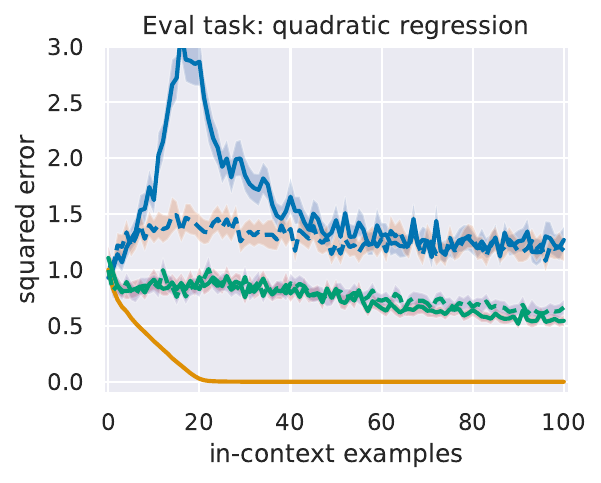}
        \vspace{-12pt}
        \label{fig:main-sub2}
        \vspace{-3pt}
        \caption{Evaluated on QR}
    \end{subfigure}
    \begin{subfigure}{0.44\textwidth}
        \centering
        \includegraphics[width=\textwidth]{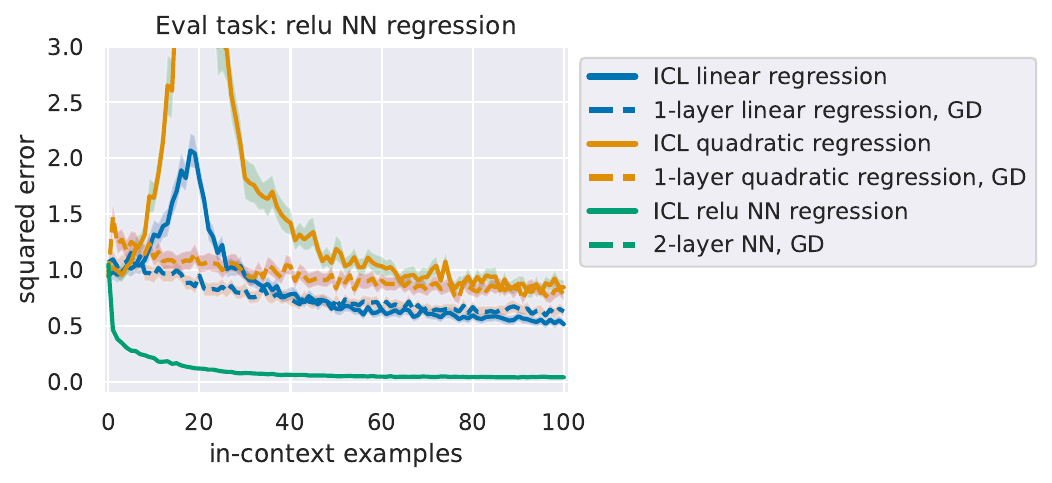}
        \vspace{-12pt}
        \label{fig:main-sub3}
        \vspace{-3pt}
        \caption{Evaluated ReLU NN}
    \end{subfigure}
    \caption{The ICL test error of Transformers trained on different function classes (solid lines) and the performance of the models from the corresponding pretraining functions classes trained by gradient descent (GD) using the in-context examples (dashed lines). Y-axis: test square error. X-axis: context length. In all evaluation tasks, we observe that as the test context length increases, the ICL performance of the Transformer pretrained on a particular function class closely approaches that of the model from this function class trained by GD.}
    \label{fig:main}
\end{figure}

\subsection{Real-world LLMs Tend to Make In-distribution Predictions during ICL}
\label{sec:realLLM-reverse}
In this section, we will demonstrate how the tendency of ICL to perform ID predictions manifests in real-world LLMs. To this end, we designed a task involving predicting labels with letters reversed. In some basic tasks like outputting antonyms or translating from English to French, all the letters of the original labels are reversed (e.g., ``positive"$\rightarrow$``evitisop"). We found that in this task, a pretrained Llama-3-8B \citep{dubey2024llama} tend to output the reversed result of the query word rather than first predicting the correct label and then reversing it. Although both reversal tasks are uncommon, directly outputting the reversed version of a word is relatively more common than first reasoning and then outputting the reversed prediction. Therefore, this result reflects to some extent that LLMs, when performing ICL, are more inclined to implement ID tasks. See Appendix \ref{app:realLLM-reverse} for more details.

\begin{figure}[htbp]
    \centering
    \begin{subfigure}{0.24\textwidth}
        \centering
        \includegraphics[width=\textwidth]{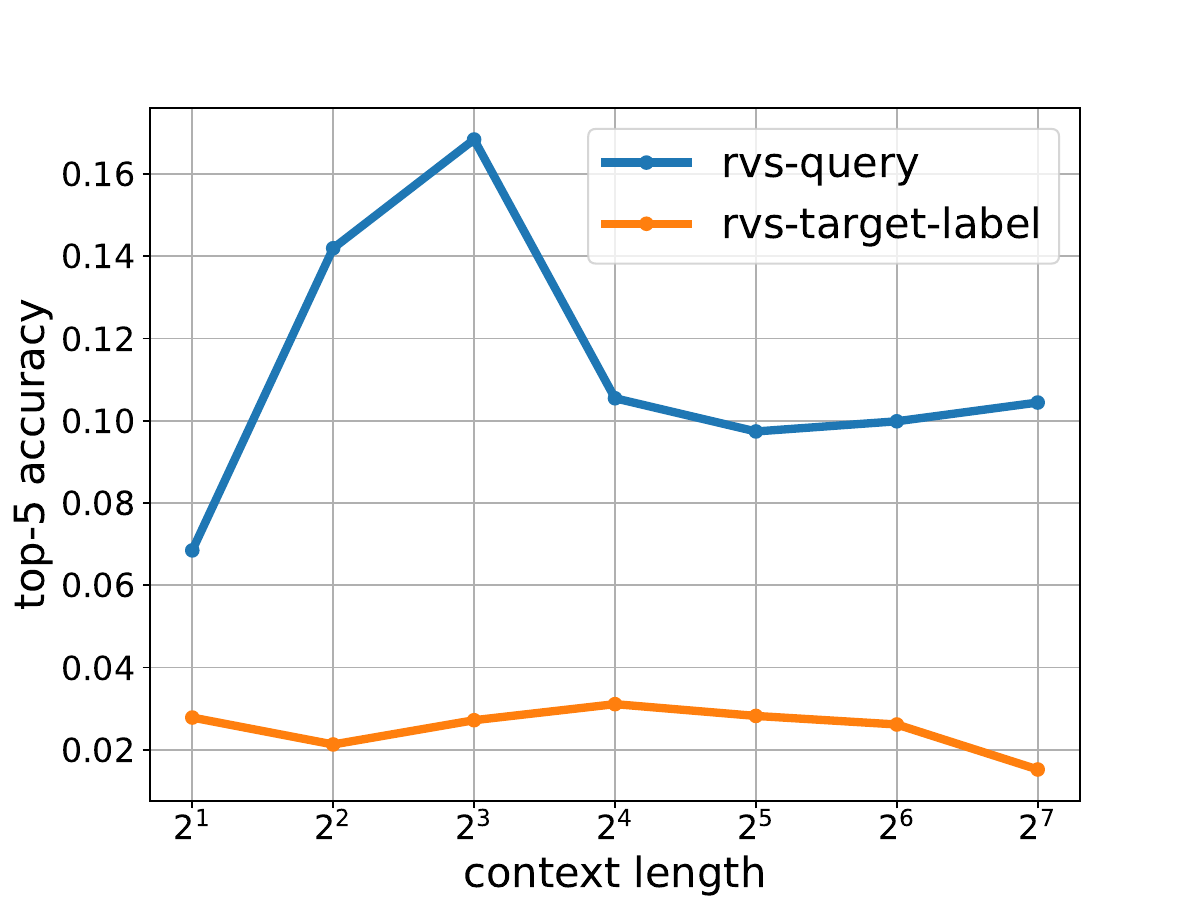}
        \vspace{-6pt}
        \label{fig:real-LLM-reverse1}
        \caption{Antonym}
    \end{subfigure}
    \begin{subfigure}{0.24\textwidth}
        \centering
        \includegraphics[width=\textwidth]{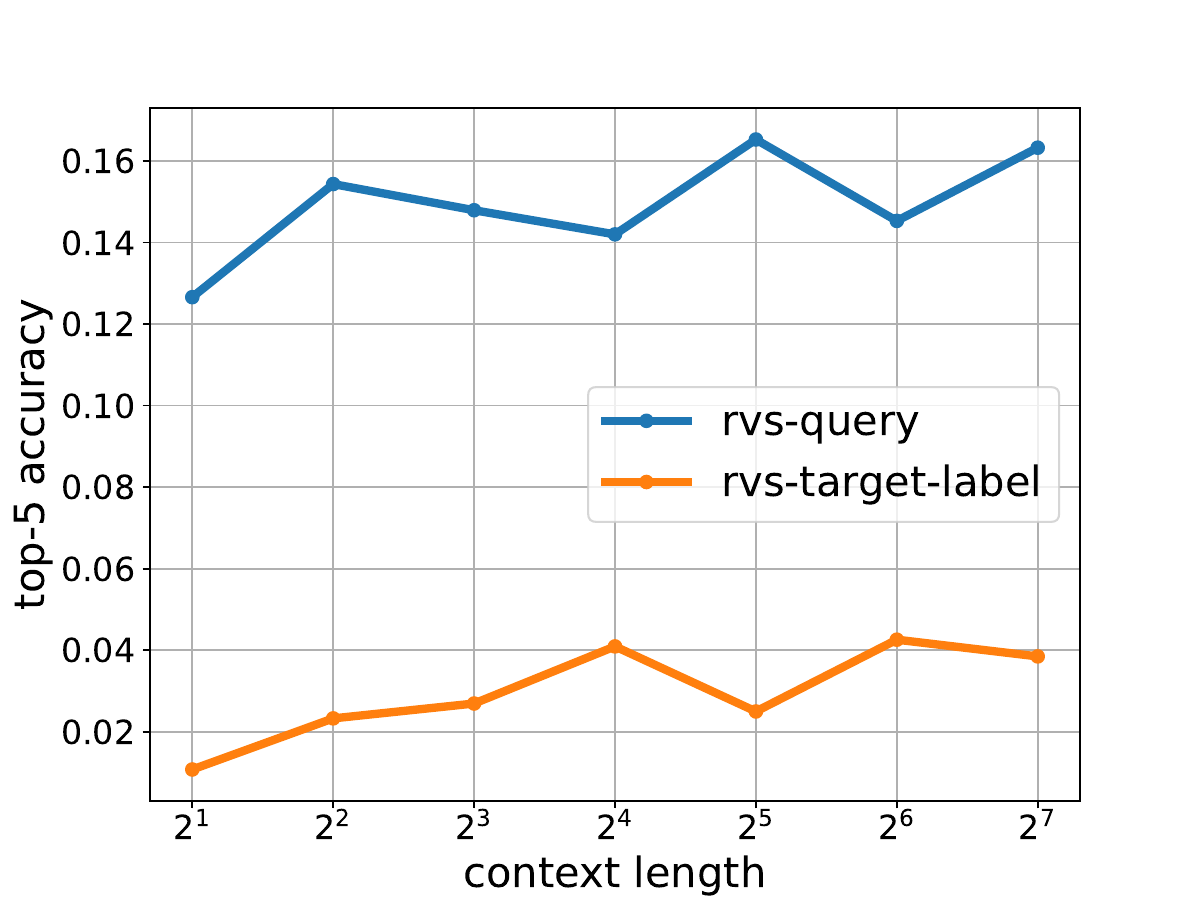}
        \vspace{-6pt}
        \label{fig:real-LLM-reverse2}
        \caption{Country-capital}
    \end{subfigure}
    \begin{subfigure}{0.24\textwidth}
        \centering
        \includegraphics[width=\textwidth]{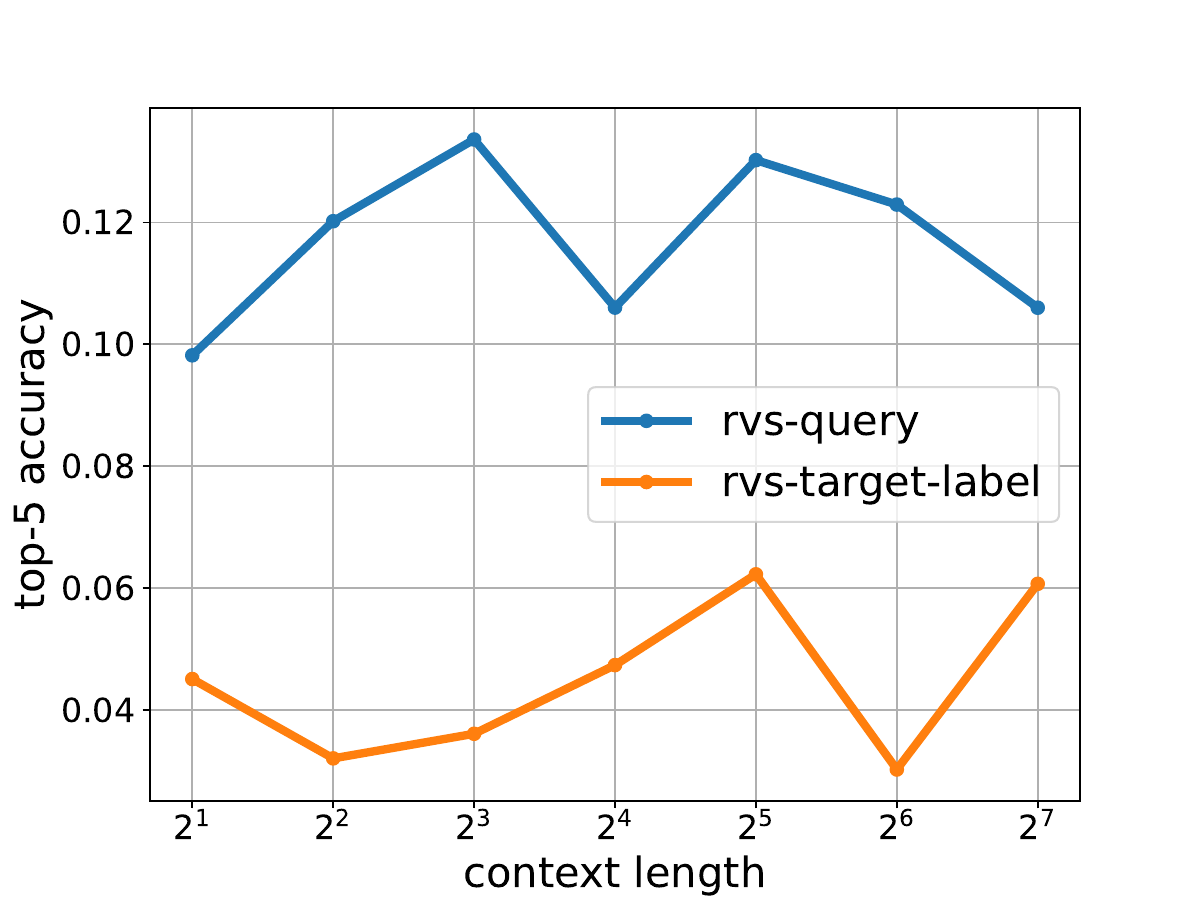}
        \vspace{-6pt}
        \label{fig:real-LLM-reverse3}
        \caption{English-French}
    \end{subfigure}
    \begin{subfigure}{0.24\textwidth}
        \centering
        \includegraphics[width=\textwidth]{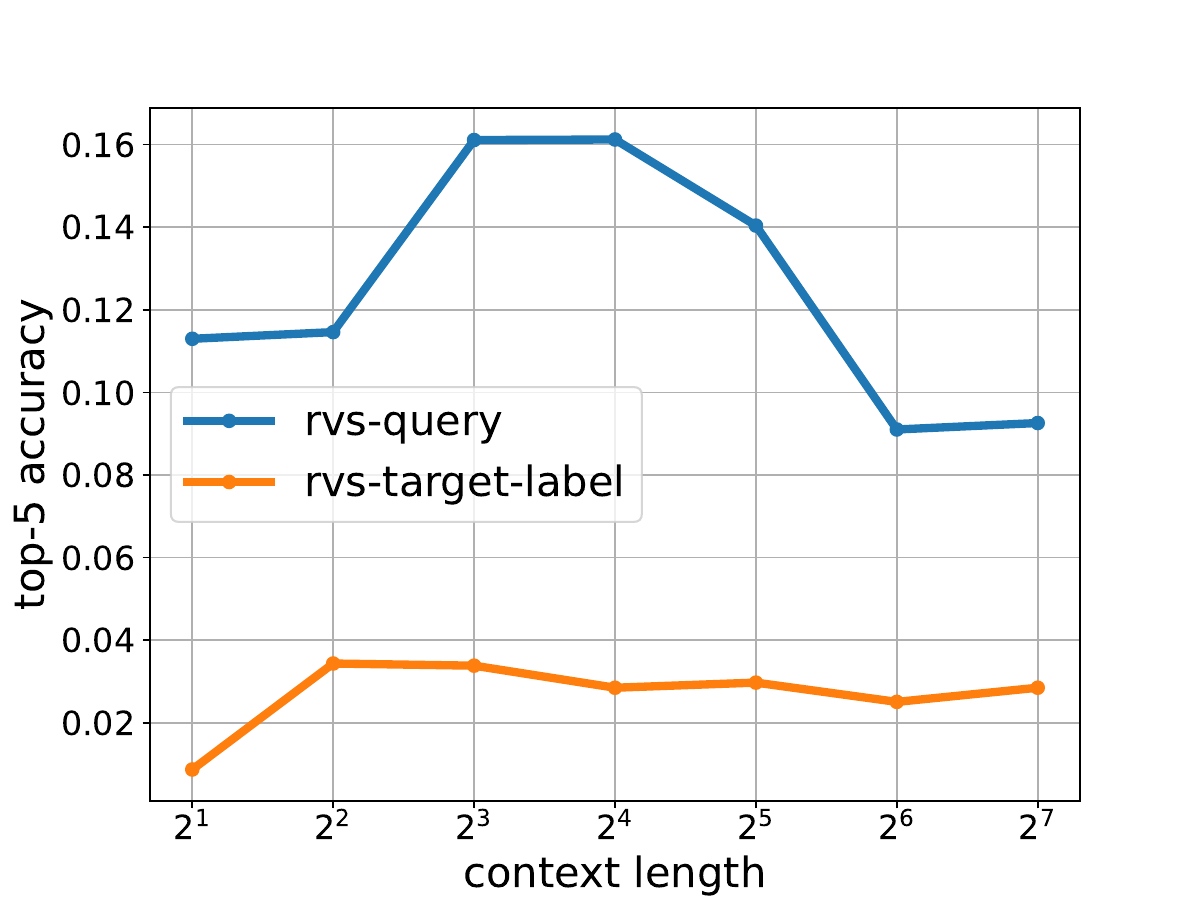}
        \vspace{-6pt}
        \label{fig:real-LLM-reverse4}
        \caption{English-German}
    \end{subfigure}
    \caption{The top-1 accuracy of predicting the reversed query word (\textcolor[RGB]{31,119,180}{blue}) and predicting the reversed target label word (\textcolor[RGB]{255,127,14}{orange}). The accuracy of predicting the reversed query word is higher than outputting the reversed target, indicating ICL makes ID predictions. }
    \label{fig:real-LLM-reverse}
\end{figure}

\begin{takeawaybox}[title=Summary of the Empirical Results \& Connections with the Existing Theories (I)]
Given an OOD context, ICL finds a near-optimal solution within its pretraining task space. Particularly, when learning OOD mathematical functions, ICL behaves as a predictor of its pretraining function class optimized by GD using the in-context examples. This validates and extends existing results by \cite{zhang2023trained} which theoretically shows that linear attention models trained on linear regression data still implement linear regression given arbitrary downstream context (see Appendix \ref{app:theo-detail-imple-LSA}). 
\end{takeawaybox}

Inspired by \cite{raventos2024pretraining}, we also explore whether increasing the diversity of training tasks, while keeping them in-distribution, can activate the OOD generalization ability. The results in Appendix \ref{sec:task-diversity} also suggest a negative conclusion.

\section{Learning Abstract Labels May Not Be A Real OOD Capability}
\label{sec:abstract-label-learning}
\subsection{Classification Tasks with Unseen Abstract Labels}
\label{subsec:abstract-label-learning}
Recent works \citep{pan2023context, kossen2024context} have shown that LLMs can perform classification tasks in which the labels are ``abstract symbols" with no semantic meaning. For instance, in the SST-2 binary classification task, the labels ``positive" and ``negative" are substituted with abstract terms like ``foo" and ``bar", respectively. These tasks are likely not seen during pretraining.  \citet{pan2023context} refer to this ability of ICL to perform such classification as ``task learning" (TL). In this section, we explore whether the TL ability really reflects a new-task-learning capability of ICL or if it merely stems from the model having learned similar tasks during pretraining. 

\textbf{The retrieval ability can be gained by pretraining on a retrieval task with diverse input-label mappings.} The classification of abstract labels can be approached by retrieving an example with semantics similar to the query and then outputting the label of that example. Therefore, the retrieval ability is a crucial prerequisite for performing abstract-label classification. We design a retrieval task to investigate whether ICL's retrieval capability can emerge from training on similar tasks. Specifically, we generate a predefined word embedding $E\in \mathbb{R}^{N \times d}$ and randomly sample $\boldsymbol{x}_i\in \mathbb{R}^d$ from the first 5 rows of $E$. Each vector $\boldsymbol{x}_i$ corresponds to the $I_{\boldsymbol{x}_i}$-th row of $E$, i.e., $\boldsymbol{x}_i = E_{I_{\boldsymbol{x}_i}}$. To generate the labels $y_i$, we follow these steps: First, map the index $I_{\boldsymbol{x}_i}$ to a new one $I_{y_i}\in [N]$ using the mapping rule $I_{y_i}=I_{\boldsymbol{x}_i}+s$, where $s\in \mathbb{N}$ is randomly sampled. Second, we set $y_i=E_{I_{y_i}}$. All in-context examples in a sequence share the same mapping rule defined by $s$. To succeed in this task, the model must retrieve the same token as the query example from the context and output its subsequent token. All models are trained with 200,000$\times$64 sequences,  where 200,000 is the number of training steps and 64 is the batch size. 

We train three models with three different ranges of $s$: $s\sim \mathcal{U}(50, 150)$, $s\sim \mathcal{U}(50, 250)$, and $s\sim \mathcal{U}(50, 450)$ and evaluate on $s\sim \mathcal{U}(50, 150)$, $s\sim \mathcal{U}(10, 20)$, and $s\sim \mathcal{U}(500, 600)$, where $\mathcal{U}$ denotes the uniform distribution. We plot the test error in Figure \ref{fig:retrieval}.

\begin{figure}[htbp]
    \centering
    \begin{subfigure}{0.26\textwidth}
        \centering
        \includegraphics[width=\textwidth]{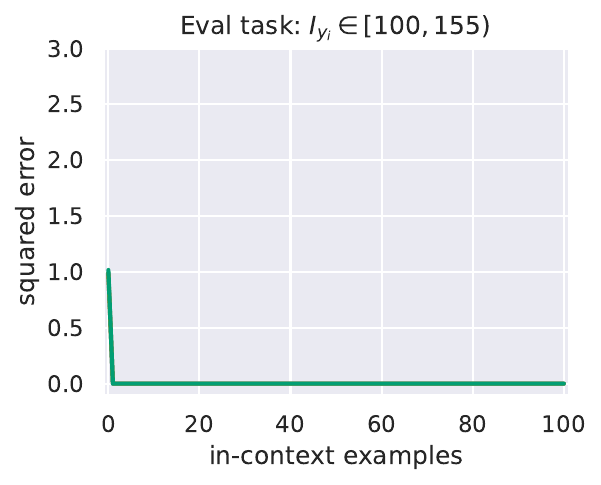}
        \vspace{-6pt}
        \label{fig:abstract-label-learning-sub1}
        \caption{Eval $s\sim \mathcal{U}(50, 150)$}
    \end{subfigure}
    \begin{subfigure}{0.26\textwidth}
        \centering
        \includegraphics[width=\textwidth]{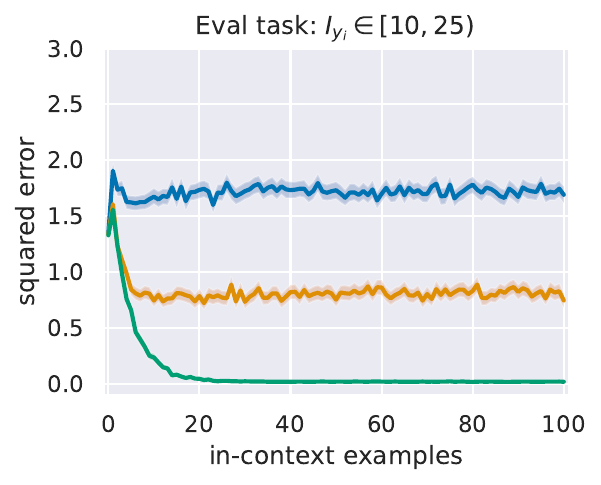}
        \vspace{-6pt}
        \label{fig:abstract-label-learning-sub2}
        \caption{Eval $s\sim \mathcal{U}(10, 20)$}
    \end{subfigure}
    \begin{subfigure}{0.37\textwidth}
        \centering
        \includegraphics[width=\textwidth]{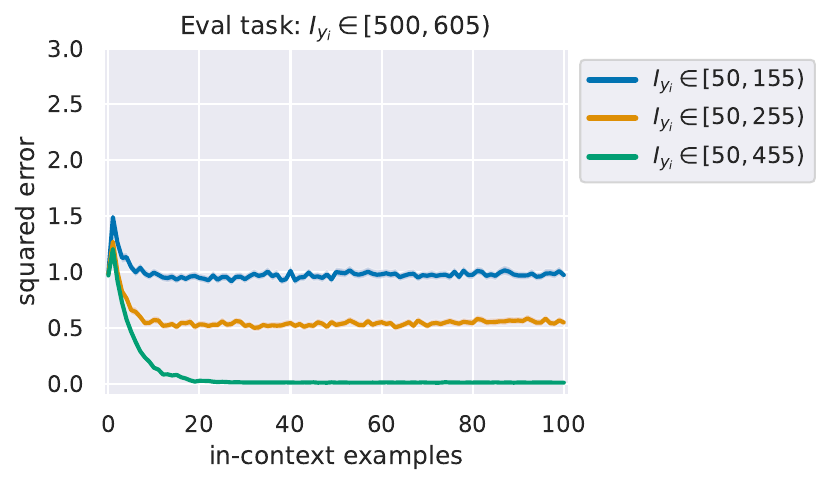}
        \vspace{-6pt}
        \label{fig:abstract-label-learning-sub3}
        \caption{Eval $s\sim \mathcal{U}(500, 600)$}
    \end{subfigure}
    \caption{The ICL test error of Transformers trained on the retrieval task with different numbers of label tokens. ``Eval" denotes ``evaluated on". Note that the indices of training label tokens $I_{y_i}\in [50, 455)$, so the labels in (a) are ID while (b) and (c) are OOD. }
    \label{fig:retrieval}
\end{figure}

\textbf{Observations.} In Figure \ref{fig:retrieval}, all three models perform well when labels are in distribution (a).  When the labels are OOD, the ICL performance improves with the number of label vectors (random mappings) encountered during training. This demonstrates that retrieval ability can emerge from training on diverse retrieval tasks. These findings may also offer new insights into how real-world LLMs develop in-context retrieval capabilities: when autoregressive pretraining includes numerous instances requiring the model to retrieve tokens from previous contexts, such abilities can emerge. We further validate this finding by observing the transition of the attention maps in Appendix \ref{app:prefix-matching-score}.

\textbf{The ability to perform linear regression and then retrieval can also be gained by pretraining on a similar task.} 
To further reproduce the emergence of the abstract label learning ability of real-world LLMs, we design a task that emulates the natural language classification with abstract labels like ``foo" and ``bar". The task function is defined as follows: $y_i=f(\boldsymbol{x}_i)=E_{I_{\boldsymbol{x}_i}}$, where $I_{\boldsymbol{x}_i}=\text{floor}(0.4*(\boldsymbol{w}^\top \boldsymbol{x}_i))+s$, with $E$ being the predefined word embedding and $s\in \mathbb{N}+$ shared in the same sequence. Here, $\boldsymbol{x}_i$, $\boldsymbol{w}\sim \mathcal{N}(0, 1)\in \mathbb{R}^d$. Each $\boldsymbol{x}_i$ is mapped to $y_i$ according to $\boldsymbol{w}$ and $s$. \footnote{In our experimental setup, given a sufficiently long context ($\approx 50$), the label of the query is highly likely to appear in the context, as the number of the possible classes is far less than the number of in-context examples.} In this task, estimating $\boldsymbol{w}$ and calculating $\boldsymbol{w}^\top \boldsymbol{x}_i$ simulates predicting the original label (``positive" and ``negative") based on the semantics in the natural language task, while retrieving the abstract labels from in-context examples that share the same $\text{floor}(0.4*(\boldsymbol{w}^\top \boldsymbol{x}_i))$ as the query from the context resembles identifying the abstract labels (``foo" and ``bar"). 

Again, we train three models on different ranges of mappings:  $s\sim \mathcal{U}(100, 200)$, $s\sim \mathcal{U}(100, 1000)$, and $s\sim \mathcal{U}(100, 2000)$, and evaluate on $s\sim \mathcal{U}(100, 200)$, $s\sim \mathcal{U}(500, 600)$, and $s\sim \mathcal{U}(3000, 3100)$. The test error is plotted in Figure \ref{fig:linear-regression-retrieval}.

\begin{figure}[htbp]
    \centering
    \begin{subfigure}{0.26\textwidth}
        \centering
        \includegraphics[width=\textwidth]{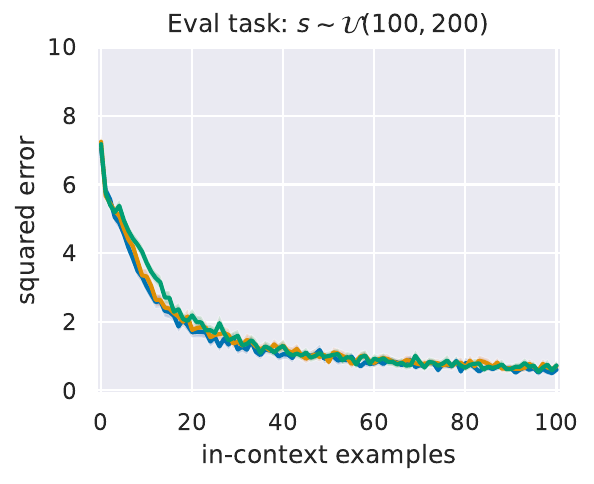}
        \caption{Eval $s\sim \mathcal{U}(100, 200)$}
        \label{fig:linear-regression-retrieval-sub1}
    \end{subfigure}
    \begin{subfigure}{0.26\textwidth}
        \centering
        \includegraphics[width=\textwidth]{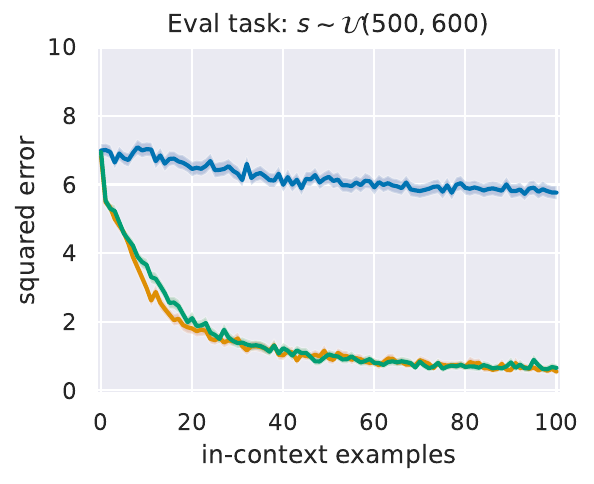}
        \caption{Eval $s\sim \mathcal{U}(500, 600)$}
        \label{fig:linear-regression-retrieval-sub2}
    \end{subfigure}
    \begin{subfigure}{0.37\textwidth}
        \centering
        \includegraphics[width=\textwidth]{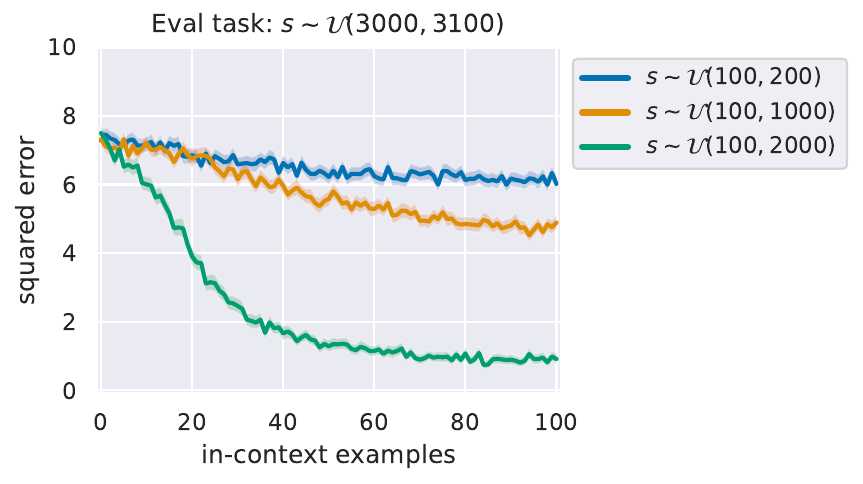}
        \caption{Eval $s\sim \mathcal{U}(3000, 3100)$}
        \label{fig:linear-regression-retrieval-sub3}
    \end{subfigure}
    \caption{The ICL test error of Transformers trained and tested on the linear regression + retrieval task with different numbers of label tokens. ``Eval" denotes ``evaluated on". Only the model trained on the largest number of tasks exhibits generalization to unseen label tokens. }
    \label{fig:linear-regression-retrieval}
\end{figure}

\textbf{Observations.} In Figure \ref{fig:linear-regression-retrieval}, the generalization ability to unseen labels also improves as the number of labels encountered during training increases. Notably, only the model trained with $s\sim \mathcal{U}(100, 2000)$ performs well on the unseen labels. This suggests that as long as the LLM has been exposed to sufficiently many similar tasks during training, it can effectively classify arbitrary OOD labels retrievable from context through ICL.

\subsection{Abstract Label Classification Can Only Be Achieved on ID Tasks}
\label{subsec:abstract-label-learning-iid}
\textbf{A retrieval task with OOD testing functions \& observations.} Given the above observations, one might question whether, once the target labels appear in the context, ICL can generalize beyond the training function class by retrieving the target label from the context. To investigate this, we conduct the same predict-then-retrieval task as in Figure \ref{fig:linear-regression-retrieval} but replace the testing functions with quadratic regression while preserving linear regression as the pretraining task. The results in Figure \ref{fig:quadratic-regression-retrieval} show that the generalization doesn't improve with training on more ID functions. Combining observations from Figure \ref{fig:linear-regression-retrieval}, we conclude that ICL can only solve classification with unseen labels over ID test function classes. Once the underlying task function is OOD, ICL fails even if the target label appears in the context. This finding highlights a limitation in improving an LLM's performance through in-context examples. While providing examples with shared labels may seem helpful, this approach may fail if the underlying prediction rule is too OOD for the LLM to learn.

\begin{figure}[htbp]
    \centering
    \begin{subfigure}{0.26\textwidth}
        \centering
        \includegraphics[width=\textwidth]{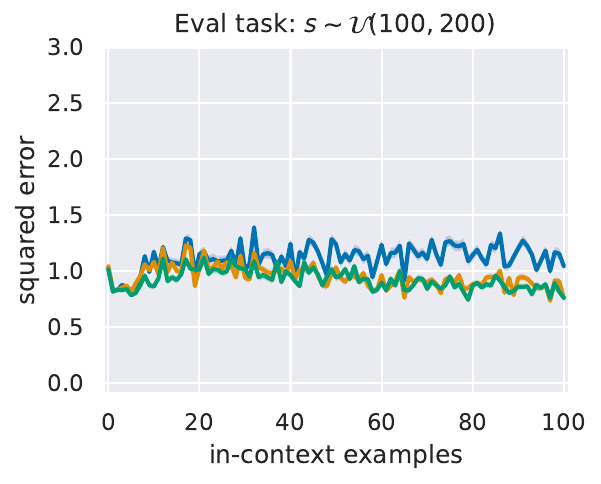}
        \caption{Eval $s\sim \mathcal{U}(100, 200)$}
        \label{fig:quadratic-regression-retrieval-sub1}
    \end{subfigure}
    \begin{subfigure}{0.26\textwidth}
        \centering
        \includegraphics[width=\textwidth]{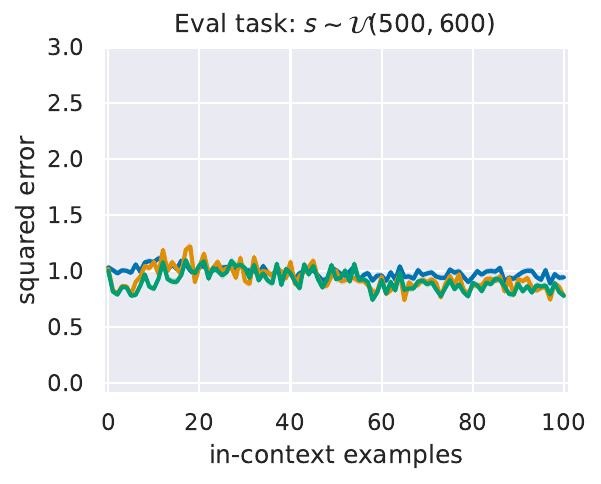}
        \caption{Eval $s\sim \mathcal{U}(500, 600)$}
        \label{fig:quadratic-regression-retrieval-sub2 }
    \end{subfigure}
    \begin{subfigure}{0.37\textwidth}
        \centering
        \includegraphics[width=\textwidth]{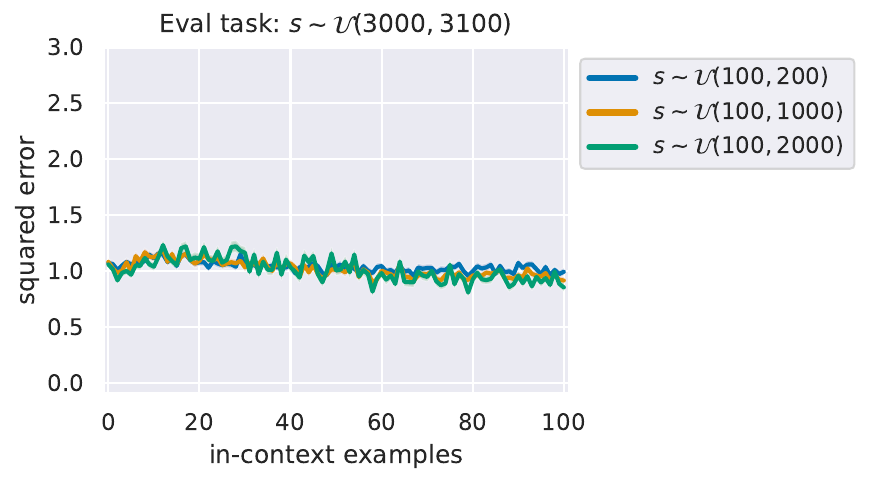}
        \caption{Eval $s\sim \mathcal{U}(3000, 3100)$}
        \label{fig:quadratic-regression-retrieval-sub3}
    \end{subfigure}
    \caption{The ICL test error of Transformers evaluated on a quadratic regression + retrieval task. Different colors denote models trained on the linear regression + retrieval task with different numbers of label tokens. ``Eval" denotes ``evaluation". The model trained on $s\sim \mathcal{U}(100, 2000)$ doesn't generalize better than the other two models.}
    \label{fig:quadratic-regression-retrieval}
    \vspace{-15pt}
\end{figure}

\begin{takeawaybox}[title=Summary of the Empirical Results \& Connections with the Existing Theories (II)]
To handle classification with abstract labels, the model infers an input-label mapping to implicitly establish a similarity metric. It then retrieves in-context examples similar to the query to deduce the OOD labels using this metric. However, this process succeeds only when the underlying function is ID, thus it does not represent a true OOD generalization capability. This observation aligns with Bayesian frameworks for ICL—the implicit similarity metric here corresponds to the task concept inferred by the model. We leave an intuitive Bayesian interpretation of the findings in Section \ref{subsec:abstract-label-learning} and \ref{subsec:abstract-label-learning-iid} in Appendix \ref{app:bayesian-interpretation}.
\end{takeawaybox}

\subsection{Real-world LLMs May Not Necessarily In-context Learn New Tasks}
\label{sec:llama-ood-task}

\textbf{Evaluating Llama-3 on an OOD synthetic word classification task.} Now we assess whether real-world LLMs can tackle OOD tasks through ICL. We design a synthetic word classification task for a pretrained Llama-3-8B. Specifically, we randomly sample $\boldsymbol{x}_i\in \mathbb{R}^{d}$ from the word embedding of Llama-3-8B (denoted as $E_{llama}$) and generate random linear mappings $\boldsymbol{W}\in \mathbb{R}^{d\times C}$ as task functions (where $C=10$). The label words are created by mapping $\boldsymbol{x}_i$ to one of the ten label vectors in $E_{llama}$ using $\boldsymbol{W}$. Experimental details are in Appendix \ref{app:llama-ood-task}. To complete this task, the model must learn $\boldsymbol{W}$ in context, which is unlikely to have been seen during the pretraining of Llama.

For comparison, we also evaluate the ICL performance of Llama-3-8B on a retrieval version of this task.  Concretely, we first randomly sample $C=10$ different vectors from $E_{llama}$ as $\boldsymbol{x}_i$ and compute $\boldsymbol{y}_i$ in the same way as the above classification task to get $S=[\boldsymbol{x}_1, \boldsymbol{y}_1,...,\boldsymbol{x}_C, \boldsymbol{y}_C]$. Then we repeat $S$ 20 times to construct the input sequence $S'=[S\oplus S\oplus...\oplus S]$, where $\oplus$ denotes concatenation operation. The goal is to predict the next token given a prefix of $S'$. To succeed in this task, the model has to retrieve the same token as the query token (the last $\boldsymbol{x}_i$ of $S'$) from the context and output its subsequent token $\boldsymbol{y}_i$. The results of these two tasks are presented in Figure \ref{fig:word-classification-llama-3}.

\textbf{Observations.} From Figure \ref{fig:word-classification-llama-3}, we observe that the ICL performance on the synthetic classification task is close to random guessing (10\% accuracy), while performance on the retrieval task is significantly better (similar results also hold for Llama-2-7B in Appendix \ref{app:word-classification-llama-2}). This suggests that pretrained real-world LLMs may also struggle to learn new input-output mappings from context; instead, ICL appears to be more adept at retrieval tasks. To show that the failure in the synthetic word classification task is mainly due to its OOD nature instead of some other factors that make it difficult to learn, we train a GPT-2 to perform the same task in Appendix \ref{app:word-classification-easy} and find that the task can be well addressed after training. 

\begin{figure}[htbp]
    \centering
    \includegraphics[width=0.33\textwidth]{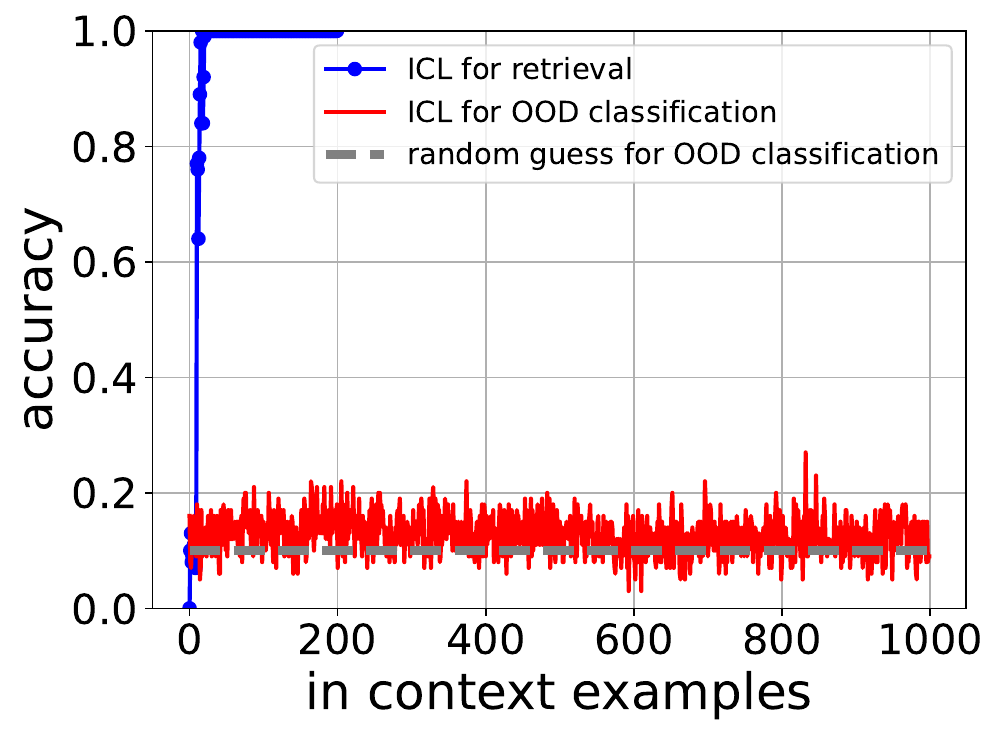}
    \caption{The ICL accuracy of Llama-3-8B on the synthetic tasks.}
    \label{fig:word-classification-llama-3}
    \vspace{-5pt}
  \end{figure}

\section{The Algorithm Selection Mechanism Exists Broadly When Evaluated on OOD Tasks}
\label{sec:algo-selection}

Real-world LLMs are pretrained on a huge corpus that could contain massive tasks. \citet{bai2024transformers, yadlowsky2023pretraining} have empirically found that the ICL performance of Transformers trained on multiple tasks approaches the optimal pretraining function when evaluated on one of the training tasks. In this section, we will show that this algorithm-selection phenomenon of ICL persists even when evaluated on OOD tasks, regardless of the distribution of the testing functions, and provide a theoretical characterization of the algorithm-selection mechanism.

\textbf{The Model pretrained on a single task vs. the model pretrained on multiple tasks.} In Figure \ref{fig:algo-selection}, we compare the performance of GPT-2 models trained on a single task—linear regression (LR), quadratic regression (QR), 2-layer ReLU network (ReLU NN) regression—against the model trained on all three tasks when encountering four kinds of OOD tasks. We also plot the error of a 2-layer ReLU NN trained by GD (dashed blue line). The results are in Figure \ref{fig:algo-selection}.

\begin{figure}[htbp]
    \centering
    \begin{subfigure}{0.2\textwidth}
        \centering
        \includegraphics[width=\textwidth]{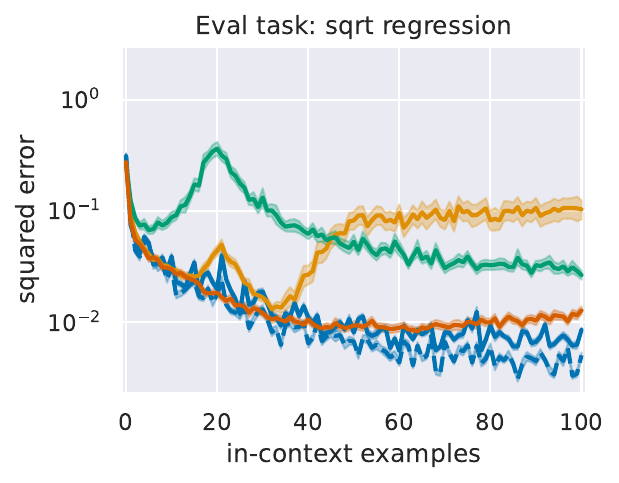}
        \label{fig:algo-selection-1}
        \vspace{-8pt}
        \caption{}
        \vspace{-5pt}
    \end{subfigure}
    \begin{subfigure}{0.2\textwidth}
        \centering
        \includegraphics[width=\textwidth]{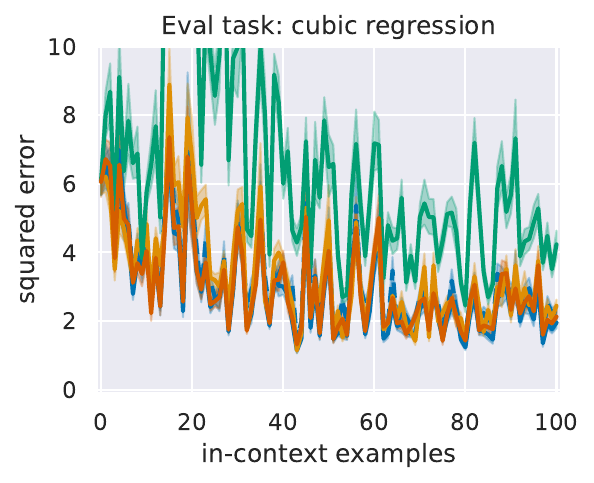}
        \label{fig:algo-selection-2}
        \vspace{-8pt}
        \caption{}
        \vspace{-5pt}
    \end{subfigure}
    \begin{subfigure}{0.2\textwidth}
        \centering
        \includegraphics[width=\textwidth]{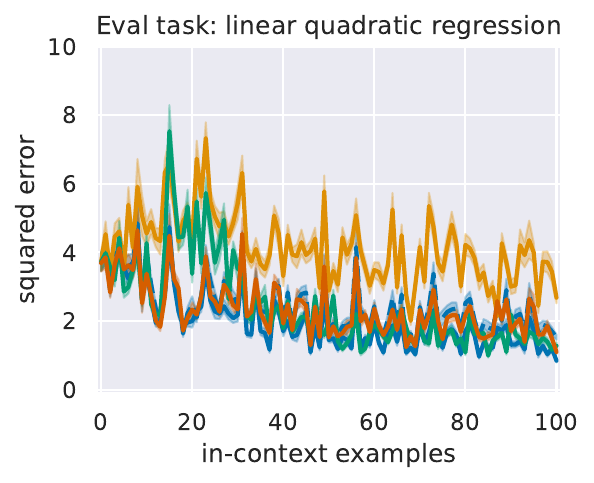}
        \label{fig:algo-selection-3}
        \vspace{-8pt}
        \caption{}
        \vspace{-5pt}
    \end{subfigure}
    \begin{subfigure}{0.3\textwidth}
        \centering
        \includegraphics[width=\textwidth]{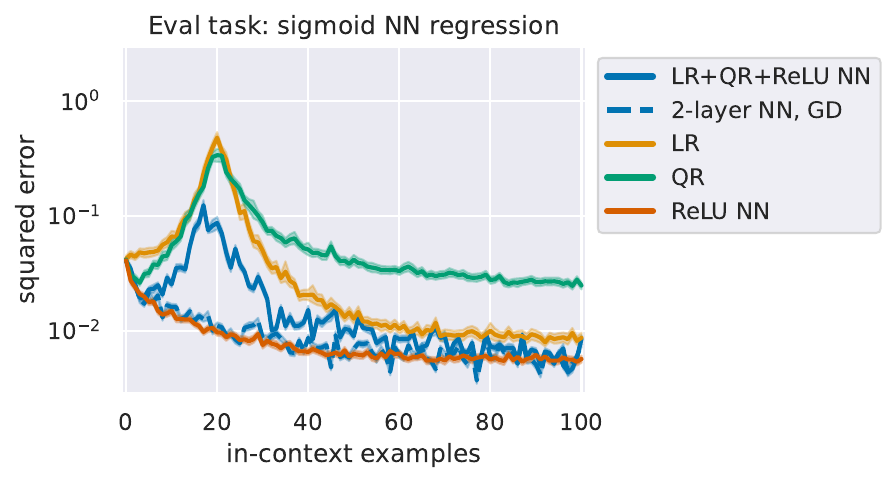}
        \label{fig:algo-selection-4}
        \vspace{-8pt}
        \caption{}
        \vspace{-5pt}
    \end{subfigure}
    \caption{The ICL performance of models trained on the individual task: linear regression (LR), quadratic regression (QR), 2-layer ReLU network (ReLU NN) regression, and the model trained on the mixture of the three tasks (LR+QR+ReLU NN). The evaluation functions are (a) square root, (b) cubic, (c) linear+quadratic, and (d) 2-layer Sigmoid network (details in Appendix \ref{app:exp-detail-1}). The performance of the model trained on the mixed tasks is comparable to that of the model trained on the single task that performs the best on the evaluation task.}
    \label{fig:algo-selection}
\end{figure}

\textbf{Observations.} 1) the ICL performance of the model trained on mixed tasks (LR+QR+ReLU NN) is comparable to the performance of the model trained on a single task with the lowest test error on the evaluation task. This suggests that ICL can automatically select the best pretraining functions according to the downstream context. 2) ReLU NN consistently performs the best on all four OOD test functions. Moreover, the performance of the ReLU model trained by GD aligns well with the ICL performance of the GPT-2 trained on the same function class. This demonstrates that our findings in Section \ref{subsec:main} still hold when the Transformer is trained on a mixture of multiple tasks.

\subsection{Theoretically Revealing the Mechanism of Algorithm Selection}
\label{sec:theory}
In this section, we will provide theoretical insight into the working mechanism of the algorithm selection of ICL. We find there simultaneously exist two parallel mechanisms: the \textbf{Low-test-error preference} and the \textbf{Similar-input-distribution preference.}

\label{sec:theo-algo-selection}
\textbf{A mixed Gaussian pretraining dataset of multiple tasks.} We adopt the theoretical framework by \cite{lin2024dual}. Consider a noisy linear regression pretraining dataset with the inputs and task weights following the mixed Gaussian distribution:
\begin{assumption}
    \label{ass:mixed-gaussian}
    (Mixed Gaussian pretraining data) Input distribution: $P(\boldsymbol{x}| \boldsymbol{\mu})=\mathcal{N}(\boldsymbol{x}|\boldsymbol{\mu},\sigma_x^2 \boldsymbol{I})$, label distribution: $ P(y|\boldsymbol{x}, \boldsymbol{w})=\mathcal{N}\left(y|\langle\boldsymbol{x}, \boldsymbol{w}\rangle, \sigma_y^2\right)$. The input means and task weights are sampled from a mixed Gaussian distribution: $P(\boldsymbol{\mu}, \boldsymbol{w})=\sum_{m=1}^M \pi_m \mathcal{N}(\boldsymbol{\mu} ; \boldsymbol{\mu}_m, \sigma_\mu^2 \boldsymbol{I}) \cdot \mathcal{N}\left(\boldsymbol{w} ; \boldsymbol{w}_m, \sigma_w^2 \boldsymbol{I}\right)$, where $\sum_{m=1}^M \pi_m=1$, $0<\pi_m<1$ and $\left\|\boldsymbol{\mu}_m\right\|=\left\|\boldsymbol{w}_m\right\|=1, \forall m$. Define $\delta_{\mu} = \frac{\sigma^2_{\mu}}{\sigma^2_{x}}$ and $\delta_{w} = \frac{\sigma^2_{w}}{\sigma^2_{y}}$. Each training sequence $\mathcal{S}_T=[\boldsymbol{x}_1 \oplus y_1 \oplus ... \oplus\boldsymbol{x}_T \oplus y_T]$ is constructed by first sampling the input mean and the task weight according to $P(\boldsymbol{\mu}, \boldsymbol{w})$ and then sampling $\boldsymbol{x}_i$ and $y_i$ according to $P(\boldsymbol{x}| \boldsymbol{\mu})$ and $P(y|\boldsymbol{x}, \boldsymbol{w})$, respectively. Denote this pretraining distribution as $P_{tr}$.
\end{assumption}

According to the Corollary 2 of \cite{lin2024dual} (see \ref{lem:mixed-gaussian-pred} in Appendix \ref{proof:algo-selection}), the closed-form prediction of the model trained on the pretraining data under Assumption \ref{ass:mixed-gaussian}, given the testing context, remains a Gaussian mixture of the reweighted pretraining task weights: $\left\langle \boldsymbol{x}_{i+1}, \sum_{m=1}^M \tilde{\pi}_m \tilde{\boldsymbol{w}}_m\right\rangle$, where $\tilde{\pi}_m$ and $\tilde{\boldsymbol{w}}_m$ are the posterior variables of $\pi_m$ and $\boldsymbol{w}_m$ given the downstream context. Hence, to analyze how ICL selects pretraining priors, the key lies in uncovering how $\tilde{\pi}_m$ evolves. First, we introduce Lemma \ref{lem:dual-x-shift} that characterizes the ratio of the reweighted weight of two pretraining tasks:

\begin{lemma}
    \label{lem:dual-x-shift}
    (Appendix H.1 of \citet{lin2024dual}) Consider any two different pretraining component $\alpha$ and $\beta$, given a testing context $\mathcal{S}_T \oplus \boldsymbol{x}_{T+1}$ and the well-pretrained model $M^*$, the ratio between the weights of the two task priors $\tilde{\pi}_\alpha/\tilde{\pi}_\beta$ in $M^*$'s ICL prediction can be decomposed into two terms: $\tilde{\pi}_\alpha/\tilde{\pi}_\beta = \frac{\pi_\alpha}{\pi_\beta} \exp \left( \Psi_{\boldsymbol{\mu}}(\alpha, \beta) + \Psi_{\boldsymbol{w}}(\alpha, \beta)\right) $, where 
    \begin{equation}
        \begin{aligned}
                &\Psi_{\boldsymbol{\mu}}(\alpha, \beta)=\left(\sum_{i=1}^{T+1}\left\|\boldsymbol{\mu}_\beta-\boldsymbol{x}_i\right\|^2-\sum_{i=1}^{T+1}\left\|\boldsymbol{\mu}_\alpha-\boldsymbol{x}_i\right\|^2\right) /\left(2 \sigma_x^2\left(1+(T+1) \delta_\mu\right)\right).\\
        \end{aligned}
    \end{equation}
    Further, assuming the testing in-context examples $\boldsymbol{x}_i\sim \mathcal{N}(\boldsymbol{\mu}^*, \tau_x^2 \boldsymbol{I})$, if $\|\boldsymbol{\mu}_\beta - \boldsymbol{\mu}^*\|^2 - \|\boldsymbol{\mu}_\alpha - \boldsymbol{\mu}^*\|^2 \geq 0$ holds, then as the context length $T \rightarrow \infty$,  the first term $\Psi_{\boldsymbol{\mu}}(\alpha, \beta)\rightarrow (\|\boldsymbol{\mu}_\beta - \boldsymbol{\mu}^*\|^2 - \|\boldsymbol{\mu}_\alpha - \boldsymbol{\mu}^*\|^2)/2 \sigma^2_\mu \geq 0$.
\end{lemma}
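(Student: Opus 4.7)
The plan is to derive both claims from the Bayesian posterior interpretation of the well-pretrained model's ICL prediction. By Bayes' rule, the reweighted mixing weights take the form $\tilde{\pi}_m \propto \pi_m \cdot P(\mathcal{S}_T \oplus \boldsymbol{x}_{T+1} \mid m)$. Because $\boldsymbol{\mu}$ and $\boldsymbol{w}$ are conditionally independent under component $m$, the conditional likelihood factorizes: the $\boldsymbol{x}$-marginal (integrating out $\boldsymbol{\mu}$) depends only on the input-mean prior and the $\boldsymbol{x}_i$'s, while the $y$-conditional (integrating out $\boldsymbol{w}$) depends only on the $(\boldsymbol{x}_i, y_i)$ pairs. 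Taking the ratio between components $\alpha$ and $\beta$ and then the logarithm yields precisely $\tilde{\pi}_\alpha/\tilde{\pi}_\beta = (\pi_\alpha/\pi_\beta) \exp(\Psi_{\boldsymbol{\mu}}(\alpha,\beta) + \Psi_{\boldsymbol{w}}(\alpha,\beta))$, so the decomposition is essentially forced by the factorization.

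To extract the explicit formula for $\Psi_{\boldsymbol{\mu}}$, I compute the marginal log-likelihood $\log \int \mathcal{N}(\boldsymbol{\mu}; \boldsymbol{\mu}_m, \sigma_\mu^2 \boldsymbol{I}) \prod_{i=1}^{T+1}\mathcal{N}(\boldsymbol{x}_i; \boldsymbol{\mu}, \sigma_x^2\boldsymbol{I})\, d\boldsymbol{\mu}$ by completing the square in $\boldsymbol{\mu}$. With $\Lambda = 1/\sigma_\mu^2 + (T+1)/\sigma_x^2$ and $\boldsymbol{\nu}_m = \boldsymbol{\mu}_m/\sigma_\mu^2 + (\sum_i \boldsymbol{x}_i)/\sigma_x^2$, the integral leaves an $m$-dependent term $\|\boldsymbol{\nu}_m\|^2/(2\Lambda) - \|\boldsymbol{\mu}_m\|^2/(2\sigma_\mu^2)$; subtracting this expression for $\beta$ from that for $\alpha$ and using the identity $\sigma_\mu^2 \Lambda = 1 + (T+1)\delta_\mu$ together with the unit-norm constraint $\|\boldsymbol{\mu}_\alpha\| = \|\boldsymbol{\mu}_\beta\| = 1$ reduces the expression to $((\boldsymbol{\mu}_\alpha - \boldsymbol{\mu}_\beta)^\top \sum_i \boldsymbol{x}_i)/(\sigma_x^2(1+(T+1)\delta_\mu))$, which after expanding the squared-distance form equals the stated $\Psi_{\boldsymbol{\mu}}(\alpha,\beta)$.

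For the asymptotic claim, I expand the squared norms and use $\|\boldsymbol{\mu}_\alpha\| = \|\boldsymbol{\mu}_\beta\| = 1$ to get $\sum_i \|\boldsymbol{\mu}_\beta - \boldsymbol{x}_i\|^2 - \sum_i \|\boldsymbol{\mu}_\alpha - \boldsymbol{x}_i\|^2 = 2(T+1)(\boldsymbol{\mu}_\alpha - \boldsymbol{\mu}_\beta)^\top \bar{\boldsymbol{x}}$, where $\bar{\boldsymbol{x}} = (T+1)^{-1}\sum_i \boldsymbol{x}_i$. Under $\boldsymbol{x}_i \sim \mathcal{N}(\boldsymbol{\mu}^*, \tau_x^2\boldsymbol{I})$ i.i.d., the strong law of large numbers gives $\bar{\boldsymbol{x}} \to \boldsymbol{\mu}^*$ almost surely, and the prefactor simplifies via $(T+1)/(1+(T+1)\delta_\mu) \to 1/\delta_\mu$, so by Slutsky's theorem $\Psi_{\boldsymbol{\mu}}(\alpha,\beta) \to (\boldsymbol{\mu}_\alpha - \boldsymbol{\mu}_\beta)^\top \boldsymbol{\mu}^*/\sigma_\mu^2$. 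A final unit-norm identity $2(\boldsymbol{\mu}_\alpha - \boldsymbol{\mu}_\beta)^\top \boldsymbol{\mu}^* = \|\boldsymbol{\mu}_\beta - \boldsymbol{\mu}^*\|^2 - \|\boldsymbol{\mu}_\alpha - \boldsymbol{\mu}^*\|^2$ converts the limit into the target form, and non-negativity is immediate from the hypothesis.

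Since the argument is essentially pure conjugate-Gaussian bookkeeping plus a routine law-of-large-numbers step, there is no deep obstacle; the main thing to watch is the combinatorial cancellation that makes the $\|\boldsymbol{\mu}_m\|^2$-terms vanish (which only happens because of the unit-norm assumption), and the mode of convergence in the limit --- I would state the limit in the almost-sure sense via Slutsky, since both $1/(T+1)$ and $\bar{\boldsymbol{x}} - \boldsymbol{\mu}^*$ vanish a.s., avoiding any issue with the unbounded prefactor $T+1$.
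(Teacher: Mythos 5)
The paper does not prove this lemma itself; it is cited verbatim from Appendix~H.1 of \citet{lin2024dual} (and the proof of Theorem~\ref{theo:algo-selection-app} simply invokes it). So there is no in-paper argument to compare against, and the question is only whether your reconstruction is sound.

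It is. The Bayesian-posterior factorization $\tilde{\pi}_m \propto \pi_m \cdot P(\mathcal{S}_T \oplus \boldsymbol{x}_{T+1}\mid m)$, the splitting of the likelihood into an $\boldsymbol{x}$-marginal (over all $T+1$ inputs, integrating out $\boldsymbol{\mu}$) and a $y$-conditional (over the $T$ observed labels, integrating out $\boldsymbol{w}$), the completion-of-the-square yielding the $m$-dependent term $\|\boldsymbol{\nu}_m\|^2/(2\Lambda) - \|\boldsymbol{\mu}_m\|^2/(2\sigma_\mu^2)$ with $\Lambda\sigma_\mu^2 = 1+(T+1)\delta_\mu$, and the SLLN/Slutsky passage to the limit are all correct and give exactly the stated $\Psi_{\boldsymbol{\mu}}$ and its limit. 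One small caveat on your closing remark: the unit-norm constraint is not actually what makes the $\|\boldsymbol{\mu}_m\|^2$ terms disappear. If you carry $\|\boldsymbol{\mu}_\alpha\|^2 - \|\boldsymbol{\mu}_\beta\|^2$ along rather than dropping it, the coefficient collapses to $-(T+1)/\bigl(2\sigma_x^2(1+(T+1)\delta_\mu)\bigr)$, which is precisely the coefficient the squared-distance form $\sum_i\|\boldsymbol{\mu}_\beta-\boldsymbol{x}_i\|^2 - \sum_i\|\boldsymbol{\mu}_\alpha-\boldsymbol{x}_i\|^2$ produces for those norms when expanded; the displayed formula for $\Psi_{\boldsymbol{\mu}}$ and its $T\to\infty$ limit therefore hold even without $\|\boldsymbol{\mu}_\alpha\| = \|\boldsymbol{\mu}_\beta\|$. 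Your route of reducing to the pure cross term $(\boldsymbol{\mu}_\alpha-\boldsymbol{\mu}_\beta)^\top\sum_i\boldsymbol{x}_i$ via unit norm, then re-expanding back to squared distances also via unit norm, is a harmless detour here, but it obscures that the cancellation is structural rather than a feature of the normalization.
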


However, \citet{lin2024dual} didn't analyze how the second term $\Psi_{\boldsymbol{w}}(\alpha, \beta)$ would evolve given any downstream task, which we will demonstrate to play an important role in the algorithm selection mechanism. In the following theorem, we prove that $\Psi_{\boldsymbol{w}}(\alpha, \beta)$ converges to a non-negative value when the pretraining function class $\alpha$ performs better on the downstream context than $\beta$.

\begin{theorem}
    \label{theo:algo-selection}
     (ICL prediction favors the pretraining function with low error on the context) Given the context $\boldsymbol{S}_T$, if the empirical risk of implementing a function of the pretraining task $\alpha$ is less than that of $\beta$, i.e., $\frac{1}{T} \sum_{i=1}^{T} |\boldsymbol{w}_\beta \boldsymbol{x}_i - y_i|^2 - |\boldsymbol{w}_\alpha \boldsymbol{x}_i - y_i|^2 \geq 0$, 
    then, under some mild Assumptions \ref{ass:theo:algo-selection-context} on the distribution of $\boldsymbol{S}_T$, we have $\Psi_{\boldsymbol{w}}(\alpha, \beta) \geq 0$. 
    
    Combining Lemma \ref{lem:dual-x-shift}, if the downstream inputs $\boldsymbol{x}_i$, $\boldsymbol{x}_i\sim \mathcal{N}(\boldsymbol{\mu}^*, \tau_x^2 \boldsymbol{I})$ and $\|\boldsymbol{\mu}_\beta - \boldsymbol{\mu}^*\|^2 - \|\boldsymbol{\mu}_\alpha - \boldsymbol{\mu}^*\|^2 \geq 0$ hold, then as $T\rightarrow \infty$, we have $\tilde{\pi}_\alpha/\tilde{\pi}_\beta \geq \pi_\alpha/\pi_\beta$.
\end{theorem}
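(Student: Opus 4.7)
The plan is to derive the closed form of $\Psi_{\boldsymbol{w}}(\alpha,\beta)$ from the Bayesian posterior ratio and then use a projection decomposition to read off its sign. Since the mixed-Gaussian prior in Assumption~\ref{ass:mixed-gaussian} factors across $\boldsymbol{\mu}$ and $\boldsymbol{w}$, the log-ratio $\log(\tilde{\pi}_\alpha/\tilde{\pi}_\beta) - \log(\pi_\alpha/\pi_\beta)$ splits cleanly into an input-marginal-likelihood piece $\Psi_{\boldsymbol{\mu}}$ and a label-conditional-marginal-likelihood piece $\Psi_{\boldsymbol{w}}$. Writing $\boldsymbol{X}\in\mathbb{R}^{T\times d}$ for the stacked context inputs, I first identify
\[
\Psi_{\boldsymbol{w}}(\alpha,\beta) \;=\; \log p(y_{1:T}\mid \boldsymbol{X},\alpha) \;-\; \log p(y_{1:T}\mid \boldsymbol{X},\beta),
\]
where $p(y_{1:T}\mid \boldsymbol{X},m) = \int \mathcal{N}(\boldsymbol{y};\boldsymbol{X}\boldsymbol{w},\sigma_y^2\boldsymbol{I})\,\mathcal{N}(\boldsymbol{w};\boldsymbol{w}_m,\sigma_w^2\boldsymbol{I})\,d\boldsymbol{w}$ is a Gaussian marginal.

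Marginalising out $\boldsymbol{w}$ yields $\mathcal{N}(\boldsymbol{y};\boldsymbol{X}\boldsymbol{w}_m,\boldsymbol{\Sigma})$ with $\boldsymbol{\Sigma} = \sigma_y^2 \boldsymbol{I} + \sigma_w^2 \boldsymbol{X}\boldsymbol{X}^\top$ independent of $m$, so $\log\det\boldsymbol{\Sigma}$ cancels from the log-ratio and
\[
\Psi_{\boldsymbol{w}}(\alpha,\beta) \;=\; \tfrac{1}{2}\bigl(\boldsymbol{r}_\beta^\top \boldsymbol{\Sigma}^{-1}\boldsymbol{r}_\beta \;-\; \boldsymbol{r}_\alpha^\top \boldsymbol{\Sigma}^{-1}\boldsymbol{r}_\alpha\bigr), \qquad \boldsymbol{r}_m := \boldsymbol{y} - \boldsymbol{X}\boldsymbol{w}_m.
\]
The hypothesis of the theorem is precisely $\|\boldsymbol{r}_\beta\|^2 - \|\boldsymbol{r}_\alpha\|^2 \geq 0$, so everything reduces to showing that the quadratic form $\boldsymbol{r}\mapsto \boldsymbol{r}^\top\boldsymbol{\Sigma}^{-1}\boldsymbol{r}$ preserves this residual ordering.

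To extract the sign, I would decompose each residual along and orthogonal to $\mathrm{col}(\boldsymbol{X})$. Letting $\boldsymbol{P}$ denote the orthogonal projector onto $\mathrm{col}(\boldsymbol{X})$, we have $(\boldsymbol{I}-\boldsymbol{P})\boldsymbol{r}_m = (\boldsymbol{I}-\boldsymbol{P})\boldsymbol{y}$, which is $m$-independent and thus drops out of the difference. On $\mathrm{col}(\boldsymbol{X})$ the matrix $\boldsymbol{\Sigma}^{-1}$ scales each nonzero eigenvalue $\lambda$ of $\boldsymbol{X}\boldsymbol{X}^\top$ to $1/(\sigma_y^2+\sigma_w^2\lambda)$. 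The mild assumption I would impose on $\boldsymbol{S}_T$ is that $\boldsymbol{X}^\top\boldsymbol{X}/T$ concentrates around a positive multiple of $\boldsymbol{I}_d$, which is standard for sub-Gaussian inputs once $T \gg d$. Under this concentration the nonzero eigenvalues of $\boldsymbol{X}\boldsymbol{X}^\top$ all collapse to a common value $cT$, giving $\boldsymbol{\Sigma}^{-1}\approx \sigma_y^{-2}(\boldsymbol{I}-\boldsymbol{P}) + (\sigma_y^2+\sigma_w^2 cT)^{-1}\boldsymbol{P}$. Because the $m$-dependent part of $\boldsymbol{r}_m$ lives entirely in $\mathrm{col}(\boldsymbol{X})$, the $m$-dependent part of $\boldsymbol{r}_m^\top\boldsymbol{\Sigma}^{-1}\boldsymbol{r}_m$ equals $(\sigma_y^2+\sigma_w^2 cT)^{-1}$ times the $m$-dependent part of $\|\boldsymbol{r}_m\|^2$; the common pieces cancel, so
\[
\Psi_{\boldsymbol{w}}(\alpha,\beta) \;\approx\; \frac{\|\boldsymbol{r}_\beta\|^2 - \|\boldsymbol{r}_\alpha\|^2}{2(\sigma_y^2+\sigma_w^2 cT)} \;\geq\; 0,
\]
with the inequality supplied by the empirical-risk hypothesis. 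Combining with Lemma~\ref{lem:dual-x-shift}, which delivers $\Psi_{\boldsymbol{\mu}}(\alpha,\beta)\geq 0$ as $T\to\infty$ under the stated condition on $\boldsymbol{\mu}^*$, and substituting into $\tilde{\pi}_\alpha/\tilde{\pi}_\beta = (\pi_\alpha/\pi_\beta)\exp(\Psi_{\boldsymbol{\mu}} + \Psi_{\boldsymbol{w}})$ completes the argument.

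The main obstacle is turning the ``$\approx$'' above into a rigorous inequality. If $\boldsymbol{X}^\top\boldsymbol{X}$ is badly conditioned, the Woodbury-weighted quadratic form can in principle reorder $\boldsymbol{r}_\beta$ and $\boldsymbol{r}_\alpha$ differently from $\|\cdot\|^2$, so the concentration of $\boldsymbol{X}\boldsymbol{X}^\top$ must be made quantitative enough to rule this out; this is where the ``mild assumptions on $\boldsymbol{S}_T$'' must carry essentially all the weight of the proof. The cleanest rigorous route I envisage is to keep $\boldsymbol{\Sigma}^{-1}$ exact, substitute $\boldsymbol{r}_\beta = \boldsymbol{r}_\alpha + \boldsymbol{X}(\boldsymbol{w}_\alpha-\boldsymbol{w}_\beta)$, and write $\Psi_{\boldsymbol{w}}$ as a quadratic plus linear form in $(\boldsymbol{w}_\alpha-\boldsymbol{w}_\beta)$; the concentration assumption then identifies the leading coefficient with a positive multiple of the sample operator $\boldsymbol{X}^\top\boldsymbol{X}$, and the empirical-risk hypothesis pins down the sign uniformly.
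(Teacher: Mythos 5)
Your derivation of $\Psi_{\boldsymbol{w}}$ from the label-conditional marginal likelihood is correct and is cleaner than the route in the appendix. Starting from the Bayes-optimal closed form of \cite{lin2024dual}, the paper instead writes $\Psi_{\boldsymbol{w}}$ as a difference of $(\boldsymbol{I}+T\delta_w\bar{\boldsymbol{\Sigma}}_w)^{-1}$-weighted norms, completes the square, invokes what it calls the Sherman--Morrison formula, and then discharges the remaining inequality by invoking each clause of Assumption~\ref{ass:theo:algo-selection-context} one at a time. Your route --- marginalizing $\boldsymbol{w}$ to get $\mathcal{N}(\boldsymbol{y};\boldsymbol{X}\boldsymbol{w}_m,\sigma_y^2\boldsymbol{I}+\sigma_w^2\boldsymbol{X}\boldsymbol{X}^\top)$, cancelling the $m$-independent $\log\det$, and reading $\Psi_{\boldsymbol{w}}$ as $\tfrac12(\boldsymbol{r}_\beta^\top\boldsymbol{\Sigma}^{-1}\boldsymbol{r}_\beta - \boldsymbol{r}_\alpha^\top\boldsymbol{\Sigma}^{-1}\boldsymbol{r}_\alpha)$ --- produces exactly the same $m$-dependent quadratic form after the common terms cancel (one can check $\boldsymbol{X}^\top(\boldsymbol{I}+\delta_w\boldsymbol{X}\boldsymbol{X}^\top)^{-1} = (\boldsymbol{I}+\delta_w\boldsymbol{X}^\top\boldsymbol{X})^{-1}\boldsymbol{X}^\top$ to align the two expressions), but your projection decomposition makes the structure of the problem transparent: since $\boldsymbol{\Sigma}$ is block-diagonal relative to $\mathrm{col}(\boldsymbol{X})\oplus\ker(\boldsymbol{X}^\top)$, the $m$-dependent part of $\boldsymbol{r}_m^\top\boldsymbol{\Sigma}^{-1}\boldsymbol{r}_m$ is confined to $\mathrm{col}(\boldsymbol{X})$, and the cross-terms with $(\boldsymbol{I}-\boldsymbol{P})\boldsymbol{y}$ vanish exactly. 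What this buys you: under the paper's condition $\sum_i\boldsymbol{x}_i\boldsymbol{x}_i^\top = T\boldsymbol{I}$ (which is what the identity $(\boldsymbol{I}+T\delta_w\boldsymbol{I})(\boldsymbol{I}+\delta_w\sum_i\boldsymbol{x}_i\boldsymbol{x}_i^\top)^{-1}=\boldsymbol{I}$ forces), your argument closes immediately with $\Psi_{\boldsymbol{w}} = (\sigma_y^2+\sigma_w^2T)^{-1}\bigl(\|\boldsymbol{r}_\beta\|^2 - \|\boldsymbol{r}_\alpha\|^2\bigr)/2 \geq 0$, without needing the other clauses; the paper, by contrast, additionally invokes a $\lambda_{\min}(\boldsymbol{x}_i\boldsymbol{x}_i^\top)\geq 1$ bound and the opaque sign condition on $\sum_{i,j}(\boldsymbol{w}_\alpha-\boldsymbol{w}_\beta)^\top\boldsymbol{x}_iy_i(\boldsymbol{x}_j^\top\boldsymbol{x}_i y_j/y_i - \boldsymbol{x}_i^\top\boldsymbol{x}_i)$ to drive its less direct chain of inequalities.

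The one genuine gap, which you diagnose correctly yourself, is that the implication ``$\|\boldsymbol{r}_\beta\|^2\geq\|\boldsymbol{r}_\alpha\|^2 \Rightarrow \boldsymbol{r}_\beta^\top\boldsymbol{\Sigma}^{-1}\boldsymbol{r}_\beta\geq\boldsymbol{r}_\alpha^\top\boldsymbol{\Sigma}^{-1}\boldsymbol{r}_\alpha$'' fails for general positive-definite $\boldsymbol{\Sigma}$, so exact isotropy of $\boldsymbol{X}^\top\boldsymbol{X}$ is being smuggled into the ``mild'' assumption rather than derived. This is not a defect relative to the paper --- the paper's own assumption effectively imposes $\boldsymbol{X}^\top\boldsymbol{X}=T\boldsymbol{I}$ exactly (a measure-zero event), and its first clause $\lambda_{\min}(\boldsymbol{x}_i\boldsymbol{x}_i^\top)\geq 1$ is in fact vacuous for $d\geq 2$ since $\boldsymbol{x}_i\boldsymbol{x}_i^\top$ is rank one --- so you are not missing anything the paper supplies. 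A genuinely rigorous version would need a quantitative perturbation bound: if $\boldsymbol{X}^\top\boldsymbol{X}/T = c\boldsymbol{I}+\boldsymbol{E}$ with $\|\boldsymbol{E}\|$ small, show that the gap $\|\boldsymbol{r}_\beta\|^2-\|\boldsymbol{r}_\alpha\|^2$ in the hypothesis dominates the reordering that $\boldsymbol{E}$ can induce in the weighted form; neither your proposal nor the paper carries this out.
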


\textbf{Summary of the algorithm-selection mechanism.} 
\ref{lem:dual-x-shift} and Theorem \ref{theo:algo-selection} together elucidate the algorithm-selection mechanism of ICL. According to Lemma \ref{lem:mixed-gaussian-pred}, the ICL prediction of the model pretrained on the mixed Gaussian data will be a reweighted combination of the pretraining task vectors $\boldsymbol{w}_i$. Whether the ratio between the weights of two pretraining tasks, $\tilde{\pi}_\alpha/\tilde{\pi}_\beta$, given a downstream context, exceeds the original ratio $\pi_\alpha/\pi_\beta$ depends on two factors: 1) whether the pretraining input distribution of $\alpha$ is closer to the downstream input distribution than that of $\beta$; 2) whether the task function of $\alpha$ induces lower test error in downstream context than that of $\beta$. When both conditions are met, we have $\tilde{\pi}_\alpha/\tilde{\pi}_\beta \geq \pi_\alpha/\pi_\beta$, indicating that ICL prefers $\alpha$ over $\beta$ in its predictions. We leave the discussions of the advantage of our theory result in Appendix \ref{app:discussion-of-our-theory} and offer an intuitive Bayesian interpretation of the algorithm selection in Appendix \ref{app:bayesian-interpretation}.

\subsection{Empirical Validation of the Algorithm-selection Mechanism of ICL}
\label{sec:exp-algo-selection}
Now we validate our theoretical findings regarding ICL's algorithm-selection mechanism in OOD tasks by conducting numerical experiments following \citet{lin2024dual}. In Figure \ref{fig:algo-selection-mechanism1} and \ref{fig:algo-selection-mechanism2}, the training data is a Gaussian mixture with four components (see Assumption \ref{ass:mixed-gaussian}), while the test function is a two-layer ReLU network (Appendix \ref{app:exp-detail-1}). Both the training and the test data are in ICL format. We compute the test error of using each pretraining task function to predict the downstream function (the first row of Figure \ref{fig:algo-selection-mechanism1} and \ref{fig:algo-selection-mechanism2}), the weights for each pretraining function during ICL inference (the second row), and the test error of the pretrained ICL model with the closed form prediction derived in Lemma \ref{lem:mixed-gaussian-pred} (the third row). We evaluate five different noise levels ($\delta_x=\delta_w\in \{1/81, 9/1,1,9,81\}$) and consider two settings described below.

\textbf{Low-test-error preference of ICL.} To validate Theorem \ref{theo:algo-selection}, we ensure that the distributional distances between the inputs of each training task and the test data remain consistent. Specifically, all $\boldsymbol{x}_i$ in both training and test data are sampled from $\mathcal{N}([0,0,0]^\top,\sigma^2_x\boldsymbol{I})$. The task weights for different pretraining tasks vary, as detailed in the top half of Table \ref{table:algo-selection-exp-setting}. In this setup, only the test error of the pretraining functions influences algorithm selection. From the top two rows of Figure \ref{fig:algo-selection-mechanism1}, we can observe a clear negative correlation between the ICL performance and the test error of the task weight. This result supports Theorem \ref{theo:algo-selection}, confirming that ICL prefers the pretraining functions with a low test error in the downstream context. Also, it's consistent with the observations in Figure \ref{fig:algo-selection}.

\textbf{Similar-input-distribution preference of ICL.} We also empirically validate Lemma \ref{lem:dual-x-shift} in Figure \ref{fig:algo-selection-mechanism2}. In this case, the distributional distances between the input of different pretraining tasks and that of the test context vary: the distances of different tasks are ordered from largest to smallest as $1>2>3>4$, while the test errors of different pretraining functions are almost the same (detailed setup is in the bottom half of Table \ref{table:algo-selection-exp-setting}).  As shown in the bottom two rows in Figure \ref{fig:algo-selection-mechanism2}, the task weight $\tilde{\pi}_i$ is positively correlated with the similarity between the training and testing input distribution. This is consistent with Lemma \ref{lem:dual-x-shift} which demonstrates that ICL prefers to select the pretraining function whose input distribution is close to the downstream one.

\begin{figure}[htbp]
    \centering
    \begin{subfigure}{1\textwidth}
        \includegraphics[width=\textwidth]{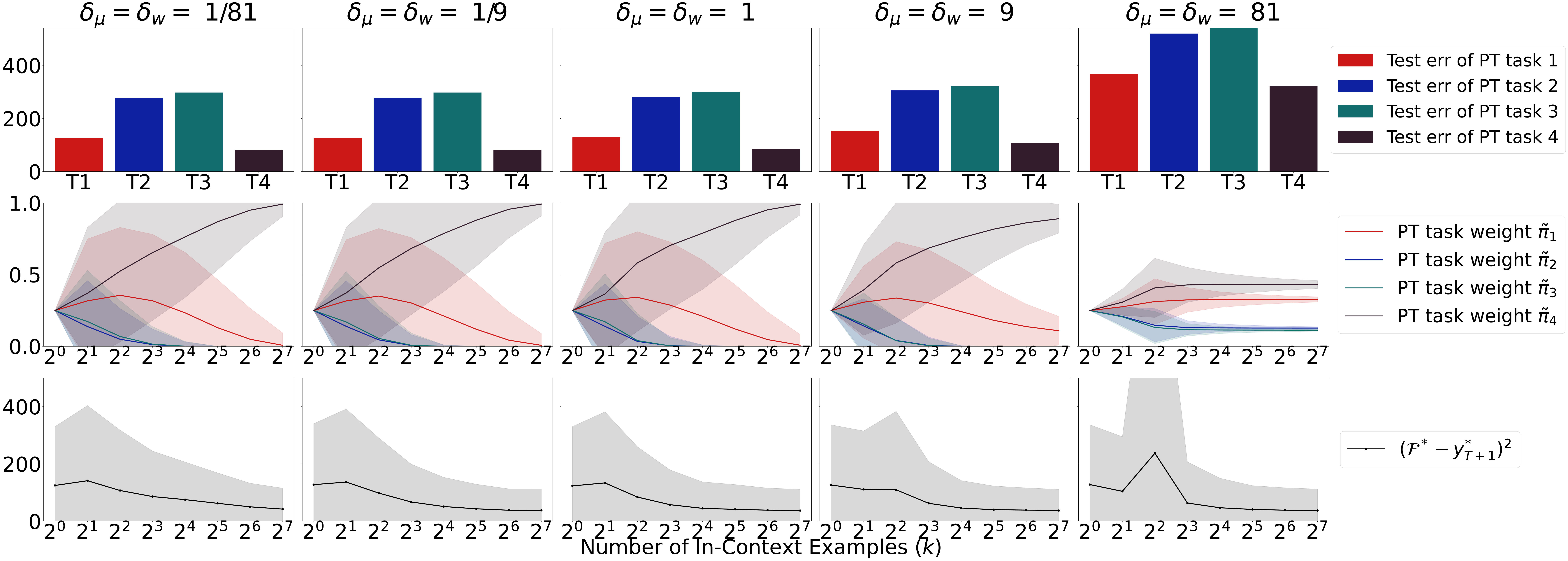}
        \vspace{-5pt}
        \caption{Numerical verification of the low-test-error preference}
        \label{fig:algo-selection-mechanism1}
    \end{subfigure}
    
    \begin{subfigure}{1\textwidth}
        \includegraphics[width=\textwidth]{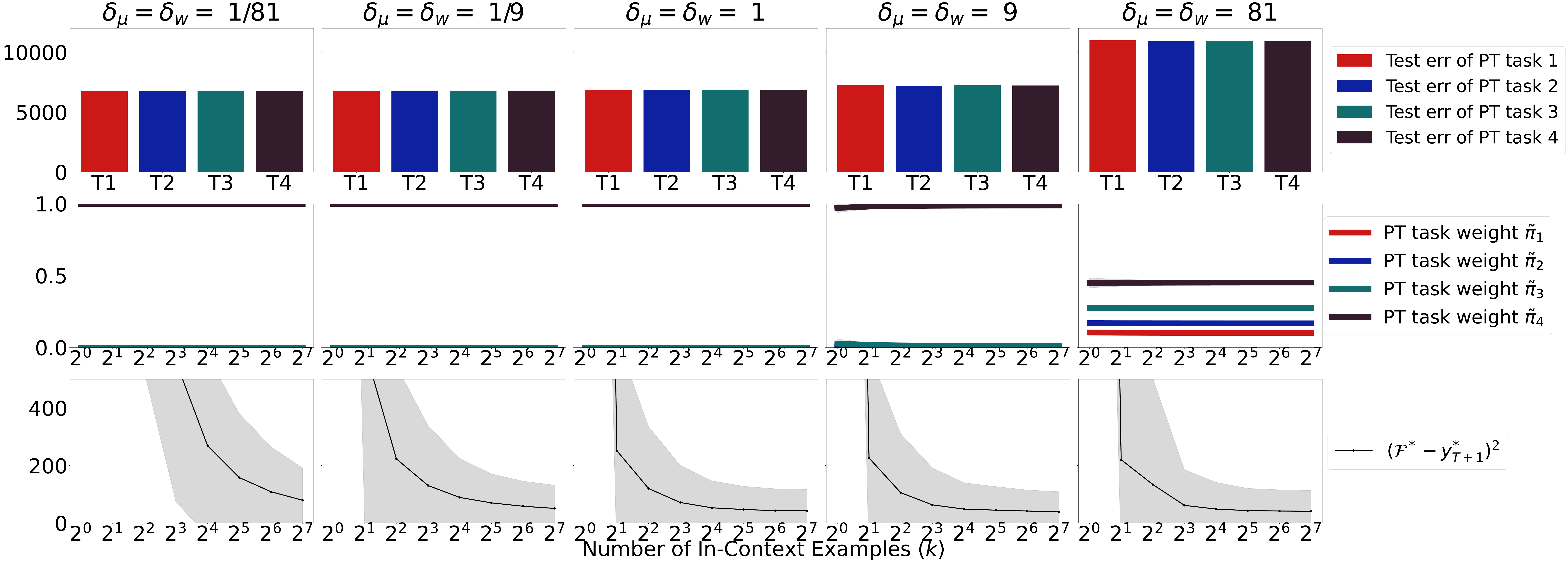}
        \vspace{-5pt}
        \caption{Numerical verification of the similar-input-distribution preference}
        \label{fig:algo-selection-mechanism2}
    \end{subfigure}
    \caption{Empirical validation of the algorithm-selection mechanism of ICL. The first rows: the test error of the four pretraining functions. The mid rows: the weights of each pretraining function in the closed-form downstream ICL prediction (given by Lemma \ref{lem:mixed-gaussian-pred}). The last rows: the test error of the pretrained ICL model with the closed form prediction derived in Lemma \ref{lem:mixed-gaussian-pred}. \textbf{Observations.} 1) In the first two rows of Figure \ref{fig:algo-selection-mechanism1}, the value of the task weight $\tilde{\pi}_i$ is negatively correlated with the test error of pretraining task $i$. 2) In the first two rows of Figure \ref{fig:algo-selection-mechanism2} the task weights are negatively correlated with the distance between the training and testing input distribution.}
    \label{fig:algo-selection-mechanism}
\end{figure}

\begin{table}[htbp]
    \centering
    \caption{Experiment setting of Figure \ref{fig:algo-selection-mechanism1} and Figure \ref{fig:algo-selection-mechanism2}. ``PT" and ``DS" are short for ``pretraining" and ``downstream", respectively.}
    \resizebox{\linewidth}{!}{
        \begin{tabular}{cccccc}
            \toprule
            Experiment & DS inputs & PT task id & PT input distribution & PT task functions  & PT-DS input distance \\
            \midrule
           \multirow{4}{*}{Figure \ref{fig:algo-selection-mechanism1}}& \multirow{4}{*}{$\mathcal{N}([0,0,0]^\top,\sigma^2_x\boldsymbol{I})$}& 1& $\mathcal{N}([0,0,0]^\top,\sigma^2_x\boldsymbol{I})$ & $\mathcal{N}([5,5,5]^\top,\sigma^2_w\boldsymbol{I})$   & 0 \\
           & & 2& $\mathcal{N}([0,0,0]^\top,\sigma^2_x\boldsymbol{I})$ &$\mathcal{N}([-5,5,5]^\top,\sigma^2_w\boldsymbol{I})$   & 0  \\
           & & 3& $\mathcal{N}([0,0,0]^\top,\sigma^2_x\boldsymbol{I})$ &$\mathcal{N}([-5,5,-5]^\top,\sigma^2_w\boldsymbol{I})$   & 0  \\
           & & 4& $\mathcal{N}([0,0,0]^\top,\sigma^2_x\boldsymbol{I})$ &$\mathcal{N}([-5,-5,-5]^\top,\sigma^2_w\boldsymbol{I})$   & 0  \\
            \midrule
           \multirow{4}{*}{Figure \ref{fig:algo-selection-mechanism2}}& \multirow{4}{*}{$\mathcal{N}([-4,-4,-4]^\top,\sigma^2_x\boldsymbol{I})$}& 1& $\mathcal{N}([5,5,5]^\top,\sigma^2_w\boldsymbol{I})$ & $\mathcal{N}([1,1,1]^\top,\sigma^2_w\boldsymbol{I})$  & 15.59 \\
           & & 2&$\mathcal{N}([-5,5,5]^\top,\sigma^2_w\boldsymbol{I})$ &$\mathcal{N}([1,1,1]^\top,\sigma^2_w\boldsymbol{I})$  & 12.77  \\
           & & 3&$\mathcal{N}([-5,5,-5]^\top,\sigma^2_w\boldsymbol{I})$ &$\mathcal{N}([1,1,1]^\top,\sigma^2_w\boldsymbol{I})$  & 9.11  \\
           & & 4&$\mathcal{N}([-5,-5,-5]^\top,\sigma^2_w\boldsymbol{I})$ &$\mathcal{N}([1,1,1]^\top,\sigma^2_w\boldsymbol{I})$   & 1.73  \\
            \bottomrule
        \end{tabular}
    }
    \label{table:algo-selection-exp-setting}
\end{table}

\subsection{Verifying the Algorithm-selection Mechanism on Real-world LLMs}
\label{sec:realLLM-selection}
In this section, we investigate whether real-world LLMs can perform algorithm selection through ICL. To achieve this, we design an ambiguous sentence classification task, in which each sentence can be classified based on one of three aspects: ``sentiment", ``type", or ``location". For each ICL sequence, we select one of the aspects as the classification criterion and map the label words to meaningless strings. For instance, if we choose to classify each sentence according to its sentiment, then ``positive," ``neutral," and ``negative" are mapped to ``RqF," ``IwZ," and ``SdK," respectively. Detailed experimental setups are in Appendix \ref{app:realLLM-selection}. We compute the top-5 accuracy of different classification criteria. The results in Figure \ref{fig:realLLM-selection} show that as the context length increases, the LLM finds the most appropriate criterion, exhibiting the low-test-error preference.

\begin{figure}[htbp]
\vspace{-10pt}
    \centering
    \begin{subfigure}{0.27\textwidth}
        \centering
        \includegraphics[width=\textwidth]{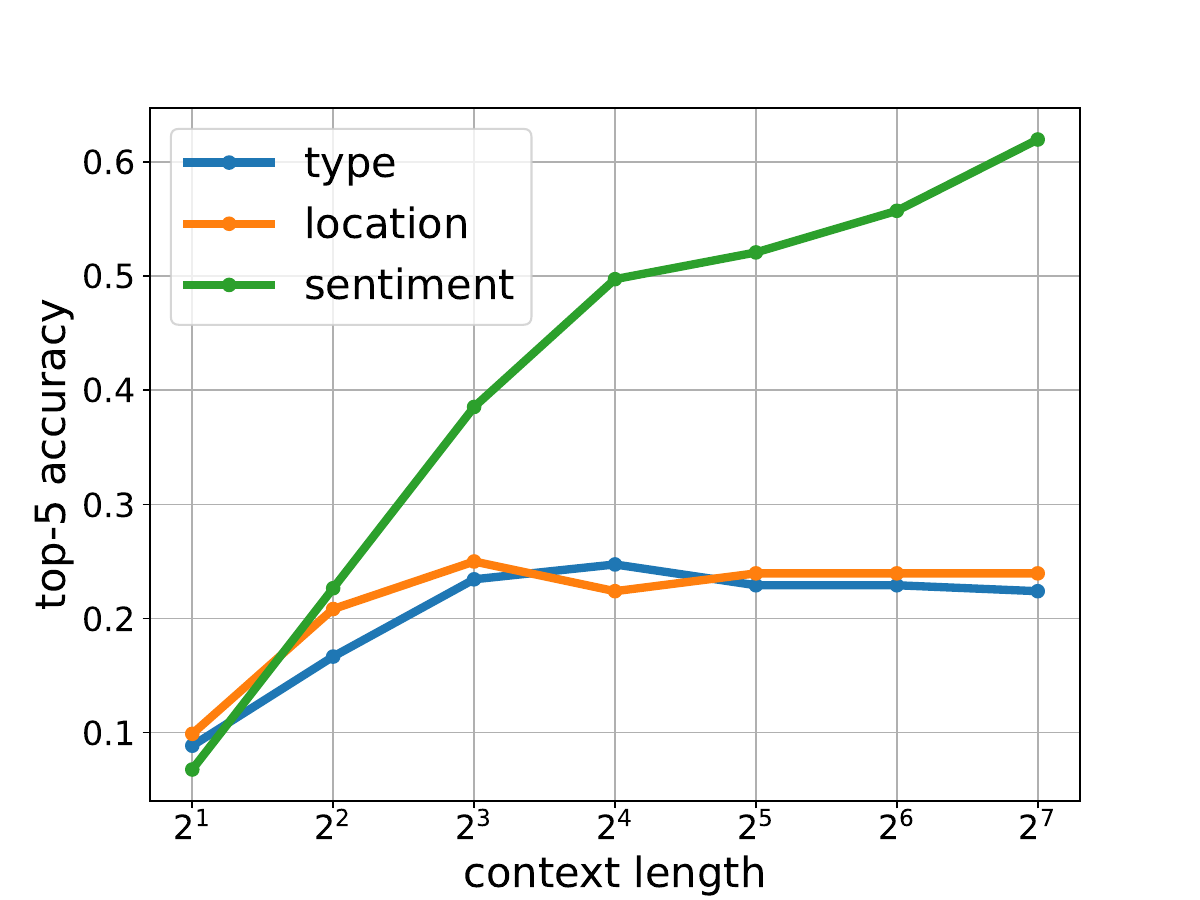}
        \vspace{-5pt}
        \label{fig:realLLM-selection1}
        \caption{Sentiment}
    \end{subfigure}
    \hspace{-5pt}
    \begin{subfigure}{0.27\textwidth}
        \centering
        \includegraphics[width=\textwidth]{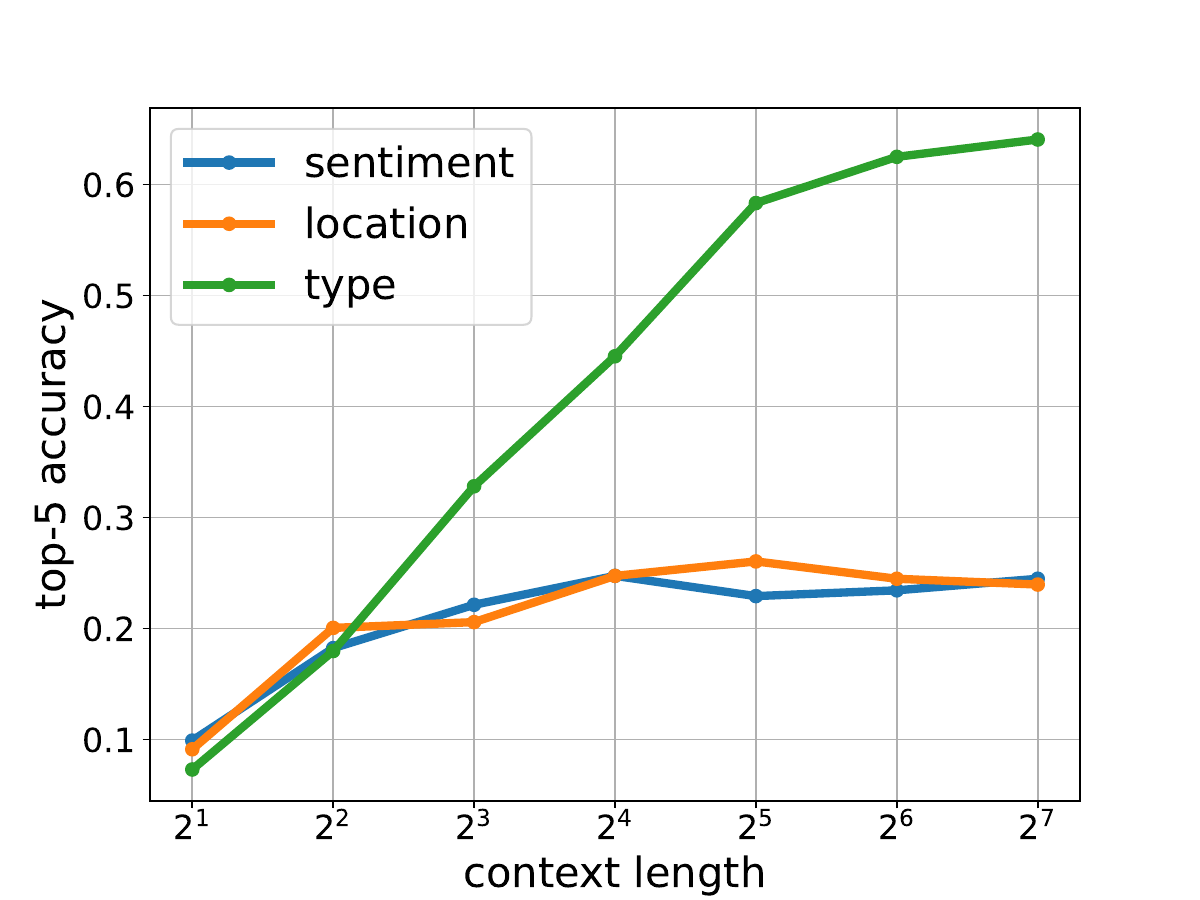}
        \vspace{-5pt}
        \label{fig:realLLM-selection2}
        \caption{Type}
    \end{subfigure}
    \hspace{-5pt}
    \begin{subfigure}{0.27\textwidth}
        \centering
        \includegraphics[width=\textwidth]{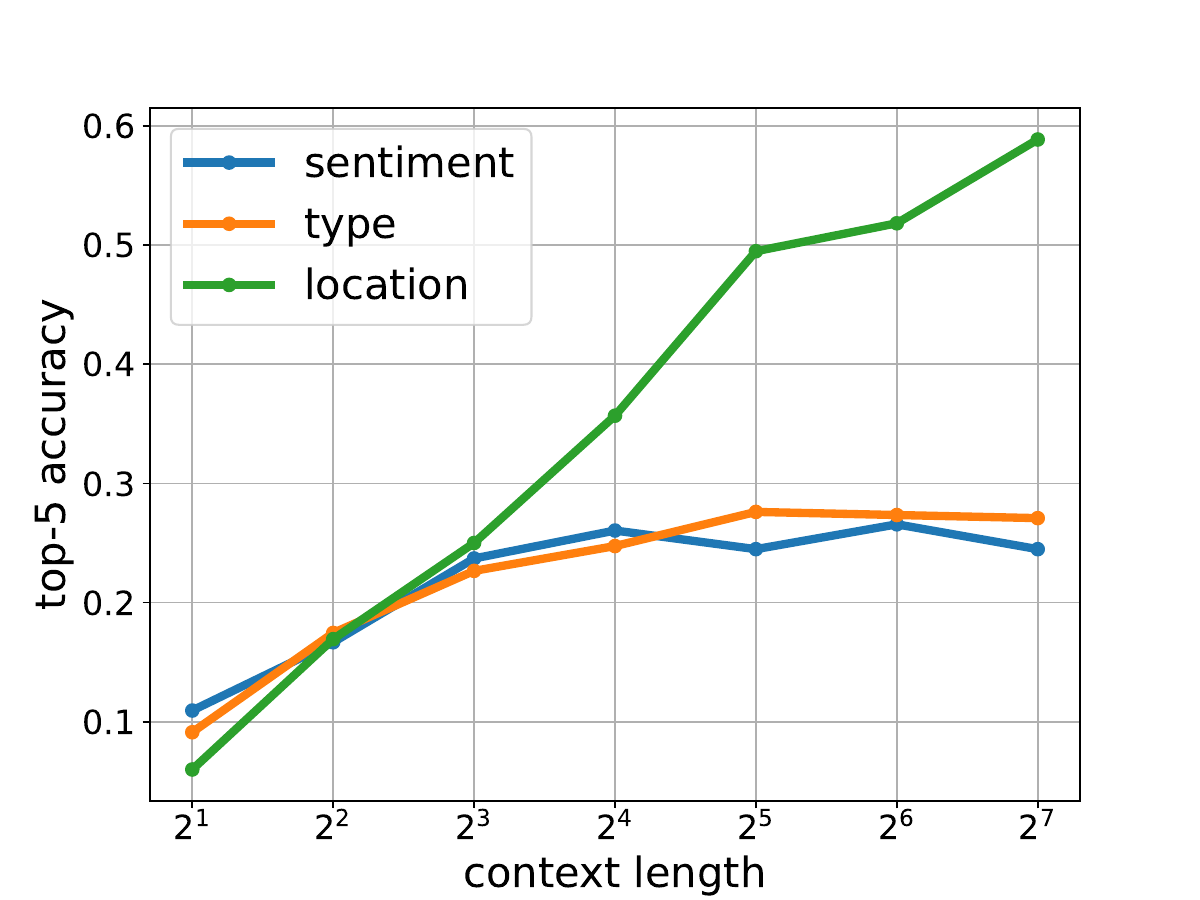}
        \vspace{-5pt}
        \label{fig:realLLM-selection3}
        \caption{Location}
    \end{subfigure}
    \caption{The top-5 accuracy of using (a)``sentiment", (b)``type", or (c)``location" as the classification criterion for in-context examples in a test prompt. The accuracy of using the true underlying criterion to predict is significantly higher than the other two. This suggests that LLMs can perform algorithm selection in natural language tasks.}
    \label{fig:realLLM-selection}
\end{figure}

\begin{takeawaybox}[title=Summary of the Empirical Results \& Connections with the Existing Theories (III)]
    Despite the impressive empirical performance of real-world LLMs in solving seemingly novel tasks through ICL, we observe that when faced with an OOD task, ICL operates by identifying the most suitable pretraining meta-distribution based on test error and input distribution discrepancies, and then attempts to find an optimal solution within that meta-distribution. Notably, this process occurs independently of the downstream test distribution. See Appendix \ref{app:bayesian-interpretation} for the potential connection between such empirical observations and the Bayesian framework work.
    \end{takeawaybox}

\section{Conclusion}

In this work, we empirically find that Transformers struggle to generalize beyond the pretraining function classes when given downstream in-context examples of OOD tasks. Instead, ICL tries to seek a near-optimal solution within the pretraining function classes. We further investigate the widely observed capability of ICL to perform classification. We reveal that it is a composition of ID prediction and retrieval rather than an OOD generalization ability. We also examine ICL’s performance on OOD tasks after pretraining on multiple tasks. Our theoretical and empirical analysis reveals ICL's preference for low-test-error functions, i.e., ICL tends to implement pretraining function classes with low test error in the test context. This finding, alongside previous work \citep{lin2024dual}, highlights two key factors that determine how ICL will implement the prediction function based on the testing context and pretraining tasks: the distance between the training and testing input distributions, and the ability of a pretraining function to solve the test task.

\bibliography{iclr2025_conference}
\bibliographystyle{iclr2025_conference}

\newpage
\appendix

\section{Comparison with Related Works and Additional Discussions}
\subsection{The Capability of ICL to Learn New Tasks}
Besides studies indicating that ICL can learn new weights of linear regression \citep{garg2022can, raventos2024pretraining, zhang2023trained, akyurek2022learning}, other research has found that LLMs can tackle tasks that are unlikely to have been encountered during pretraining. For example, \citet{pan2023context} showed that LLMs perform better than random guessing on classification tasks with meaningless labels. \citet{kossen2024context} demonstrate that ICL can identify authorship based on writing style in private communication messages not included in the pretraining corpus. Additionally, \citet{vacareanu2024words} found that large-scale LLMs can learn various linear and non-linear functions from context. We argue that these findings do not contradict our work. While the LLMs may not have seen exactly the same tasks, there is no guarantee that they haven't encountered tasks from a similar distribution in their pretraining corpus. For instance, the LLMs could have been pretrained on a corpus containing authorship identification tasks or on statistical data encompassing different functions. Our work does not claim that ICL cannot generalize to new task instances; rather, it highlights the limitation in generalizing to an unseen input-label distribution.
Additionally, \citet{yadlowsky2023pretraining} finds that ICL struggles to generalize to testing function classes that are unseen during training (e.g., convex combinations or extreme versions of the pretraining functions). They didn't delve into how ICL behaves on OOD data, while we reveal that it implements the pretraining functions.

\subsection{The Algorithm-selection Mechanism of ICL} Recent works by \citet{bai2024transformers, wang2024context} have uncovered the algorithm selection phenomenon, demonstrating that Transformers pretrained on both linear regression and classification tasks perform well when presented with the context of either task during ICL inference. Theoretically, they show that a Transformer with specific parameters can achieve algorithm selection. \citet{yadlowsky2023pretraining} empirically found that ICL selects the optimal pretraining function class after observing in-context examples from a function class present in the pretraining data mixture. However, the algorithm selection experiments in these studies are limited to scenarios where the test functions are among the training functions. In this work, we empirically and theoretically demonstrate that the algorithm selection phenomenon broadly occurs when given downstream context from arbitrary function classes. To the best of our knowledge, we are the first to reveal the factors that determine the selection process.

\subsection{The Bayesian-optimal Perspective for Understanding ICL}
\label{app:related-work-bayesian}
Many studies have found that ICL makes Bayes-optimal predictions \citep{xie2021explanation, wies2024learnability, zhang2023and, lin2024dual}. However, these works have certain limitations that may reduce their practical applicability in predicting ICL behavior in general scenarios. 1) Limited empirical verification. \cite{wies2024learnability} and \cite{zhang2023and} lack empirical verification of their theory on real deep Transformer models; 2) Limited theoretical settings: in-distribution tasks. \cite{wies2024learnability} assumes the downstream tasks are components of the pretraining distribution; \cite{xie2021explanation} assumes that the latent concept of the test task $\theta^*$ is within the pretraining task set $\Theta$; In \cite{lin2024dual}, the training and testing tasks are all linear regression with weights sampled from Gaussian distribution. 3) Limited implications of the theoretical results: although \cite{xie2021explanation, zhang2023and} prove that ICL can infer a task concept $\theta$ based on the downstream test context $S_{test}$, they don't reveal how $S_{test}$ concretely affects the posterior distribution $P(\theta|S_{test})$ of the latent task concept $\theta$ inferred by the model that determines the downstream ICL prediction, especially when the true downstream task $\theta^*$ is OOD. Our work verifies and extends previous findings to a more general setting by using real deep Transformers and evaluating ICL on OOD tasks that significantly differ from the training tasks. For the first time, we also reveal how the interaction between the downstream distribution and the pretraining distribution affects ICL predictions (see Section \ref{sec:algo-selection}).

In contrast, \cite{raventos2024pretraining} claim that ICL can exhibit non-Bayesian properties. They empirically demonstrate that when given sufficiently diverse pretraining tasks (linear regression vectors), ICL can outperform the Bayesian estimator on a new test distribution. However, the distributional shift in their setup might not be substantial enough to show that ICL can truly adapt to a new downstream distribution, which is considered to be ``non-Bayesian" by \cite{raventos2024pretraining}. In their setting, both the test and training vectors are sampled from the standard Gaussian distribution, and the only source of "distributional shift" comes from the finite size of the training set, which can only partially reflect the test distribution. Our work refutes their findings by showing that when the test distribution is significantly shifted, increasing the number of ID tasks may not help ICL generalize to it.

\subsection{The Bayesian Interpretation for Our Empirical Findings}
\label{app:bayesian-interpretation}
Although current Bayesian theories for ICL are too vacuous to predict the performance of deep Transformers on real OOD tasks (see \ref{app:related-work-bayesian}), the Bayesian framework shows promise as a potential lens for interpreting our empirical findings. Here we provide some intuitive interpretations for the findings in Section \ref{sec:icl-makes-id-predictions}, \ref{sec:abstract-label-learning}, and \ref{sec:algo-selection} from a Bayesian perspective.

Consider the predicted distribution $p_\theta(\boldsymbol{y}_T|\boldsymbol{x}_{1:T})$ given by a pretrained model $\theta$. If we assume that ICL makes Bayesian-like predictions over the test context as \citep{xie2021explanation, wies2024learnability, zhang2023and, lin2024dual} suggested, then the model will first infer a task concept $\phi$ based on the given context $\boldsymbol{x}_{1:T-1}$ and predict $\boldsymbol{y}_T$ using $\phi$ and $\boldsymbol{x}_{1:T}$, i.e.,
\begin{equation}
    p_\theta(\boldsymbol{y}|\boldsymbol{x}_{1:T})=p_\theta(\boldsymbol{y}|\boldsymbol{x}_{1:T},\phi)p_\theta(\phi|\boldsymbol{x}_{1:T-1})
\end{equation}

To explain the results in Section \ref{sec:icl-makes-id-predictions} and Section \ref{sec:algo-selection}: since the true downstream task $\phi^*$ is unseen during pretraining, the inferred posterior distribution $p_\theta(\phi|\boldsymbol{x}_{1:T-1})$ assigns probability mass only to tasks $\phi$ within the pretraining distribution that maximize $p_\theta(\boldsymbol{y}|\boldsymbol{x}_{1:T})$. This accounts for why ICL can only make in-distribution predictions, as shown in Section \ref{sec:icl-makes-id-predictions}, and why ICL prefers pretraining priors with low test error and input distributions similar to those in the test context. Once a task $\phi$ seen during pretraining is identified as best fitting the test context $\boldsymbol{x}_{1:T-1}$, the model refines its predictions based on this context. This refinement corresponds to the factor $p_\theta(\boldsymbol{y}|\boldsymbol{x}_{1:T}, \phi)$, explaining how ICL optimizes predictions within its pretraining distribution.

In Section \ref{sec:abstract-label-learning}, the underlying task concept $\phi$ acts as a similarity metric that allows the model to retrieve examples from the context that align with the query. Training on more abstract labels improves the model's ability to estimate a more accurate $\phi$. When the test task is ID, even with OOD labels, ICL can succeed by leveraging the learned $\phi$ to predict the true label. It accomplishes this by retrieving an example $\boldsymbol{x}_i$ from the context that is similar to the query under the $\phi$ metric. However, when the underlying task $\phi^*$ is OOD, the model fails because the learned similarity metric no longer applies effectively.

\subsection{Discussion of the Setup of Our Theory}
\label{app:discussion-of-our-theory}
Notably, our theoretical result in Section \ref{sec:theory} doesn't assume a Transformer architecture, while previous theoretical works of understanding ICL often adopt Transformers with oversimplified assumptions on their parameters or structures \citep{ahn2024transformers, zhang2023trained, huang2023context, collins2024context}. Additionally, our analysis shows that models pretrained on the ICL tasks can implement algorithm selection during ICL inference following \citet{lin2024dual}. In contrast, prior work on algorithm selection \citep{bai2024transformers} only shows that a specific set of parameters in a simplified ReLU Transformer can enable algorithm selection. However, the parameter construction is complex and somewhat tricky, and there is no theoretical or experimental guarantee that Transformers exhibiting algorithm selection will necessarily implement these parameters.

\section{Additional Experimental Results}
\subsection{Understanding the Effect of Training on More Diverse Retrieval Tasks from the Attention Scores}
\label{app:prefix-matching-score}
To further validate that the retrieval ability is evoked after trained on more random mappings, following \citet{crosbie2024induction}, we construct another retrieval task and visualize the \textit{prefix matching score} of all attention heads of the three pretrained models in Figure \ref{fig:matching-score}. The prefix matching score is calculated by averaging the attention values from each token $t_i$ to the tokens after the same token as $t_i$ in earlier positions in the sequence, which correlates positively with the retrieval performance \citep{singh2024needs}. In Figure \ref{fig:matching-score}, we observe that the model best at the retrieval task in Figure \ref{fig:retrieval} exhibits more heads with high matching scores, further demonstrating it gains the retrieval ability by training on more retrieval sequences.

\begin{figure}[htbp]
    \centering
    \begin{subfigure}{0.35\textwidth}
        \centering
        \includegraphics[width=\textwidth]{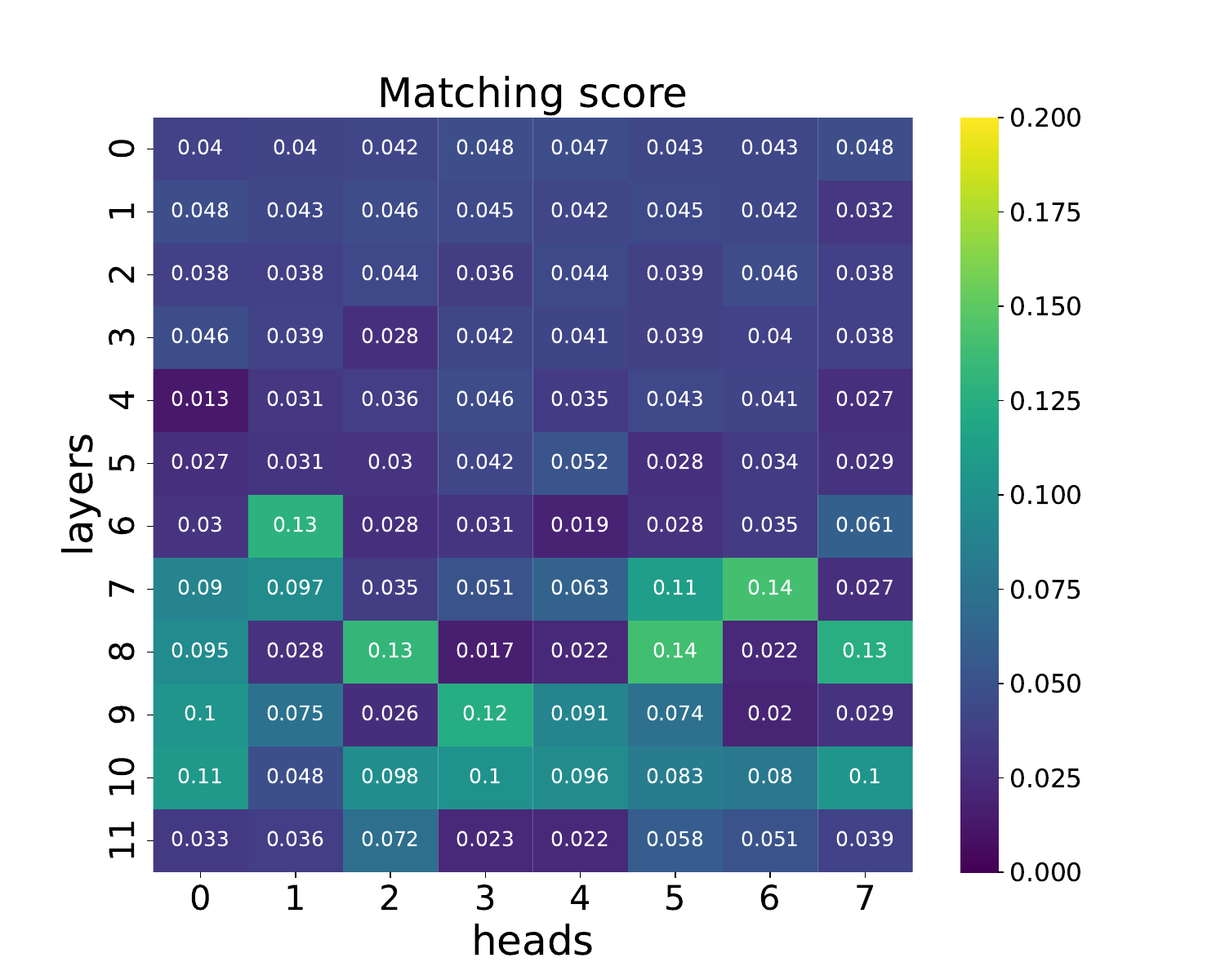}
        \vspace{-5pt}
        \label{fig:induction-head-sub1}
        \caption{PT $s\sim \mathcal{U}(50, 150)$}
    \end{subfigure}
    \hspace{-20pt}
    \begin{subfigure}{0.35\textwidth}
        \centering
        \includegraphics[width=\textwidth]{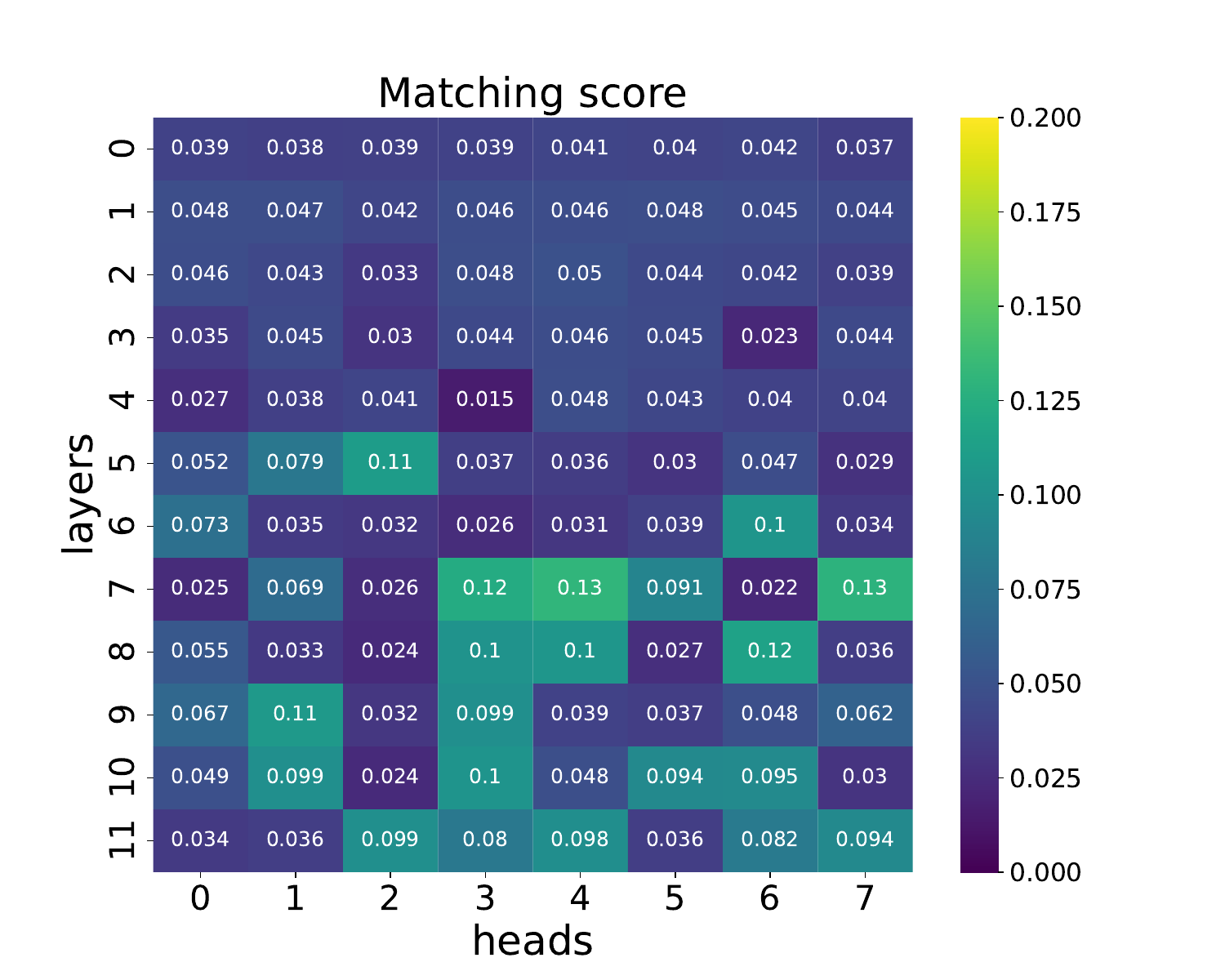}
        \vspace{-5pt}
        \label{fig:induction-head-sub2}
        \caption{PT $s\sim \mathcal{U}(50, 250)$}
    \end{subfigure}
    \hspace{-20pt}
    \begin{subfigure}{0.35\textwidth}
        \centering
        \includegraphics[width=\textwidth]{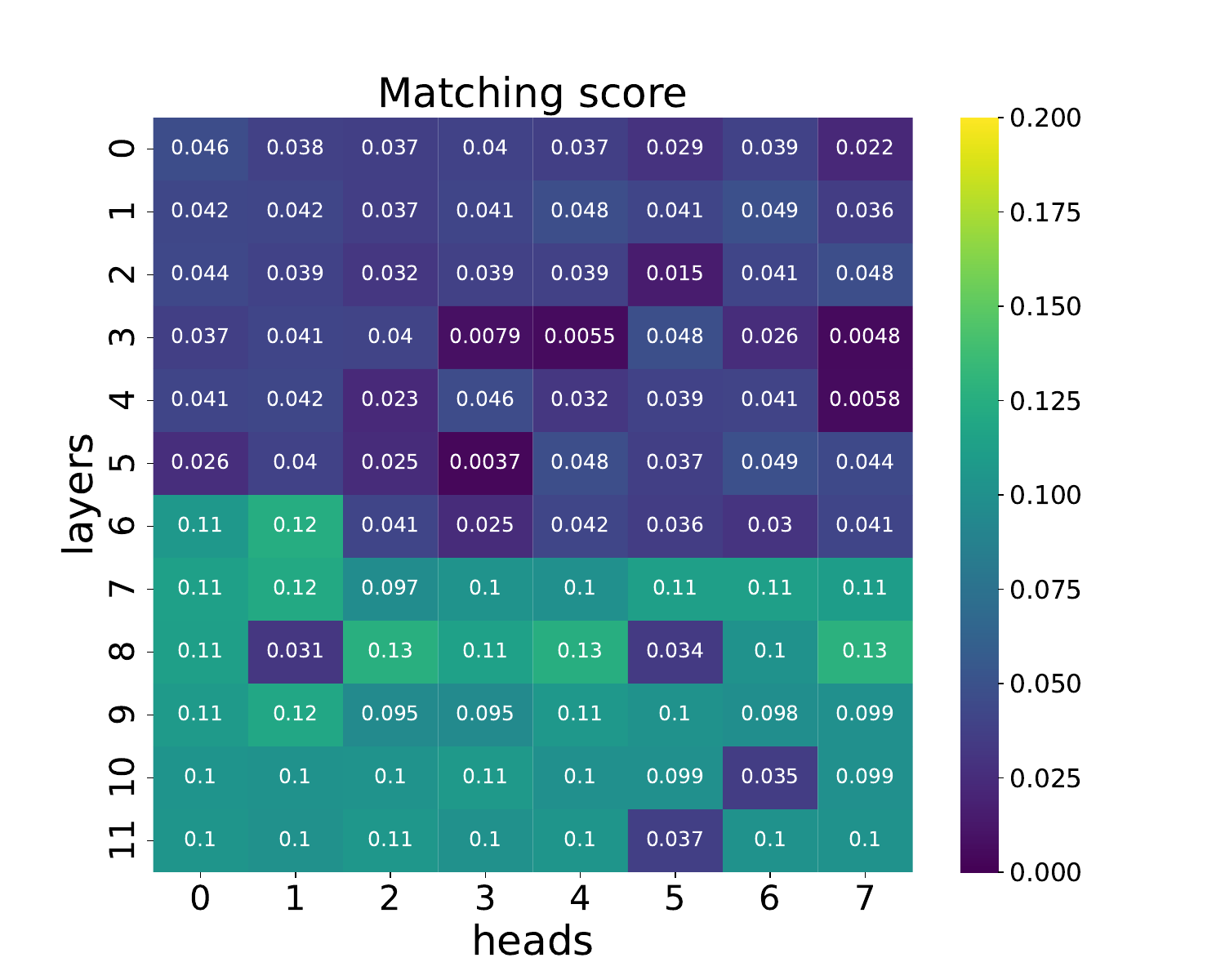}
        \vspace{-5pt}
        \label{fig:induction-head-sub3}
        \caption{PT $s\sim \mathcal{U}(50, 450)$}
    \end{subfigure}
    \caption{The matching score of all attention heads of models trained on the retrieval task. ``PT" denotes ``pretrained on". Each subfigure corresponds to a different pretrained model. The model of (c) exhibits more heads with high matching scores, which is also the most performant model in the retrieval task in Figure \ref{fig:retrieval}.}
    \label{fig:matching-score}
\end{figure}

\subsection{The Synthetic Word Classification is Not That Hard to Solve If It's In Distribution}
\label{app:word-classification-easy}
To show the failure in the synthetic word classification in Section \ref{sec:llama-ood-task} is mainly due to its OOD nature rather than it's intrinsically too hard to learn, we train a GPT-2 to perform the same task as in Section \ref{sec:llama-ood-task}. In this task, the $\boldsymbol{x}_i$ and $\boldsymbol{y}_i$ are generated in the same way as Section \ref{sec:llama-ood-task}. The only modification is that we use a smaller predefined vector embedding $E'\in \mathbb{R}^{10000\times 20}$ ($E_{llama}\in \mathbb{R}^{32000\times 4096}$ in the experiment in Section \ref{sec:llama-ood-task}). The results in Figure \ref{fig:word-classification-gpt2} show that when $\boldsymbol{W}$ has been encountered during pretraining, ICL can well address this task.
\begin{figure}
    \centering
    \includegraphics[width=0.5\linewidth]{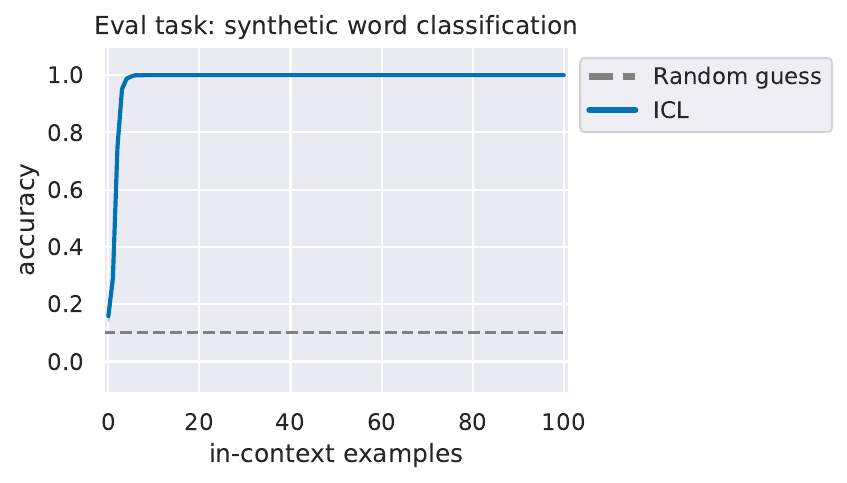}
    \caption{Test error of the GPT-2 trained and evaluated on the same synthetic OOD word classification task as in Section \ref{sec:llama-ood-task}.  }
    \label{fig:word-classification-gpt2}
\end{figure}

\subsection{Evaluating the Synthetic OOD Classification Task on Llama-2-7B}
\label{app:word-classification-llama-2}
We also evaluate Llama-2-7B on the same OOD word classification task and the retrieval task as in Section \ref{sec:llama-ood-task}. Figure \ref{fig:word-classification-llama-2} shows the same observations as in Figure \ref{fig:word-classification-llama-3} that the LLM can well address the retrieval task but fails to learn the OOD function $\boldsymbol{W}$. In this experiment, we set the length of the repeating sequence to be 10. We can observe that the accuracy of retrieval rapidly increases after seeing 10 in-context examples. This demonstrates that learning novel functions from the context is challenging for real-world pretrained LLMs, but the LLMs are good at retrieving.

\begin{figure}[htbp]
    \centering
    \begin{subfigure}{0.4\textwidth}
        \centering
        \includegraphics[width=\textwidth]{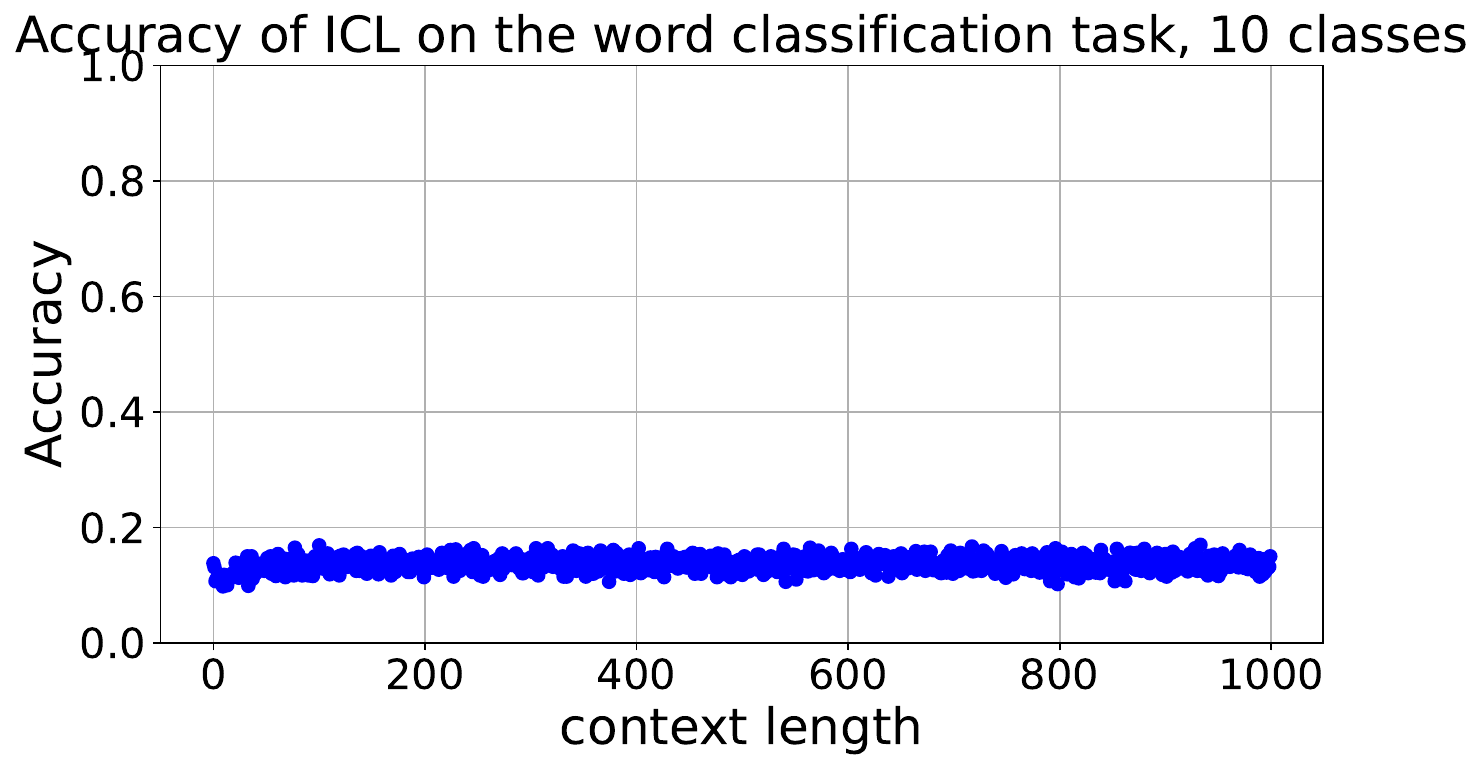}
        \label{fig:llama-word-classification}
        \vspace{-8pt}
        \caption{}
        \vspace{-5pt}
    \end{subfigure}
    \begin{subfigure}{0.4\textwidth}
        \centering
        \includegraphics[width=\textwidth]{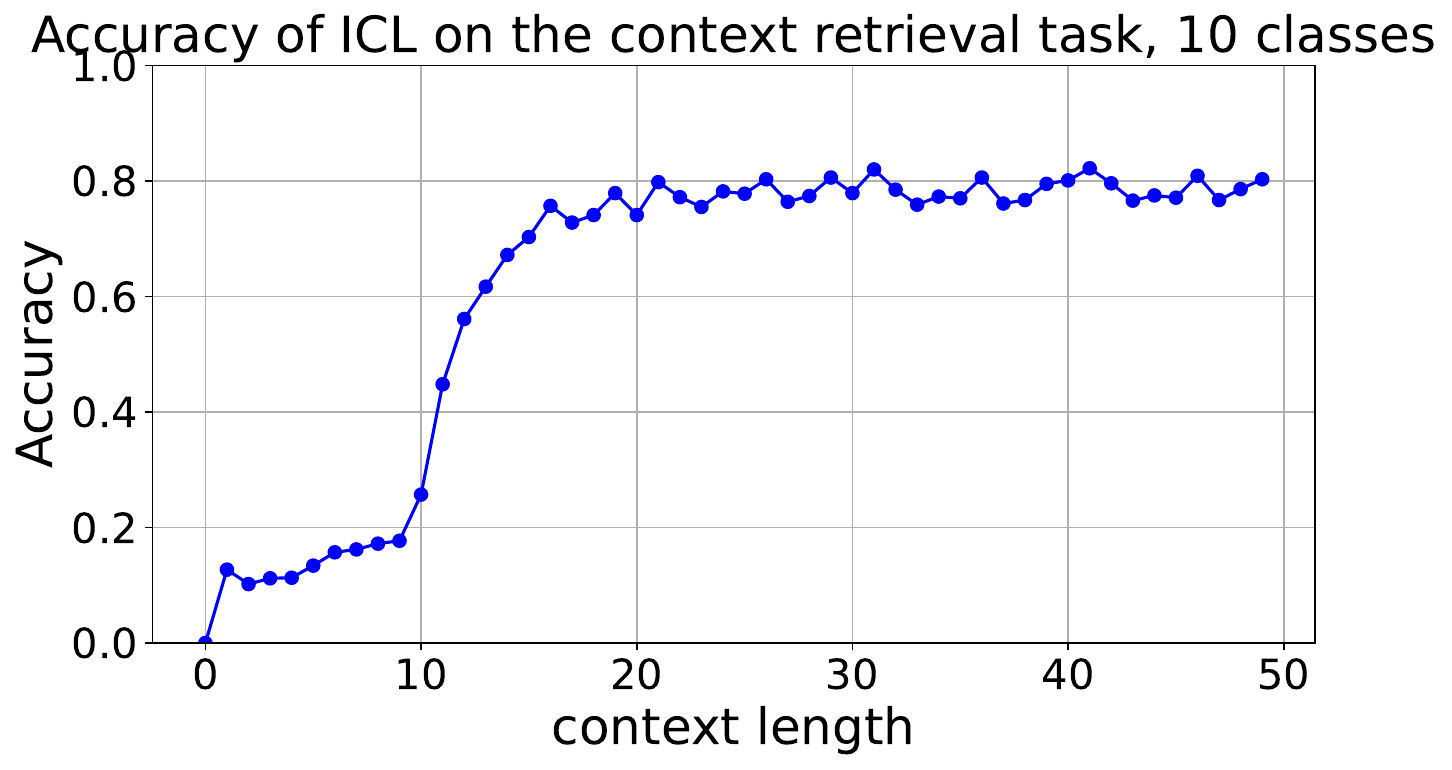}
        \label{fig:llama-word-retrieval}
        \vspace{-8pt}
        \caption{}
        \vspace{-5pt}
    \end{subfigure}
    \caption{The ICL accuracy of Llama-2-7B on the synthetic tasks. (a) the synthetic word classification task. (b) the synthetic word retrieval task.}
    \label{fig:word-classification-llama-2}
\end{figure}

\subsection{Will Generalization Capabilities Emerge from Increasing the Number of Training Tasks?}
\label{sec:task-diversity}
Recent work by \cite{raventos2024pretraining} empirically demonstrates that when both the training and test tasks are linear regression, and the number of training vectors exceeds a certain "task diversity threshold" (approximately $2^{14} \sim 2^{15}$), ICL can generalize from a finite training set sampled biasedly from $\mathcal{N}(0,1)$ to the test distribution $P_{\text{test}} = \mathcal{N}(0,1)$ (see Appendix \ref{app:related-work-bayesian} for details). We investigate whether similar phenomena persist for test tasks with larger distributional shifts. We train models using varying numbers of linear regression vectors and evaluate them on quadratic and ReLU neural network regression tasks. In Figure \ref{fig:task-diversity}, we find that training on a vast number of in-distribution (ID) functions does not yield any improvements, providing further evidence that ICL may struggle to achieve OOD generalization.

\begin{figure}[htbp]
    \centering
    \begin{subfigure}{0.29\textwidth}
        \centering
        \includegraphics[width=\textwidth]{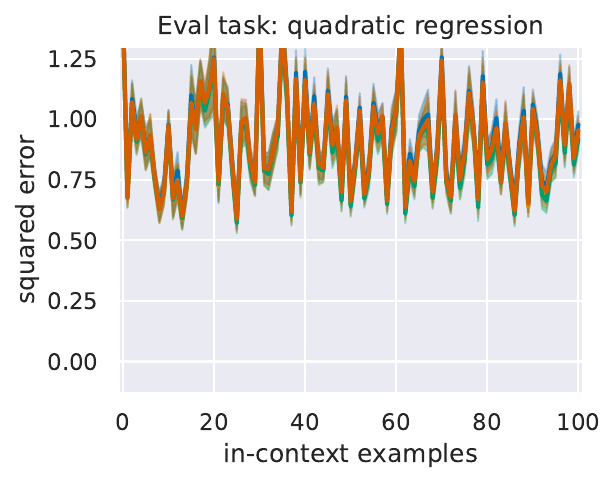}
        \vspace{-6pt}
        \label{fig:task-diversity1}
        \caption{quadratic regression}
    \end{subfigure}
    \begin{subfigure}{0.44\textwidth}
        \centering
        \includegraphics[width=\textwidth]{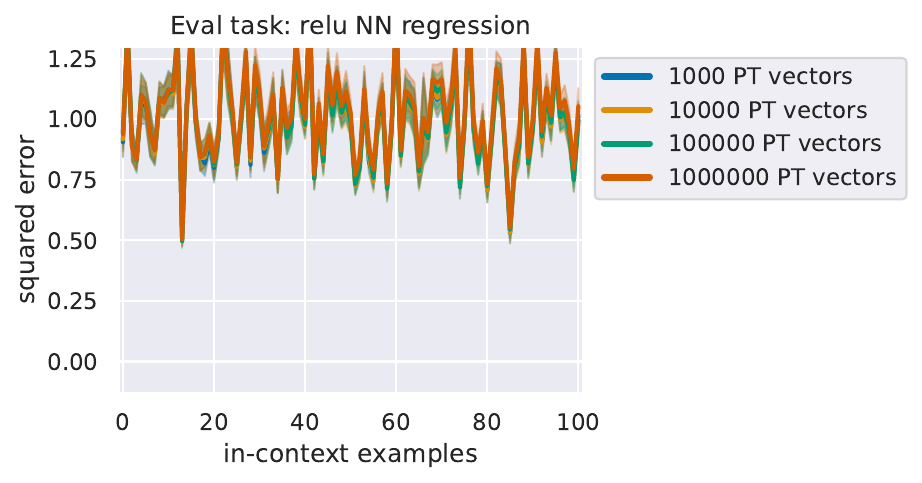}
        \vspace{-6pt}
        \label{fig:task-diversity2}
        \caption{ReLU NN regression}
    \end{subfigure}
    \caption{The ICL test error of models trained on different numbers of linear regression vectors. Even if the number of training vectors (up to $1,000,000\approx 2^{20}$) surpasses the threshold ($2^{14}\sim 2^{15}$) reported by \cite{raventos2024pretraining}, no model exhibits generalization to OOD function classes. }
    \label{fig:task-diversity}
\end{figure}

\section{Experimental Details}
\label{app:exp-detail}
\subsection{Experimental Details in Section \ref{subsec:main} and Section \ref{sec:exp-algo-selection}}
\label{app:exp-detail-1}
\textbf{Definitions of the function classes.} The function classes in Figure \ref{fig:main} and Figure \ref{fig:algo-selection} are: 
\begin{itemize}
    \item Linear regression: $y_i = \boldsymbol{w}^\top \boldsymbol{x}_i$, where $\boldsymbol{w}$, $\boldsymbol{x}_i\in \mathbb{R}^d$ and $\boldsymbol{w}$, $\boldsymbol{x}_i\sim \mathcal{N}(0,1)$.
    \item Quadratic regression: $y_i = \boldsymbol{w}^\top (\boldsymbol{x}_i)^2$, where $\boldsymbol{w}$, $\boldsymbol{x}_i\in \mathbb{R}^d$ and $\boldsymbol{w}$, $\boldsymbol{x}_i\sim \mathcal{N}(0,1)$, $(\boldsymbol{x}_i)^2$ denotes the element-wise square of $\boldsymbol{x}_i$.
    \item 2-layer ReLU network regression: $y_i=\boldsymbol{w}_1^\top \text{ReLU}(\boldsymbol{w}_2 \boldsymbol{x}_i)$, where $\boldsymbol{w}_1\in \mathbb{R}^{d'}$, $\boldsymbol{w}_2\in \mathbb{R}^{d'\times d}$, and $\boldsymbol{x}_i\in \mathbb{R}^d$. $\boldsymbol{w}_1$, $\boldsymbol{w}_2$, $\boldsymbol{x}_i\sim \mathcal{N}(0,1)$.
    \item Square root linear regression:  $y_i = \boldsymbol{w}^\top \sqrt{\boldsymbol{x}_i}$, where $\boldsymbol{w}$, $\boldsymbol{x}_i\in \mathbb{R}^d$ and $\boldsymbol{w}$, $\boldsymbol{x}_i\sim \mathcal{N}(0,1)$, $(\boldsymbol{x}_i)^2$ denotes the element-wise square root of $\boldsymbol{x}_i$.
    \item Cubic linear regression: $y_i = \boldsymbol{w}^\top (\boldsymbol{x}_i)^3$, where $\boldsymbol{w}$, $\boldsymbol{x}_i\in \mathbb{R}^d$ and $\boldsymbol{w}$, $\boldsymbol{x}_i\sim \mathcal{N}(0,1)$, $(\boldsymbol{x}_i)^2$ denotes the element-wise cube of $\boldsymbol{x}_i$.
    \item Linear+quadratic regression: $y_i = \boldsymbol{w}_1^\top (\boldsymbol{x}_i)^2 + \boldsymbol{w}_2^\top \boldsymbol{x}_i$, where $\boldsymbol{w}_1$, $\boldsymbol{w}_2$, $\boldsymbol{x}_i\in \mathbb{R}^d$ and $\boldsymbol{w}_1$, $\boldsymbol{w}_2$, $\boldsymbol{x}_i\sim \mathcal{N}(0,1)$.
    \item 2-layer Sigmoid network: $y_i=\boldsymbol{w}_1^\top \text{Sigmoid}(w_2 \boldsymbol{x}_i)$, where $\boldsymbol{w}_1\in \mathbb{R}^{d'}$, $\boldsymbol{w}_2\in \mathbb{R}^{d'\times d}$, and $\boldsymbol{x}_i\in \mathbb{R}^d$. $\boldsymbol{w}_1$, $\boldsymbol{w}_2$, $\boldsymbol{x}_i\sim \mathcal{N}(0,1)$.
\end{itemize}

\textbf{Baseline models in Figure \ref{fig:main}.} The models of each pretraining hypothesis class are implemented by training a neural network that yields functions of that hypothesis class. For example, a linear regression weight $w$ can be implemented by a single linear layer. The models are optimized using SGD with learning rate 1e-3 for 1000 steps.

\subsection{Experimental Details for Section \ref{sec:realLLM-reverse} }
\label{app:realLLM-reverse}
For the reversed-label experiment, we choose four tasks: Antonym, Capital-country, English-French, and English-German. The original datasets are adopted from \cite{todd2023function}. The top-1 accuracy is computed as follows: compute the top-1 accuracy for each token predicted by the model, based on the token length of the ground-truth label word. For each context length, we compute the average accuracy over 128 test examples.

\subsection{Experimental Details for Section \ref{subsec:abstract-label-learning}}
\label{app:exp-detail-2}
We now provide additional details regarding the experiments of Figure \ref{fig:matching-score}
. Following \citet{crosbie2024induction}, we generated a dataset consisting of 100 sequences of random tokens, each containing repeated sub-sequences. The task is to predict the next token that follows the last token in each sequence. This task can only be completed by retrieving the last token from the context and predicting its subsequent token.

\subsection{Experimental Details for Section \ref{sec:llama-ood-task}}
\label{app:llama-ood-task}
We uniformly sample 1000 word vectors $\boldsymbol{x}_i\in \mathbb{R}^{d}$ from the word embedding $E\in \mathbb{R}^{N\times d}$ of the pretrained Llama-3-8B, where $N=128256$ and $d=4096$. Then we sample a task weight $\boldsymbol{W}\in \mathbb{R}^{d'\times C}$ from standard Gaussian distribution that only takes the first $d'$ dimensions of $\boldsymbol{x}_i$ (denoted as $\boldsymbol{x}_i[:d']$) to compute a probability distribution over $C$ classes: $p_i = \boldsymbol{x}_i[:d']^\top \boldsymbol{W} \in \mathbb{R}^{C}$. Next, we set the label vectors $\boldsymbol{y}_i = E_{\arg \max_j p_i[j] + s }\in \mathbb{R}^{d}$, where $s=10000$ is a offset. We set $d'=30\ll d=4096$ to reduce the complexity of the task. Hence, $\boldsymbol{x}_i$ are classified into $C$ labels words $E[s:s+C]$. The predicted token of $\boldsymbol{x}_i$ is computed as: $ \arg \max_j \hat{p}_i[j],  ~ j\in \{s,s+1,...,s+C-1\} $, where $\hat{p}_i$ is the output of the last linear layer of Llama-3-8B given $\boldsymbol{x}_i$.

\subsection{Experimental Details for Section \ref{sec:realLLM-selection}}
\label{app:realLLM-selection}
In this section, we present some details about the setups for the ambiguous classification task. The label mapping rule is presented in Table \ref{table:realLLM-selection-setting}. For each context length, we compute the average accuracy over 128 test examples. 

\begin{table}[htbp]
    \centering
    \caption{Experiment setting of Figure \ref{fig:algo-selection-mechanism1} and Figure \ref{fig:algo-selection-mechanism2}. ``PT" and ``DS" are short for ``pretraining" and ``downstream", respectively.}
    \resizebox{0.7\linewidth}{!}{
        \begin{tabular}{ccc}
            \toprule
            Classification criterion & Original labels & Labels presented in the context \\
            \midrule
           \multirow{3}{*}{sentiment}& ``positive" & ``RqF"\\
            & ``neutral" & ``IwZ" \\
            & ``negative" & ``SdK" \\
            \midrule
            \multirow{3}{*}{type}& ``science" & ``RqF"\\
            & ``sports" & ``IwZ" \\
            & ``arts" & ``SdK" \\
            \bottomrule
            \multirow{3}{*}{location}& ``Asia" & ``RqF"\\
            & ``Europe" & ``IwZ" \\
            & ``Africa" & ``SdK" \\
            \bottomrule
        \end{tabular}
    }
    \label{table:realLLM-selection-setting}
\end{table}

\textbf{Prompt examples.}  Here we present some in-context examples of the input prompt of using different classification criteria.

\begin{promptbox}[title=Using ``sentiment" as the classification criterion.]
Q: The groundbreaking discovery made by Japanese scientists has revolutionized renewable energy. 

A: RqF gray{\textit{\# Original label: ``positive"}}

\vspace{\baselineskip} 

Q: A chess championship occurred in Russia, featuring players from around the continent.

A: IwZ gray{\textit{\# Original label: ``neutral"}}

\end{promptbox}

\begin{promptbox}[title=Using ``type" as the classification criterion.]
Q: A regional basketball league was formed in Kenya to promote the sport locally.

A: IwZ gray{\textit{\# Original label: ``sports"}}

\vspace{\baselineskip} 

Q: The breathtaking architectural exhibition in Dubai left visitors absolutely awestruck.

A: SdK gray{\textit{\# Original label: ``arts"}}

\end{promptbox}

\begin{promptbox}[title=Using ``location" as the classification criterion.]
Q: A scientific paper from Finland explores new methodologies in data analysis.

A: IwZ gray{\textit{\# Original label: ``Europe"}}

\vspace{\baselineskip} 

Q: An astronomy workshop was conducted in Ethiopia for students interested in space.

A: SdK gray{\textit{\# Original label: ``Africa"}}

\end{promptbox}

\textbf{Accuracy computation.} For a given label, the method to calculate top-5 accuracy is as follows: compute the top-5 accuracy for each token predicted by the model, based on the token length of the ground-truth label word. For a classification criterion other than the one selected in the current sequence, to verify whether the model's prediction distribution across all test samples approaches the label distribution under that criterion, we select the permutation among all possible mappings between original labels and meaningless strings that yields the highest model prediction accuracy to compute the accuracy.

\section{Existing Theoretical Evidence Supporting that ICL Makes ID Predictions}

One recent work \citep{zhang2023trained} theoretically proved that a one-layer linear self-attention model (LSA, defined in Appendix \ref{app:theo-detail-imple-LSA}) pretrained on a linear regression task will still implement the linear predictor given downstream in-context examples of arbitrary new function classes, under some assumptions on the initialization of the Transformer weight matrices. We restate the Theorem 4.2 of \citet{zhang2023trained} as Lemma \ref{lem:imple-LSA} below:
\begin{lemma}
\label{lem:imple-LSA} 
    (Theorem 4.2 of \citet{zhang2023trained}, informal) Let $\mathcal{D}$ be a distribution over $(\boldsymbol{x}, y) \in \mathbb{R}^d \times \mathbb{R}$, whose marginal distribution on $x$ is $\mathcal{D}_x=$ $\mathcal{N}(0, \Lambda)$. Assume the test prompt is of the form $P=\left(\boldsymbol{x}_1, y_1, \ldots, \boldsymbol{x}_T, y_T, \boldsymbol{x}_{\text {query }}\right)$, where $\left(\boldsymbol{x}_i, y_i\right),\left(\boldsymbol{x}_{\text {query }}, y_{\text {query }}\right) \stackrel{\text { i.i.d. }}{\sim} \mathcal{D}$. The prediction risk on the test query $y_{query}$ of an arbitrary task satisfies:
$$
\mathbb{E}\left(\widehat{y}_{\text {query }}-y_{\text {query }}\right)^2  =\underbrace{\min _{\boldsymbol{w} \in \mathbb{R}^d} \mathbb{E}\left(\left\langle \boldsymbol{w}, \boldsymbol{x}_{\text {query }}\right\rangle-y_{\text {query }}\right)^2}_{\text {Error of best linear predictor }}  + const,
$$
where $const$ is a constant depending on the downstream context, and the expectation is over $\left(\boldsymbol{x}_i, y_i\right),\left(\boldsymbol{x}_{\text {query }}, y_{\text {query }}\right) \stackrel{\text { i.i.d. }}{\sim} \mathcal{D}$.
\end{lemma}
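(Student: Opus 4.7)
The plan is to exploit the defining structural property of a one-layer linear self-attention (LSA) model trained on linear regression data: after pretraining, the model's prediction on an arbitrary downstream prompt $P=(x_1,y_1,\dots,x_T,y_T,x_{\text{query}})$ is linear in the query, i.e.\ there exists $\hat{w}_{\text{icl}}(P)$ --- depending only on the in-context pairs $\{(x_i,y_i)\}_{i=1}^T$ and on the pretrained weight matrices --- such that $\hat{y}_{\text{query}} = \langle \hat{w}_{\text{icl}}(P), x_{\text{query}}\rangle$. I would first establish this linear-in-$x_{\text{query}}$ form by tracing through the LSA forward pass under the commuting/merged parameterization that the pretraining assumptions enforce, so that the outer attention product collapses into a scalar prediction of the form $x_{\text{query}}^\top \bigl(\tfrac{1}{T}\sum_i y_i \Gamma x_i\bigr)$ for some positive semidefinite $\Gamma$ obtained from the trained weights. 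Once this linearity is in hand, the rest of the argument does not need any further properties of the architecture.

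Next, I would introduce the best linear predictor for the test distribution, $w^\star=\arg\min_{w\in\mathbb{R}^d}\mathbb{E}_{\mathcal{D}}(\langle w,x\rangle - y)^2$, and the induced residual $\varepsilon := y-\langle w^\star, x\rangle$, which satisfies the normal-equation orthogonality $\mathbb{E}[x\varepsilon]=0$ by first-order optimality of $w^\star$. Using the linearity of the ICL prediction, I would decompose
\begin{equation*}
\hat{y}_{\text{query}}-y_{\text{query}} \;=\; \langle \hat{w}_{\text{icl}}(P)-w^\star,\; x_{\text{query}}\rangle \;-\; \varepsilon_{\text{query}},
\end{equation*}
square, and take the joint expectation over the i.i.d.\ sample $\{(x_i,y_i)\}_{i=1}^T,(x_{\text{query}},y_{\text{query}})$. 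The cross term vanishes because $\hat{w}_{\text{icl}}(P)$ is measurable with respect to the context, which is independent of the query pair, so conditioning on the context gives
\begin{equation*}
\mathbb{E}\bigl[\langle \hat{w}_{\text{icl}}-w^\star, x_{\text{query}}\rangle\,\varepsilon_{\text{query}}\bigr] \;=\; \mathbb{E}\bigl[\langle \hat{w}_{\text{icl}}-w^\star,\, \mathbb{E}[x_{\text{query}}\varepsilon_{\text{query}}]\rangle\bigr] \;=\; 0.
\end{equation*}

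This leaves $\mathbb{E}(\hat{y}_{\text{query}}-y_{\text{query}})^2 = \mathbb{E}\langle \hat{w}_{\text{icl}}-w^\star, x_{\text{query}}\rangle^2 + \mathbb{E}\varepsilon_{\text{query}}^2$. The second term is exactly $\min_{w}\mathbb{E}(\langle w,x_{\text{query}}\rangle - y_{\text{query}})^2$, the error of the best linear predictor. The first term equals $\mathbb{E}\bigl[(\hat{w}_{\text{icl}}-w^\star)^\top \Lambda (\hat{w}_{\text{icl}}-w^\star)\bigr]$ (using $x_{\text{query}}\sim\mathcal{N}(0,\Lambda)$ and independence from the context), a quantity determined entirely by the context distribution, pretrained covariance structure, and $w^\star$ --- but \emph{not} by the conditional law $y_{\text{query}}\mid x_{\text{query}}$ beyond its best linear approximation. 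This gives the claimed decomposition with that term playing the role of $\text{const}$.

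The main obstacle is the first step: pinning down the pretrained LSA's parameter structure precisely enough to guarantee that $\hat{y}_{\text{query}}$ truly factors as $\langle \hat{w}_{\text{icl}}(P), x_{\text{query}}\rangle$ with $\hat{w}_{\text{icl}}$ depending only on the in-context examples. Quadratic-in-$x_{\text{query}}$ cross-terms would ruin the orthogonality argument, so the whole reduction hinges on the initialization/commuting assumptions that force the trained LSA into a ridge-regression-like form. Once that structural lemma (borrowed from the pretraining analysis) is available, the remaining MSE decomposition is a short orthogonality computation, and the informal ``$\text{const}$'' emerges cleanly as the excess risk of the context-derived linear estimator relative to $w^\star$.
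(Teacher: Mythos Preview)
The paper does not contain its own proof of this lemma: it is explicitly a restatement of Theorem~4.2 of \citet{zhang2023trained}, and the surrounding appendix only records the setup (the LSA model in \eqref{eq:LSA}, the initialization Assumption~\ref{ass:imple-LSA}, and the pretraining objective \eqref{eq:LSA-training-objective}) without reproducing any derivation. Your proposed route---first extracting from the trained LSA structure that $\hat{y}_{\text{query}}=\langle \hat{w}_{\text{icl}}(P),\,x_{\text{query}}\rangle$ with $\hat{w}_{\text{icl}}$ measurable with respect to the context alone, then performing the Pythagorean MSE decomposition using $\mathbb{E}[x_{\text{query}}\varepsilon_{\text{query}}]=0$ and independence of the query from the context---is the standard and correct argument for this statement, and there is nothing in the paper to compare it against beyond the imported setup you already identified as the one nontrivial ingredient.
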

Lemma \ref{lem:imple-LSA} serves as a shred of theoretical evidence that ICL can just implement the pretraining function class, while the role of the context examples is to optimize the model within the pretraining hypothesis space.

\label{app:theo-detail-imple-LSA}
Below, we provide the necessary details of the theoretical setting of \citet{zhang2023trained}.

The linear self-attention (LSA) model considered in the Theorem 4.2 of \cite{zhang2023trained} (Lemma \ref{lem:imple-LSA}) is defined as follows:
\begin{equation}
    \label{eq:LSA}
    f_{\mathrm{LSA}}(E ; \theta)=E+W^{P V} E \cdot \frac{E^{\top} W^{K Q} E}{N},
\end{equation}
where $E$ is the input embedding defined as follows:
\begin{equation}
    E=E(P)=\left(\begin{array}{ccccc}
\boldsymbol{x}_1 & \boldsymbol{x}_2 & \cdots & \boldsymbol{x}_N & \boldsymbol{x}_{\text {query }} \\
y_1 & y_2 & \cdots & y_N & 0
\end{array}\right) \in \mathbb{R}^{(d+1) \times(N+1)}.
\end{equation} $W^{PV}$ is obtained by merging the projection and value matrices into a single matrix, and $W^{KQ}$ is attained by merging the query and key matrices into a single matrix. $N$ is the context length.

Now we present the assumption on the attention weights of the linear-attention model in Lemma \ref{lem:imple-LSA}.
\begin{assumption}
\label{ass:imple-LSA}
    (Assumption 3.3 in \cite{zhang2023trained}, initialization). Let $\sigma>0$ be a parameter, and let $\Theta \in \mathbb{R}^{d \times d}$ be any matrix satisfying $\left\|\Theta \Theta^{\top}\right\|_F=1$ and $\Theta \Lambda \neq 0_{d \times d}$. We assume

$$
W^{P V}(0)=\sigma\left(\begin{array}{cc}
0_{d \times d} & 0_d \\
0_d^{\top} & 1
\end{array}\right), \quad W^{K Q}(0)=\sigma\left(\begin{array}{cc}
\Theta \Theta^{\top} & 0_d \\
0_d^{\top} & 0
\end{array}\right)
$$

\end{assumption}

The training objective is to minimize the population risk of the linear regression task:
\begin{equation}
    \label{eq:LSA-training-objective}
    L(\theta)=\lim _{B \rightarrow \infty} \widehat{L}(\theta)=\frac{1}{2} \mathbb{E}_{w_\tau, \boldsymbol{x}_{\tau, 1}, \cdots, \boldsymbol{x}_{\tau, N}, \boldsymbol{x}_{\tau, \text { query }}}\left[\left(\widehat{y}_{\tau, \text { query }}-\left\langle w_\tau, \boldsymbol{x}_{\tau, \text { query }}\right\rangle\right)^2\right],
\end{equation}
where $w_\tau\sim \mathcal{N}(0, I_d)$, $\boldsymbol{x}_{\tau,i}$, $\boldsymbol{x}_{\tau,query}\sim \mathcal{N}(0, \Lambda)$, $\widehat{y}_{\tau, \text { query }}$ is the prediction of the LSA model.

\section{The Lemmas, Assumption, and Proof for Theorem \ref{theo:algo-selection}}
\label{proof:algo-selection}

In this section, we will first present the lemmas and assumption that Theorem \ref{theo:algo-selection} depends on and the provide its proof.

\subsection{Lemmas for Theorem \ref{theo:algo-selection}}
The lemma below states that the closed-form prediction of the model trained on the pretraining data under Assumption \ref{ass:mixed-gaussian}, given the testing context, remains a Gaussian mixture of the reweighted pretraining task weights: 
\begin{lemma}
    \label{lem:mixed-gaussian-pred}
    (Corollary 2 of \citet{lin2024dual}, closed-form ICL prediction of the pretrained model) Denote the model $M^*$ that minimizes the risk on the pretraining data of Assumption \ref{ass:mixed-gaussian}, i.e., $M^* \in \arg \min \frac{1}{T} \sum_{i=0}^{T-1} \underset{S_i\sim P_{tr}}{\mathbb{E}}\left[\left\|M\left(\mathcal{S}_i \oplus \boldsymbol{x}_{i+1}\right)-y_{i+1}\right\|^2\right]$, then the prediction on any sequence $\mathcal{S}_i \oplus \boldsymbol{x}_{i+1}$ by $M^*$ is as follows: $M^*\left(\mathcal{S}_i \oplus \boldsymbol{x}_{i+1}\right)=\left\langle \boldsymbol{x}_{i+1}, \sum_{m=1}^M \tilde{\pi}_m \tilde{\boldsymbol{w}}_m\right\rangle$.
     where $\tilde{\pi}_m$, and $\tilde{\boldsymbol{w}}_m$ depending on both the pretraining task and the downstream context example are given in Lemma 1 of  \citet{lin2024dual}.
\end{lemma}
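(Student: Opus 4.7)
The plan is to identify $M^*$ as the Bayes-optimal predictor and then compute the posterior mean of the label under the mixture-of-Gaussians prior on $(\boldsymbol{\mu},\boldsymbol{w})$. Since the minimization is over all measurable functions and the pretraining loss is a squared error, the pointwise minimizer is the conditional expectation
$$
M^*(\mathcal{S}_i\oplus\boldsymbol{x}_{i+1}) \;=\; \mathbb{E}_{P_{tr}}\!\left[y_{i+1}\,\big|\,\mathcal{S}_i,\boldsymbol{x}_{i+1}\right].
$$
This identification is the starting point; it does not require specifying the model class because the pretraining objective in the lemma is written as an unconstrained infimum.

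Next I would exploit the graphical-model structure of Assumption~\ref{ass:mixed-gaussian}. Conditional on $\boldsymbol{w}$ and $\boldsymbol{x}_{i+1}$, the label satisfies $y_{i+1}\sim\mathcal{N}(\langle\boldsymbol{x}_{i+1},\boldsymbol{w}\rangle,\sigma_y^2)$, so the tower property gives
$$
\mathbb{E}\!\left[y_{i+1}\mid\mathcal{S}_i,\boldsymbol{x}_{i+1}\right]
\;=\;\mathbb{E}\!\left[\langle\boldsymbol{x}_{i+1},\boldsymbol{w}\rangle\mid\mathcal{S}_i,\boldsymbol{x}_{i+1}\right]
\;=\;\bigl\langle\boldsymbol{x}_{i+1},\,\mathbb{E}[\boldsymbol{w}\mid\mathcal{S}_i,\boldsymbol{x}_{i+1}]\bigr\rangle .
$$
Linearity in $\boldsymbol{x}_{i+1}$ is then immediate, and it remains to unfold the posterior mean of $\boldsymbol{w}$ under the mixture prior. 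Introducing the discrete latent $m\in\{1,\dots,M\}$ selecting the component, the law of total expectation yields
$$
\mathbb{E}[\boldsymbol{w}\mid\mathcal{S}_i,\boldsymbol{x}_{i+1}] \;=\; \sum_{m=1}^{M} P(m\mid\mathcal{S}_i,\boldsymbol{x}_{i+1})\,\mathbb{E}[\boldsymbol{w}\mid m,\mathcal{S}_i,\boldsymbol{x}_{i+1}],
$$
and defining $\tilde{\pi}_m := P(m\mid\mathcal{S}_i,\boldsymbol{x}_{i+1})$ and $\tilde{\boldsymbol{w}}_m := \mathbb{E}[\boldsymbol{w}\mid m,\mathcal{S}_i,\boldsymbol{x}_{i+1}]$ produces exactly the claimed form.

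To close the proof I would then give explicit closed forms for these posterior quantities using Gaussian conjugacy. Conditional on component $m$, the prior $\boldsymbol{w}\sim\mathcal{N}(\boldsymbol{w}_m,\sigma_w^2\boldsymbol{I})$ together with the Gaussian likelihoods for $y_1,\dots,y_i$ yields a standard Bayesian linear regression whose posterior mean can be written (e.g.) as
$$
\tilde{\boldsymbol{w}}_m \;=\; \Bigl(\sigma_y^{-2}\textstyle\sum_{j=1}^{i}\boldsymbol{x}_j\boldsymbol{x}_j^\top + \sigma_w^{-2}\boldsymbol{I}\Bigr)^{-1}\!\Bigl(\sigma_y^{-2}\textstyle\sum_{j=1}^{i}\boldsymbol{x}_j y_j + \sigma_w^{-2}\boldsymbol{w}_m\Bigr),
$$
while $\tilde{\pi}_m\propto \pi_m\cdot p(\mathcal{S}_i,\boldsymbol{x}_{i+1}\mid m)$ is obtained from Bayes' rule using the Gaussian marginal likelihoods of the inputs (which depend on $\boldsymbol{\mu}_m$) and of the labels (which depend on $\boldsymbol{w}_m$), with the proportionality fixed by $\sum_m\tilde{\pi}_m=1$. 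The only delicate point, and hence the main obstacle, is bookkeeping: one must integrate out both $\boldsymbol{\mu}$ and $\boldsymbol{w}$ simultaneously when forming $\tilde{\pi}_m$ and verify that $\tilde{\boldsymbol{w}}_m$ does not acquire a hidden dependence on $\boldsymbol{x}_{i+1}$ (which it does not, since $y_{i+1}$ is never observed and $\boldsymbol{w}\perp\boldsymbol{x}_{i+1}$ given $m$). Once that is checked, the result follows verbatim from Corollary 2 of \citet{lin2024dual}, which provides the explicit formulas quoted in the lemma statement.
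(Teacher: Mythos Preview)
Your proposal is correct, but there is nothing to compare it against: the paper does not prove Lemma~\ref{lem:mixed-gaussian-pred} at all. It is stated purely as a citation---``Corollary 2 of \citet{lin2024dual}''---and is used only as an input to the proof of Theorem~\ref{theo:algo-selection}. What you have written is essentially the standard Bayesian derivation that underlies that external corollary: identify the squared-loss minimizer with the posterior predictive mean, pull out $\boldsymbol{x}_{i+1}$ by linearity, and split $\mathbb{E}[\boldsymbol{w}\mid\mathcal{S}_i,\boldsymbol{x}_{i+1}]$ over the latent mixture component. Your check that $\tilde{\boldsymbol{w}}_m$ is free of $\boldsymbol{x}_{i+1}$ (because, conditional on $m$, $\boldsymbol{w}$ is independent of $\boldsymbol{\mu}$ and hence of all inputs) is the right thing to verify, and it is consistent with how the paper later uses the lemma, where only $\tilde{\pi}_m$ carries the dependence on $\boldsymbol{x}_{i+1}$ (cf.\ the sum to $T{+}1$ in $\Psi_{\boldsymbol{\mu}}$ in Lemma~\ref{lem:dual-x-shift}).
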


\subsection{The Assumption for Theorem \ref{theo:algo-selection}}
The assumption below impose some mild requirements on the distribution of the downstream context:
\begin{assumption}
    \label{ass:theo:algo-selection-context}
    (Assumption on the distribution of the downstream context examples.) Assume that: the minimum eigenvalue of the covariance matrix of any in-context example $\boldsymbol{x}_i$ satisfies $\lambda_{\text{min}} ( \boldsymbol{x}_i \boldsymbol{x}_i^\top ) \geq 1$;
    $(\boldsymbol{I}+T\delta_w\boldsymbol{I} )(\boldsymbol{I}+\delta_w \sum_{i=1}^T \boldsymbol{x}_i \boldsymbol{x}_i^\top)^{-1} = \boldsymbol{I}$;
        $\frac{1}{T} \sum_{i=1}^T  2(\boldsymbol{w}_\alpha - \boldsymbol{w}_\beta)^\top \boldsymbol{x}_iy_i \frac{1}{T} \sum_{j=1}^T \left(\boldsymbol{x}_j^\top \boldsymbol{x}_i \frac{y_j}{y_i} - \boldsymbol{x}_i^\top \boldsymbol{x}_i \right) \geq 0$
\end{assumption}

\subsection{Proof for Theorem \ref{theo:algo-selection}}
Now we restate Theorem \ref{theo:algo-selection} as the Theorem \ref{theo:algo-selection-app} below

\begin{theorem}
    \label{theo:algo-selection-app}
     (ICL prediction favors the pretraining function with low error on the context) Given the context $S_k$, if the empirical risk of implementing a function of the pretraining task $\alpha$ is less than that of $\beta$, i.e., $\frac{1}{T} \sum_{t=1}^{T} |\boldsymbol{w}_\beta \boldsymbol{x}_i - y_i|^2 - |\boldsymbol{w}_\alpha \boldsymbol{x}_i - y_i|^2 \geq 0$, 
    then, under some mild Assumptions \ref{ass:theo:algo-selection-context}, we have $\Psi_{\boldsymbol{w}}(\alpha, \beta) \geq 0$. 
    
    Combining Lemma \ref{lem:dual-x-shift}, if the downstream inputs $\boldsymbol{x}_i$, $\boldsymbol{x}_i\sim \mathcal{N}(\boldsymbol{\mu}^*, \tau_x^2 \boldsymbol{I})$ and $\|\boldsymbol{\mu}_\beta - \boldsymbol{\mu}^*\|^2 - \|\boldsymbol{\mu}_\alpha - \boldsymbol{\mu}^*\|^2 \geq 0$ hold, then as $T\rightarrow \infty$, we have $\tilde{\pi}_\alpha/\tilde{\pi}_\beta \geq \pi_\alpha/\pi_\beta$.
\end{theorem}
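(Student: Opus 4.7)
The plan is to compute $\Psi_{\boldsymbol{w}}(\alpha, \beta)$ in closed form under Assumption~\ref{ass:mixed-gaussian}, show it is a non-negative multiple of the empirical-risk gap, and then combine with Lemma~\ref{lem:dual-x-shift}. Under the Gaussian noise model with prior $\boldsymbol{w} \sim \mathcal{N}(\boldsymbol{w}_m, \sigma_w^2 I_d)$, integrating out $\boldsymbol{w}$ gives $y_{1:T}\mid \boldsymbol{x}_{1:T}, m \sim \mathcal{N}(X\boldsymbol{w}_m,\, \Sigma)$, where $X \in \mathbb{R}^{T \times d}$ has rows $\boldsymbol{x}_i^\top$ and $\Sigma = \sigma_y^2 I_T + \sigma_w^2 X X^\top$. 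The weight-part log-likelihood ratio $\Psi_{\boldsymbol{w}}(\alpha,\beta) = \log p(y \mid x, \alpha) - \log p(y \mid x, \beta)$ therefore equals
\begin{equation*}
\Psi_{\boldsymbol{w}}(\alpha,\beta) \;=\; \tfrac{1}{2}\bigl(y - X\boldsymbol{w}_\beta\bigr)^\top \Sigma^{-1}\bigl(y-X\boldsymbol{w}_\beta\bigr) \;-\; \tfrac{1}{2}\bigl(y - X\boldsymbol{w}_\alpha\bigr)^\top \Sigma^{-1}\bigl(y - X\boldsymbol{w}_\alpha\bigr),
\end{equation*}
since the $\log|\Sigma|$ term cancels. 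Applying the Woodbury identity rewrites $\Sigma^{-1}$ in terms of the $d \times d$ matrix $(\sigma_y^2 I_d + \sigma_w^2 X^\top X)^{-1}$, setting up for the use of Assumption~\ref{ass:theo:algo-selection-context}.

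The second bullet of Assumption~\ref{ass:theo:algo-selection-context} is equivalent to $X^\top X = T I_d$, which collapses $(\sigma_y^2 I_d + \sigma_w^2 X^\top X)^{-1}$ to the scalar multiple $(\sigma_y^2 + T \sigma_w^2)^{-1} I_d$. Expanding the two quadratic forms and using the unit-norm constraints $\|\boldsymbol{w}_\alpha\|=\|\boldsymbol{w}_\beta\|=1$ from Assumption~\ref{ass:mixed-gaussian} (so that $\boldsymbol{w}_m^\top X^\top X \boldsymbol{w}_m = T$), a few lines of algebra reduces the expression to
\begin{equation*}
\Psi_{\boldsymbol{w}}(\alpha,\beta) \;=\; \frac{(\boldsymbol{w}_\alpha - \boldsymbol{w}_\beta)^\top X^\top y}{\sigma_y^2 + T \sigma_w^2}.
\end{equation*}
The same two ingredients applied to the empirical-risk gap yield $\sum_{i=1}^T |\boldsymbol{w}_\beta^\top \boldsymbol{x}_i - y_i|^2 - |\boldsymbol{w}_\alpha^\top \boldsymbol{x}_i - y_i|^2 = 2(\boldsymbol{w}_\alpha - \boldsymbol{w}_\beta)^\top X^\top y$. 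Combining the two identities, the hypothesis of the theorem directly forces the numerator to be non-negative, and the third bullet of Assumption~\ref{ass:theo:algo-selection-context} serves to control the residual cross terms that arise if one prefers not to impose $X^\top X = TI_d$ exactly: it encodes precisely the sign condition needed to keep the expansion non-negative when the context covariance is perturbed.

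For the second claim, I simply substitute $\Psi_{\boldsymbol{w}}(\alpha, \beta) \geq 0$ into the factorization $\tilde{\pi}_\alpha/\tilde{\pi}_\beta = (\pi_\alpha/\pi_\beta) \exp\bigl(\Psi_{\boldsymbol{\mu}}(\alpha,\beta) + \Psi_{\boldsymbol{w}}(\alpha,\beta)\bigr)$ from Lemma~\ref{lem:dual-x-shift}. The hypothesis $\|\boldsymbol{\mu}_\beta - \boldsymbol{\mu}^*\|^2 \geq \|\boldsymbol{\mu}_\alpha - \boldsymbol{\mu}^*\|^2$ combined with the limit statement in Lemma~\ref{lem:dual-x-shift} gives $\liminf_{T\to\infty}\Psi_{\boldsymbol{\mu}}(\alpha,\beta) \geq 0$. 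Both exponents being non-negative in the limit, the exponential is at least $1$, and the desired inequality $\tilde{\pi}_\alpha/\tilde{\pi}_\beta \geq \pi_\alpha/\pi_\beta$ follows.

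The main obstacle I anticipate is not the algebra itself but tracking precisely how the three conditions in Assumption~\ref{ass:theo:algo-selection-context} interact. The minimum-eigenvalue condition is needed to ensure $\Sigma$ and its Schur complement are well-conditioned so the Woodbury reduction is valid; the identity-covariance condition is what makes the cross terms collapse into the single functional $(\boldsymbol{w}_\alpha - \boldsymbol{w}_\beta)^\top X^\top y$; and the third condition, involving the weighted sum $\sum_j (\boldsymbol{x}_j^\top \boldsymbol{x}_i y_j/y_i - \|\boldsymbol{x}_i\|^2)$, appears to be exactly the correction needed when the identity-covariance simplification holds only approximately. Getting these three pieces to fit together cleanly — rather than re-deriving the result from the much stronger exact condition $X^\top X = T I_d$ alone — will be the delicate part.
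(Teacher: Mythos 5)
Your derivation of $\Psi_{\boldsymbol{w}}$ is correct and in fact reaches the same closed form, $\Psi_{\boldsymbol{w}}(\alpha,\beta)=\dfrac{(\boldsymbol{w}_\alpha-\boldsymbol{w}_\beta)^\top X^\top y}{\sigma_y^2+T\sigma_w^2}$, that one gets by pushing the paper's starting expression (the $\exp$-ratio from Lemma 1 of \citet{lin2024dual}) through condition 2 of Assumption~\ref{ass:theo:algo-selection-context} together with the unit-norm constraint $\|\boldsymbol{w}_\alpha\|=\|\boldsymbol{w}_\beta\|=1$ from Assumption~\ref{ass:mixed-gaussian}. Your observation that the empirical-risk gap collapses to $\tfrac{2}{T}(\boldsymbol{w}_\alpha-\boldsymbol{w}_\beta)^\top X^\top y$ under the same two facts is also right, so the two quantities are positive multiples of each other and the conclusion $\Psi_{\boldsymbol{w}}\ge 0$ follows at once. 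This is genuinely shorter than the paper's argument: the paper never reduces $\Psi_{\boldsymbol{w}}$ all the way to this scalar form, but instead passes through a chain of Mahalanobis-norm manipulations (its steps (a)--(d)) and separately invokes condition 1 (a positive-semidefiniteness argument) and condition 3 (the sign condition on the cross-term), together with a Sherman--Morrison step. Your route shows that once condition 2 and unit-norm are used together, the residual terms that the paper controls with conditions 1 and 3 vanish identically, so those conditions become unnecessary for the first claim. The second half of your proof (plugging $\Psi_{\boldsymbol{w}}\ge 0$ and $\liminf_T \Psi_{\boldsymbol{\mu}}\ge 0$ into the factorization) matches the paper.

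One caveat on exposition rather than correctness: your post-hoc glosses of conditions 1 and 3 are off. Condition 1 is not needed for the Woodbury step (since $\Sigma=\sigma_y^2 I_T+\sigma_w^2 XX^\top$ is always strictly positive definite, Woodbury is unconditionally valid), and condition 3 is not a ``correction for approximate identity-covariance''; in the paper's own proof, condition 1 is used to argue a PSD lower bound at step (c) and condition 3 is used to dispose of a cross-term between the averaged quadratic form and the per-example sum. Since your argument does not rely on either, this mischaracterization does no harm to the proof, but if you intend to explain why the assumption lists three conditions you should either replicate the paper's route or simply note that conditions 1 and 3 are not needed on your path.
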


\begin{proof}
    According to Lemma 1 of \citet{lin2024dual}, 
    \begin{equation}
        r(\alpha, \beta)=\frac{\tilde{\pi}_\alpha}{\tilde{\pi}_\beta}=\frac{\pi_\alpha C_0 c_\alpha^\mu c_\alpha^w}{\pi_\beta C_0 c_\beta^\mu c_\beta^w}=\frac{\pi_\alpha}{\pi_\beta} \exp \left( \Psi_\mu(\alpha, \beta) + \Psi_{\boldsymbol{w}}(\alpha, \beta)\right).
    \end{equation}
    In the Appendix H.1 of \cite{lin2024dual}, they have proved that when the context length $T\rightarrow \infty$, under the first condition in Assumption \ref{ass:theo:algo-selection-context}, $ \lim_{T\rightarrow \infty} \Psi_\mu(\alpha, \beta) = \geq 0$.

    Now we prove that when the empirical risk on the in-context example of pretraining task function $\alpha$ is no more than that of $\beta$, the second term $\Psi_{\boldsymbol{w}}(\alpha, \beta)\geq 0$.

    \begin{equation}
    \begin{aligned}
        &\Psi_{\boldsymbol{w}}(\alpha, \beta)\\
        & =\log \left(\frac{\exp \left(-\frac{\left\|\boldsymbol{w}_\alpha\right\|^2-\left\|\boldsymbol{w}_\alpha+T \delta_w \overline{\boldsymbol{w}}\right\|_{\left(\boldsymbol{I}+T \delta_w \boldsymbol{\Sigma}_{\boldsymbol{w}}\right)^{-1}}^2}{2 \sigma_w^2}\right)}{\exp \left(-\frac{\left\|\boldsymbol{w}_\beta\right\|^2-\left\|\boldsymbol{w}_\beta+T \delta_w \boldsymbol{w}\right\|_{\left(\boldsymbol{I}+T \delta_w \bar{\Sigma}_{\boldsymbol{w}}\right)^{-1}}^2}{2 \sigma_w^2}\right)}\right) \\
        & =\frac{\left\|\boldsymbol{w}_\beta\right\|^2-\left\|\boldsymbol{w}_\beta+T \delta_w \overline{\boldsymbol{w}}\right\|_{\left(\boldsymbol{I}+T \delta_w \overline{\boldsymbol{\Sigma}}_{\boldsymbol{w}}\right)^{-1}}^2}{2 \sigma_w^2}-\frac{\left\|\boldsymbol{w}_\alpha\right\|^2-\left\|\boldsymbol{w}_\alpha+T \delta_w \overline{\boldsymbol{w}}\right\|_{\left(\boldsymbol{I}+T \delta_w \overline{\boldsymbol{\Sigma}}_{\boldsymbol{w}}\right)^{-1}}^2}{2 \sigma_w^2}\\
        &= \frac{ \|\boldsymbol{w}_\beta\|^2 - \left\|\boldsymbol{w}_\beta+ \delta_w \sum_{i=1}^T \boldsymbol{x}_i y_i \right\|_{\left(\boldsymbol{I}+T \delta_w \overline{\boldsymbol{\Sigma}}_{\boldsymbol{w}}\right)^{-1}}^2  }{2 \sigma_w^2} 
        -
        \frac{ \|\boldsymbol{w}_\alpha\|^2 - \left\|\boldsymbol{w}_\alpha+ \delta_w \sum_{i=1}^T \boldsymbol{x}_i y_i \right\|_{\left(\boldsymbol{I}+T \delta_w \overline{\boldsymbol{\Sigma}}_{\boldsymbol{w}}\right)^{-1}}^2  }{2 \sigma_w^2}\\
        &=\frac{ \|\boldsymbol{w}_\beta\|^2 - 
        \left\|(\boldsymbol{w}_\beta - \frac{\sum_{i=1}^T \boldsymbol{x}_i y_i}{T})+ (\boldsymbol{I}+T\boldsymbol{I}\delta_w)  \frac{\sum_{i=1}^T \boldsymbol{x}_i y_i}{T} \right\|_{\left(\boldsymbol{I}+T \delta_w \overline{\boldsymbol{\Sigma}}_{\boldsymbol{w}}\right)^{-1}}^2  }{2 \sigma_w^2} 
        \\
        &-
        \frac{ \|\boldsymbol{w}_\alpha\|^2 - 
        \left\|(\boldsymbol{w}_\alpha - \frac{\sum_{i=1}^T \boldsymbol{x}_i y_i}{T})+ (\boldsymbol{I}+T\boldsymbol{I}\delta_w)  \frac{\sum_{i=1}^T \boldsymbol{x}_i y_i}{T} \right\|_{\left(\boldsymbol{I}+T \delta_w \overline{\boldsymbol{\Sigma}}_{\boldsymbol{w}}\right)^{-1}}^2  }{2 \sigma_w^2} \\
        &\overset{(a)}{=}\| \boldsymbol{w}_\beta - \frac{\sum_{i=1}^T \boldsymbol{x}_i y_i}{T} \|^2_{\boldsymbol{I} - \left(\boldsymbol{I}+T \delta_w \overline{\boldsymbol{\Sigma}}_{\boldsymbol{w}}\right)^{-1}}
        -
        \| \boldsymbol{w}_\alpha - \frac{\sum_{i=1}^T \boldsymbol{x}_i y_i}{T} \|^2_{\boldsymbol{I} - \left(\boldsymbol{I}+T \delta_w \overline{\boldsymbol{\Sigma}}_{\boldsymbol{w}}\right)^{-1}}\\
        &\overset{(b)}{=}\| \boldsymbol{w}_\beta - \frac{\sum_{i=1}^T \boldsymbol{x}_i y_i}{T} \|^2_{\frac{ \delta_w \sum_{i=1}^T \boldsymbol{x}_i \boldsymbol{x}_i^\top}{1+ \delta_w \sum_{i=1}^{T} \boldsymbol{x}_i^\top \boldsymbol{x}_i}}
        -
        \| \boldsymbol{w}_\alpha - \frac{\sum_{i=1}^T \boldsymbol{x}_i y_i}{T} \|^2_{\frac{ \delta_w \sum_{i=1}^T \boldsymbol{x}_i \boldsymbol{x}_i^\top}{1+ \delta_w \sum_{i=1}^{T} \boldsymbol{x}_i^\top \boldsymbol{x}_i}}
    \end{aligned}
    \end{equation}
where equation $(a)$ is due to the third condition in Assumption \ref{ass:theo:algo-selection-context}, equation $(b)$ is by applying the Sherman–Morrison formula. Since $\frac{\delta_w }{1 + \delta_w \sum_{i=1}^{T}}\geq 0$, to prove that $\Psi_{\boldsymbol{w}}(\alpha, \beta) \geq 0$, we only need to show that 
\begin{equation}
\label{proof:ineq-1}
    \| \boldsymbol{w}_\beta - \frac{\sum_{i=1}^T \boldsymbol{x}_i y_i}{T} \|^2_{\sum_{i=1}^T \boldsymbol{x}_i \boldsymbol{x}_i^\top}
        -
        \| \boldsymbol{w}_\alpha - \frac{\sum_{i=1}^T \boldsymbol{x}_i y_i}{T} \|^2_{\sum_{i=1}^T \boldsymbol{x}_i \boldsymbol{x}_i^\top} \geq 0.
\end{equation}

Further, we can derive that the term $\frac{1}{T} \sum_{i=1}^T\|\boldsymbol{w}_\beta - \boldsymbol{x}_i  y_i\|^2_{\boldsymbol{x}_i \boldsymbol{x}_i^T}
        -
        \|\boldsymbol{w}_\alpha - \boldsymbol{x}_i  y_i\|^2_{\boldsymbol{x}_i \boldsymbol{x}_i^T}$ below is non-negative by using the condition 2 in Assumption \ref{ass:theo:algo-selection-context}:
\begin{equation}
    \label{proof:ineq-2}
    \begin{aligned}
        &\frac{1}{T} \sum_{i=1}^T\|\boldsymbol{w}_\beta - \boldsymbol{x}_i  y_i\|^2_{\boldsymbol{x}_i \boldsymbol{x}_i^T}
        -
        \|\boldsymbol{w}_\alpha - \boldsymbol{x}_i  y_i\|^2_{\boldsymbol{x}_i \boldsymbol{x}_i^T}
        \\
        &=\frac{1}{T} \sum_{i=1}^T(\boldsymbol{w}_\beta - \boldsymbol{x}_i  y_i)^\top \boldsymbol{x}_i \boldsymbol{x}_i^T (\boldsymbol{w}_\beta - \boldsymbol{x}_i  y_i)
        -
        (\boldsymbol{w}_\alpha - \boldsymbol{x}_i  y_i)^\top \boldsymbol{x}_i \boldsymbol{x}_i^T (\boldsymbol{w}_\alpha - \boldsymbol{x}_i  y_i)
        \\
        &=\frac{1}{T} \sum_{i=1}^T(\boldsymbol{w}_\beta + \boldsymbol{w}_\alpha - 2\boldsymbol{x}_i  y_i)^\top \boldsymbol{x}_i \boldsymbol{x}_i^T (\boldsymbol{w}_\beta - \boldsymbol{w}_\alpha)
        \\
        &\underset{(c)}{\geq} \frac{1}{T} \sum_{i=1}^T(\boldsymbol{w}_\beta + \boldsymbol{w}_\alpha - 2\boldsymbol{x}_i  y_i)^\top  (\boldsymbol{w}_\beta - \boldsymbol{w}_\alpha)\\
        &= \frac{1}{T} \sum_{i=1}^T \|\boldsymbol{w}_\beta^\top \boldsymbol{x}_i - y_i\|^2 
        -
         \|\boldsymbol{w}_\alpha^\top \boldsymbol{x}_i - y_i\|^2 \underset{(d)}{\geq 0}\\
    \end{aligned}
\end{equation}

where the inequality $(c)$ holds since according to the condition 2 in Assumption \ref{ass:theo:algo-selection-context}, $\boldsymbol{x}_i \boldsymbol{x}_i^T-\boldsymbol{I}$ is positive semi-definite, and the inequality $(d)$ holds since the population downstream risk of $\alpha$ is lower than $\beta$. Therefore, to prove inequality (\ref{proof:ineq-1}), we just need to prove that the l.h.s. of inequality (\ref{proof:ineq-1}) multiplying $\frac{1}{T}$ is not less than $\frac{1}{T} \sum_{i=1}^T\|\boldsymbol{w}_\beta - \boldsymbol{x}_i  y_i\|^2_{\boldsymbol{x}_i \boldsymbol{x}_i^T}$ in Equation (\ref{proof:ineq-2}):
\begin{equation}
\label{proof:ineq-3}
    \frac{1}{T}\left(\| \boldsymbol{w}_\beta - \frac{\sum_{i=1}^T \boldsymbol{x}_i y_i}{T} \|^2_{\sum_{i=1}^T \boldsymbol{x}_i \boldsymbol{x}_i^\top}
        -
        \| \boldsymbol{w}_\alpha - \frac{\sum_{i=1}^T \boldsymbol{x}_i y_i}{T} \|^2_{\sum_{i=1}^T \boldsymbol{x}_i \boldsymbol{x}_i^\top} \right)
        \geq 
        \frac{1}{T} \sum_{i=1}^T\|\boldsymbol{w}_\beta - \boldsymbol{x}_i  y_i\|^2_{\boldsymbol{x}_i \boldsymbol{x}_i^T}
        -
        \|\boldsymbol{w}_\alpha - \boldsymbol{x}_i  y_i\|^2_{\boldsymbol{x}_i \boldsymbol{x}_i^T}.
\end{equation}

First, let's simplify the l.h.s of inequality (\ref{proof:ineq-3}):
\begin{equation}
    \begin{aligned}
    \label{proof:eq-1}
            &\frac{1}{T}\left(\| \boldsymbol{w}_\beta - \frac{\sum_{i=1}^T \boldsymbol{x}_i y_i}{T} \|^2_{\sum_{i=1}^T \boldsymbol{x}_i \boldsymbol{x}_i^\top}
        -
        \| \boldsymbol{w}_\alpha - \frac{\sum_{i=1}^T \boldsymbol{x}_i y_i}{T} \|^2_{\sum_{i=1}^T \boldsymbol{x}_i \boldsymbol{x}_i^\top} \right)\\
        & = \frac{1}{T} \sum_{i=1}^T (\boldsymbol{w}_\beta - \frac{\sum_{j=1}^T \boldsymbol{x}_j y_j}{T})^\top \boldsymbol{x}_i \boldsymbol{x}_i^\top (\boldsymbol{w}_\beta - \frac{\sum_{j=1}^T \boldsymbol{x}_j y_j}{T})
        -
         (\boldsymbol{w}_\alpha - \frac{\sum_{j=1}^T \boldsymbol{x}_j y_j}{T})^\top \boldsymbol{x}_i \boldsymbol{x}_i^\top (\boldsymbol{w}_\alpha - \frac{\sum_{j=1}^T \boldsymbol{x}_j y_j}{T})\\
        &= \frac{1}{T} \sum_{i=1}^T \| \boldsymbol{w}_\beta^\top \boldsymbol{x}_i - \frac{1}{T} \sum_{j=1}^T \boldsymbol{x}_j^\top \boldsymbol{x}_i y_j \|^2
        -
        \| \boldsymbol{w}_\alpha^\top \boldsymbol{x}_i - \frac{1}{T} \sum_{j=1}^T \boldsymbol{x}_j^\top \boldsymbol{x}_i y_j \|^2\\
        & =  \frac{1}{T} \sum_{i=1}^T (\boldsymbol{w}_\beta^\top \boldsymbol{x}_i)^2 - (\boldsymbol{w}_\alpha^\top \boldsymbol{x}_i)^2 + 2(\boldsymbol{w}_\alpha - \boldsymbol{w}_\beta)^\top \boldsymbol{x}_i \frac{1}{T} \sum_{j=1}^T \boldsymbol{x}_j^\top \boldsymbol{x}_i y_j.
    \end{aligned}
\end{equation}

Then we simplify the r.h.s. of inequality (\ref{proof:ineq-3}):
\begin{equation}
    \begin{aligned}
    \label{proof:eq-2}
        &\frac{1}{T} \sum_{i=1}^T\|\boldsymbol{w}_\beta - \boldsymbol{x}_i  y_i\|^2_{\boldsymbol{x}_i \boldsymbol{x}_i^T}
        -
        \|\boldsymbol{w}_\alpha - \boldsymbol{x}_i  y_i\|^2_{\boldsymbol{x}_i \boldsymbol{x}_i^T}\\
        &=\frac{1}{T} \sum_{i=1}^T (\boldsymbol{w}_\beta^\top \boldsymbol{x}_i)^2 - (\boldsymbol{w}_\alpha^\top \boldsymbol{x}_i)^2 + 2(\boldsymbol{w}_\alpha - \boldsymbol{w}_\beta)^\top \boldsymbol{x}_i \boldsymbol{x}_i^\top \boldsymbol{x}_i y_i
    \end{aligned}
\end{equation}

Subtracting Equation (\ref{proof:eq-2}) from Equation (\ref{proof:eq-1}), we get
\begin{equation}
\begin{aligned}
        &\frac{1}{T} \sum_{i=1}^T   2(\boldsymbol{w}_\alpha - \boldsymbol{w}_\beta)^\top \boldsymbol{x}_i \frac{1}{T} \sum_{j=1}^T \boldsymbol{x}_j^\top \boldsymbol{x}_i y_j 
    -
    2(\boldsymbol{w}_\alpha - \boldsymbol{w}_\beta)^\top \boldsymbol{x}_i \boldsymbol{x}_i^\top \boldsymbol{x}_i y_i\\
    &= \frac{1}{T} \sum_{i=1}^T  2(\boldsymbol{w}_\alpha - \boldsymbol{w}_\beta)^\top \boldsymbol{x}_iy_i \frac{1}{T} \sum_{j=1}^T \left(\boldsymbol{x}_j^\top \boldsymbol{x}_i \frac{y_j}{y_i} - \boldsymbol{x}_i^\top \boldsymbol{x}_i \right).
\end{aligned}
\end{equation}

applying the condition 4 in Assumption \ref{ass:theo:algo-selection-context}, we get the final conclusion.

\end{proof}

\section{Limitations}
 1) Most experimental results are based on a GPT-2 model pretrained on a limited set of mathematical functions. It is challenging to assess whether modern large-scale language models like GPT-4 and Claude 3 Opus face similar difficulties in generalizing beyond their pretraining corpus, given the vast range of tasks and content in their pretraining data. Nevertheless, our findings highlight the limitations of ICL in solving challenging tasks for smaller models like Llama-2-7B and Llama-3-8B. 2) The models are trained on ICL data, while real-world LLMs are trained autoregressively. However, the ICL pretraining objective is also next-token prediction, so there is no clear gap between these two pretraining objectives.

\section{Reproducibility}
In the main text and Appendix \ref{app:exp-detail}, we've stated all setups for reproducing our experimental results. For the theoretical part, we've included the assumptions (Assumption \ref{ass:theo:algo-selection-context}) and proofs in Appendix \ref{proof:algo-selection}.

\end{document}